\documentclass{article}

\usepackage[preprint]{neurips_2026}

\usepackage[utf8]{inputenc} 
\usepackage[T1]{fontenc}    
\usepackage{hyperref}       
\usepackage{url}            
\usepackage{booktabs}       
\usepackage{amsfonts}       
\usepackage{nicefrac}       
\usepackage{microtype}      
\usepackage{xcolor}         

\usepackage{graphicx}
\usepackage{wrapfig}
\usepackage{amssymb,amsmath,amsthm}

\usepackage{pgf}
\usepackage{subcaption}
\usepackage{makecell}

\usepackage{float}
\usepackage{transparent}

\usepackage[colorinlistoftodos]{todonotes}

\newtheorem{theorem}{Theorem}
\newtheorem{lemma}{Lemma}
\newtheorem{proposition}{Proposition}

\def\our{\mbox{LAPLEX}}
\def\triu{\mathrm{triu}}
\def\tril{\mathrm{tril}}
\def\R{\mathbb{R}}

\author{
\textbf{\L{}ukasz Struski}$^{1}$ \\
\texttt{lukasz.struski@uj.edu.pl} \\
\And
\textbf{Hanna Blazhko}$^{1}$ \\
\texttt{hanna.blazhko@student.uj.edu.pl} \\
\And
\textbf{Piotr Kubaty}$^{1,2}$ \\
\texttt{piotr.kubaty@doctoral.uj.edu.pl} \\
\And
\textbf{Jacek Tabor}$^{1,3}$ \\
\texttt{jacek.tabor@uj.edu.pl} \\
\And
\\[-1.5em]
$^{1}$ Faculty of Mathematics and Computer Science, Jagiellonian University, Kraków, Poland \\
$^{2}$ Doctoral School of Exact and Natural Sciences, Jagiellonian University, Kraków, Poland \\
$^{3}$ Centre for Credible Artificial Intelligence, Warsaw University of Technology, Poland \\
}

\title{LAPLEX: The FFT of Learnable Laplace Kernels}

\begin{document}

\maketitle

\begin{abstract}
Fast linear algebra in deep learning usually comes with a choice: fixed
geometry and exact computation, as in the Fourier transform, or adaptive
geometry paid for by dense parameters, random features, or low-rank
surrogates. To move beyond this trade-off, we introduce \our{}, a class of
exact, trainable (phased) Laplace-kernel operators.

A \our{} layer is a typically full-rank dense matrix, implicitly defined by
learnable coordinate anchors, with FFT-like scaling. Consequently, it supports
trainable matrix--vector operations at vector dimensions up to $10^9$ on
modern GPUs. As a neural layer, it yields compact projections and
classification heads interpretable as soft, trainable routing models. The same
primitive also serves as an efficient Gram operator, enabling high-dimensional
covariance models on flattened images of dimension $3 \cdot 10^6$ that
preserve visible spatial structure without imposing convolutional bias. 
These
applications reflect a single principle: dense geometry can be learned without
storing a dense matrix, which enables data-adaptive global interactions in
regimes where ordinary dense layers are out of reach. In this sense, \our{}
separates expressivity from storage cost: it behaves like a dense trainable
matrix, but is represented and applied through a small structured set of
parameters.\end{abstract}

%
%
%
%
\section{Introduction}

Modern machine-learning systems increasingly operate in regimes where the
ambient dimension reaches millions or billions, and the bottleneck is no
longer only statistical efficiency but also the wall-clock cost of
\emph{linear operations}: applying matrices, computing Gram operators, and
mixing features. These operations are often treated as routine infrastructure,
yet they determine which models can be trained, which kernels can be used
exactly, and which high-dimensional representations fit on hardware at all.
Standard scaling techniques typically sacrifice at least one of the properties
we want: \emph{exactness, learnability, and near-linear scaling}.
Low-rank methods~\citep{hu2022lora,udell2016generalized} retain dense,
basis-dependent interactions at $\mathcal{O}(nk)$ memory; randomized
sketches~\citep{achlioptas2003database,charikar2002finding,rahimi2007random}
act on a fixed indexing and trade exactness for scalability; sparse
methods~\citep{frankle2019lottery,han2015learning} can break hardware
regularity and are difficult to train end-to-end; classical fast transforms
such as the FFT are exact and cheap, but fixed.

\paragraph{The hidden cost of basis-dependent operators.}
A dense linear map $A\colon\mathbb{R}^n\!\to\!\mathbb{R}^k$ consists of
$k$ inner products, so every input--output coordinate pair receives its own
algebraic identity. Even a \emph{low-rank} operator with $k\ll n$ still
stores $n\!\times\!k$ coefficients: the rank compresses the image, but not
the input basis. This is why low-rank parameterizations such as
LoRA~\citep{hu2022lora} cannot escape an $\mathcal{O}(nk)$ memory footprint.
A different route is to parameterize operators \emph{through the indices
themselves}. The FFT is the canonical example: exact, free of learnable
per-coordinate weights, and $\mathcal{O}(n\log n)$. Sketches such as
CountSketch~\citep{charikar2002finding}, Random Fourier
Features~\citep{rahimi2007random}, and Johnson--Lindenstrauss
projections~\citep{achlioptas2003database} follow a related index-level
philosophy, but they act on a \emph{fixed} indexing and cannot adapt how
coordinates are matched. They are efficient, but rigid.

\begin{wrapfigure}{r}{0.55\textwidth}
    \vspace{0\baselineskip}
    \centering
    \includegraphics[width=\linewidth]{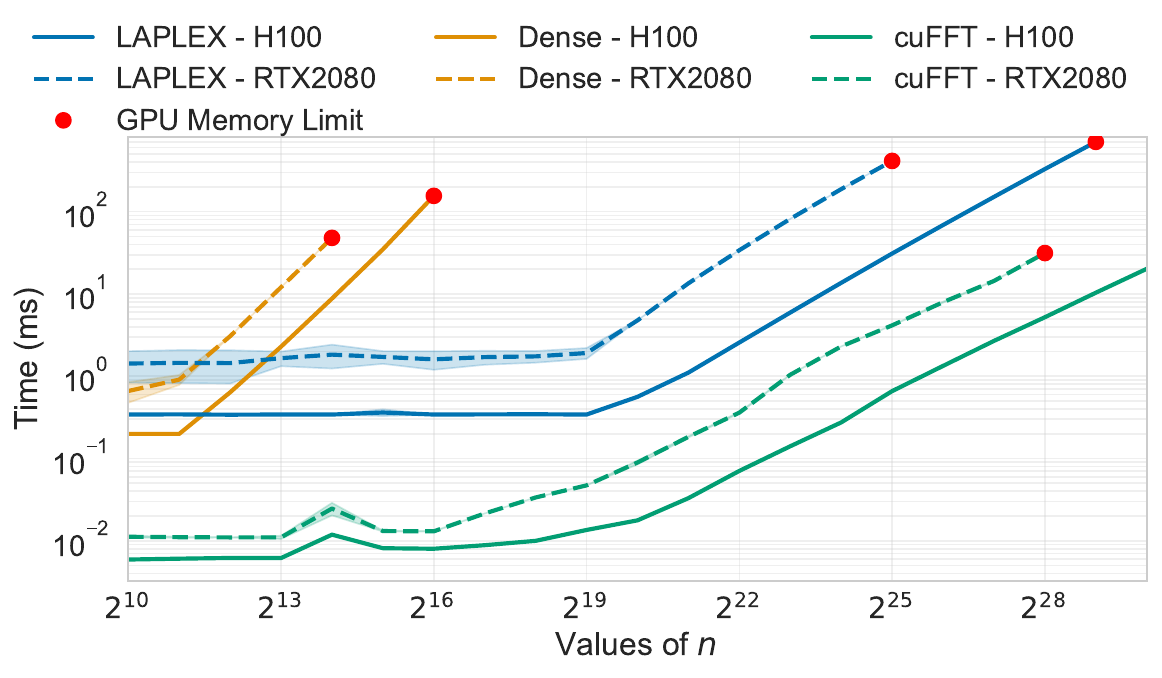}
    \caption{%
        Runtime of \textsc{Laplex}, Dense, and \texttt{torch.fft} (cuFFT) versus the number of elements~$n$ (log\textsubscript{2} scale) on NVIDIA H100 (solid) and RTX~2080 (dashed). Shaded regions show standard deviation. \textsc{Laplex} scales efficiently to large $n$, cuFFT is consistently fastest, while Dense fails at relatively small sizes.
    }
    \label{fig:comp_runtime}
\end{wrapfigure}

\paragraph{Our idea: index-level operators that can learn their input geometry.}
We pursue operators that retain the memory advantages of index-based
computation but, unlike the FFT or CountSketch, can \emph{learn} how to
align, shift, and permute coordinates. Under a natural optimization
formulation, the resulting object is a single structured matrix: the
\emph{Laplace-kernel} matrix
\[
   \our(a,b) \;=\;
   \bigl[\exp(-|a_i - b_j|)\bigr]_{i,j},
\]
which, in a natural limiting regime, encodes \emph{all} permutations of
the coordinate set while using only $\mathcal{O}(n)$ parameters.
Despite its apparent density, matrix--vector and Gram products with $\our$
can be computed \emph{exactly}, with no approximation, in $\mathcal{O}(n)$
time\footnote{During training, sorting gives $\mathcal{O}(n\log n)$ time.}
and memory, without ever materializing the matrix. The algorithm is fully
parallel: in our experiments, wall-clock time remains almost flat until the
GPU saturates; see Fig.~\ref{fig:comp_runtime}.

The resulting primitive is useful wherever a model needs a large linear map
but cannot afford a dense one. In representation learning, \our{} can serve
as a trainable projection from massive feature vectors to compact embeddings.
In fine-tuning, it can replace dense or LoRA-style heads with structured
class--feature alignments. In kernel methods, attention, and second-order
optimization, its exact Gram operator makes computations available that
would otherwise require Nystr\"om, random-feature, or low-rank
approximations. As a proof of concept, we train a Gaussian density model on
flattened $3\!\times\!1024\!\times\!1024$ images; see
Fig.~\ref{fig:div2k-samples}. The learned density preserves visible pixel
correspondence despite operating on flattened images. The common pattern is
that \our{} keeps the algebraic regularity of a fast transform while making
the coordinate system itself learnable.

\paragraph{Contributions.}
\our{} combines the efficiency and regularity of fast transforms with
learnable geometry over coordinates. Unlike existing approaches, it does not
rely on approximation, dimensionality reduction, or irregular sparsity, while
retaining exactness and near-linear time and memory complexity. This shift
from parameterizing matrices to parameterizing \emph{index interactions}
makes large-scale linear algebra both tractable and adaptive. Operations that
were previously bottlenecks, such as exact kernel application or Gram
computation, become practical building blocks in high-dimensional models.
\begin{itemize}
    \item We identify the basis-dependence limitation of low-rank operators
          and motivate \emph{index-level, permutation-aware} operators as a
          principled alternative.
    \item We show that the resulting Laplace-kernel matrix $\our$ uses
          $\mathcal{O}(n)$ parameters while encoding all coordinate
          permutations in a natural limiting regime.
    \item We give an \emph{exact}, fully parallel $\mathcal{O}(n\log n)$
          algorithm\footnote{We provide both a native PyTorch implementation
          and a CUDA-optimized implementation.} for matrix--vector and Gram
          products with $\our$ using $\mathcal{O}(n)$ memory, without
          materializing the matrix, enabling dimensions up to $10^9$.
    \item We show empirically that $\our$ matches FFT-class scaling while
          functioning efficiently as a drop-in structured layer in modern
          machine-learning systems.
\end{itemize}

\paragraph{Paper organization.}
We organize the paper around the roles that \our{} plays. Sec.~\ref{sec:related-work}
gives a compact map of the nearest literature. Sec.~\ref{sec:theory}
develops the exact scan algebra for matrix--vector and Gram products.
Sec.~\ref{sec:scaling-consequences} shows why this algebra changes the
feasible high-dimensional regime, through primitive GPU scaling and a
density-modeling experiment based on exact Gramians. Sec.~\ref{sec:learnable-routing}
studies \our{} as a learnable relaxation of index routing and tests this
view through structured projections against CountSketch and related
baselines. Sec.~\ref{sec:restricted-views} explains how Toeplitz--FFT and
RFF arise as fixed-grid and Monte Carlo views of the same Laplace kernel.
Finally, Sec.~\ref{sec:learnable-layers} evaluates anchor geometry in
neural classification heads, and Sec.~\ref{sec:conclusions} discusses
limitations and open directions.

%
%

\section{Related Work}
\label{sec:related-work}

We focus on the closest algorithmic alternatives: exact fast transforms,
randomized kernel approximations, low-rank parameterizations, and sparsity.
The useful comparison is not only asymptotic complexity, but also what each
method fixes in advance: a grid, a random map, a feature count, a rank, or
a sparsity pattern.

\textbf{Exact fast transforms.}
The FFT~\citep{cooley1965algorithm,frigo2005design} and Toeplitz or
circulant methods~\citep{gray2006toeplitz} achieve near-linear exact
multiplication by exploiting fixed grid structure. For the Laplace kernel,
the uniform-grid choice $a_i=b_i=i\Delta$ makes $\our(a,b)$ Toeplitz, so
Toeplitz--FFT is a fast algorithm for this rigid slice of our family.
This is the exact baseline for the fixed-grid case; learned anchors ask for
a different primitive.

\textbf{Sketches and random features.}
CountSketch and related sketches~\citep{charikar2002finding,charikar2004finding,cormode2005improved}
compress coordinates through fixed or randomized routing, while Random
Fourier Features~\citep{rahimi2007random} approximate shift-invariant
kernels through Monte Carlo spectral features and underlie several linear
attention methods~\citep{choromanski2021rethinking,katharopoulos2020transformers,peng2021rfa}.
\our{} connects to both: CountSketch-like maps appear as zero-temperature
limits (App.~\ref{app:expressivity}), while RFF is a sampled surrogate of
the same Laplace kernel that \our{} applies exactly.

\textbf{Low-rank and sparse neural operators.}
Nystr\"om methods~\citep{drineas2005nystrom,gittens2016revisiting,williams2001using}
and low-rank adapters such as LoRA~\citep{hu2022lora} reduce cost by
introducing an explicit rank bottleneck. Sparse neural methods
\citep{frankle2019lottery,frantar2023sparsegpt,han2015learning,sun2023wanda}
reduce parameter count, but practical speedups depend on sparsity pattern
and hardware support. \our{} instead keeps a dense interaction implicit:
parameters live in the anchors, and computation proceeds through regular
sorted scans.

%
%
\section{Exact Laplace algebra: matvecs and Gram operators}
\label{sec:theory}

The representational picture would be much less interesting if every use of
$\our(a,b)$ required forming the dense matrix. The algorithmic fact that
makes the construction useful is that one-dimensional Laplace interactions
are dense but ordered. After sorting anchors, each side of an anchor sees
an exponentially rescaled prefix or suffix sum. This turns a dense kernel
operator into a scan computation.
The consequence is the central computational promise of \our{}: we can work
with dense, trainable kernel interactions at dimensions where the matrix
itself is not an object one can store. Matrix--vector products become
near-linear; weighted Gram matrices keep only the unavoidable quadratic
output cost.

\paragraph{Overview of the approach.}
We first show the symmetric case $a=b$, where sorted anchors reduce the
operator exactly to prefix and suffix scans. We then handle the rectangular
case $a\neq b$ by bucketing the $b$-anchors into intervals induced by
sorted $a$ and aggregating their contributions before applying the same
scan primitive. The weighted Gram decomposition follows the same logic:
the intermediate anchors are summarized once, and the full Gram matrix is
assembled from ordered cumulative statistics. All reductions are
differentiable except for the usual piecewise-constant sorting order, as in
sorting-based neural operators.

\paragraph{Symmetric case: a prefix-scan primitive.}
For $a=b$, the kernel matrix is
\[
A=\our(a) = [\exp(-|a_i - a_j|)]_{ij}.
\]
Assume $a_1 \le a_2 \le \cdots \le a_n$. Then
\[
A_{ij}=
\begin{cases}
\exp(a_j-a_i), & i \ge j, \\[4pt]
\exp(a_i-a_j), & i < j .
\end{cases}
\]
Decompose the matrix into lower and upper triangular parts:
$Ax = \mathrm{tril}(A)x + \mathrm{triu}(A,1)x$.
For the lower triangular part,
\[
y_i = \sum_{j \le i} \exp(a_j - a_i)\, x_j.
\]
This satisfies the recurrence
\[
y_{i+1}
= \sum_{j \le i+1} \exp(a_j - a_{i+1})\, x_j
= x_{i+1} + \exp(a_i - a_{i+1}) \sum_{j \le i} \exp(a_j - a_i)\, x_j,
\]
or $y_{i+1} = x_{i+1} + \exp(a_i-a_{i+1})y_i$. A prefix scan evaluates
this recurrence in $\mathcal{O}(n)$ work and $\mathcal{O}(\log n)$
parallel depth; a reverse scan handles the upper triangular part. Thus
sorted symmetric multiplication is linear in time and memory, and sorting
raises the total cost to $\mathcal{O}(n\log n)$.

\paragraph{General case: exact aggregation over intervals.}
The asymmetric case is where \our{} departs from classical Toeplitz
algebra. The anchors $b$ need not lie on the grid of $a$; they can be
irregular, batched, or learned. Each $b_j$ falls into one interval between
neighbouring sorted $a$-anchors. Inside that interval, its contribution to
all points on the left and all points on the right has a separable
exponential form (see~Fig.\ref{fig:thm1}). We therefore aggregate the local contributions into two
vectors and propagate them through the symmetric scan primitive. Proofs of
the matrix--vector reductions are in App.~\ref{app:matvec-proofs}; the
Gram proof is in App.~\ref{app:gram-proof}.

\begin{theorem}\label{thm:1}
Assume $a_1 \leq \cdots \leq a_n$ and extend with $a_0 = -\infty$, $a_{n+1} = +\infty$.
For $i = 1, \ldots, n+1$ define the intervals
\[
P_i = [a_{i-1}, a_i).
\]
Given $b, x\in\mathbb{R}^k$, define vectors $u,v \in \mathbb{R}^{n}$ by
\[
u_i = \sum_{j : b_j \in P_i} x_j\, e^{\,a_i - b_j},
\qquad
v_i = \sum_{j : b_j \in P_i} x_j\, e^{\,b_j - a_i}.
\]
Then
\[
\our(a,b)x
= \mathrm{triu}(K)\, u + \mathrm{tril}(K)\, v,
\]
where $K = \our(a)$.
\end{theorem}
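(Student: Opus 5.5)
The plan is to verify the identity entrywise by linearity in $x$. Both sides are linear in $x$, so it suffices to take $x=e_j$, a single anchor $b_j$ with unit weight. Then the left-hand side is the column vector $\bigl(e^{-|a_i-b_j|}\bigr)_{i=1}^n$. On the right, $u$ and $v$ are supported on the single index $l$ with $b_j\in P_l$, with $u_l=e^{a_l-b_j}$ and $v_l=e^{b_j-a_l}$. The claim then reduces to checking, for each row $i$, that
\[
e^{-|a_i-b_j|}=[\triu(K)]_{il}\,e^{a_l-b_j}+[\tril(K)]_{il}\,e^{b_j-a_l},
\]
where $K_{il}=e^{-|a_i-a_l|}$ and the rows $i$ are split according to their position relative to $b_j$.

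\textbf{Step 1 (rows to the right).} Since $b_j\in P_l$ means $a_{l-1}\le b_j<a_l$, every $i\ge l$ has $a_i\ge a_l>b_j$. Hence $e^{-|a_i-b_j|}=e^{b_j-a_i}=e^{a_l-a_i}\,e^{b_j-a_l}=K_{il}v_l$, which is exactly the $\tril(K)v$ contribution; the $\triu$ entry vanishes for $i>l$.

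\textbf{Step 2 (rows to the left).} For $i\le l-1$ we have $a_i\le a_{l-1}\le b_j$, so $e^{-|a_i-b_j|}=e^{a_i-b_j}=e^{a_i-a_l}\,e^{a_l-b_j}=K_{il}u_l$, the $\triu(K)u$ contribution; the $\tril$ entry vanishes for $i<l$. Both steps use only the factorization $e^{s-t}=e^{s-r}e^{r-t}$ through the interval endpoint $a_l$, which is the separable structure depicted in Fig.~\ref{fig:thm1}. Summing over $j$ then gives the theorem.

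\textbf{Main obstacle: diagonal and boundary conventions.} At $i=l$ both triangular parts contain the diagonal entry $K_{ll}=1$. A literal reading therefore yields $u_l+v_l$ at row $l$, while the correct value is $v_l$. I would therefore read the $u$-term as the strict upper part $\triu(K,1)$, matching the symmetric decomposition $\tril(A)x+\triu(A,1)x$ used above, and state this convention explicitly.

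The last interval $P_{n+1}=[a_n,\infty)$ also has no anchor $a_{n+1}$ inside $\R^n$ to factor through. For those $b_j$, every row satisfies $a_i\le b_j$, so the contribution is $e^{a_i-a_n}e^{a_n-b_j}$. I would handle this by folding these $b_j$ into index $n$ of the upper-type term, i.e.\ treating them as attached to $a_n$ from the left, which is consistent with $\triu(K)$ including the diagonal for this term only.

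Once these conventions are pinned down (alternatively, shift the $u$-index to the left endpoint $a_{l-1}$ and use the non-strict $\triu$), the verification is the two-line computation above. I expect the proof to consist mostly of making these index conventions precise rather than of any nontrivial estimate.
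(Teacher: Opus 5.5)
Your argument is the paper's own proof: it fixes a row $i$, splits the buckets into $r>i$ and $r\le i$, and factors $e^{-|a_i-b_j|}$ through the endpoint $a_r$ exactly as in your Steps 1--2 (your reduction to $x=e_j$ is the same double sum taken in the other order), and the paper's appendix restatement confirms your reading that $\triu(K)$ is strict while $\tril(K)$ keeps the diagonal. Your boundary concern is a genuine gap in the paper's proof, which sums $r$ up to $n+1$ and then writes this as $(\triu(K)u)_i$ although $u\in\R^n$, so every $b_j\ge a_n$ is dropped (e.g.\ $n=k=1$, $a_1=0$, $b_1=x_1=1$ gives $e^{-1}$ on the left and $0$ on the right); your left-endpoint re-indexing with non-strict $\triu$ fixes this, as does adding the $P_{n+1}$ mass as a separate multiple of the whole $n$-th column of $K$---just do not merge it into the existing $u_n$ before using the diagonal, or row $n$ picks up a spurious $e^{a_n-b_j}$ for $b_j\in P_n$.
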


\begin{figure}[tbp]
    \centering
    \def\svgwidth{\linewidth}
    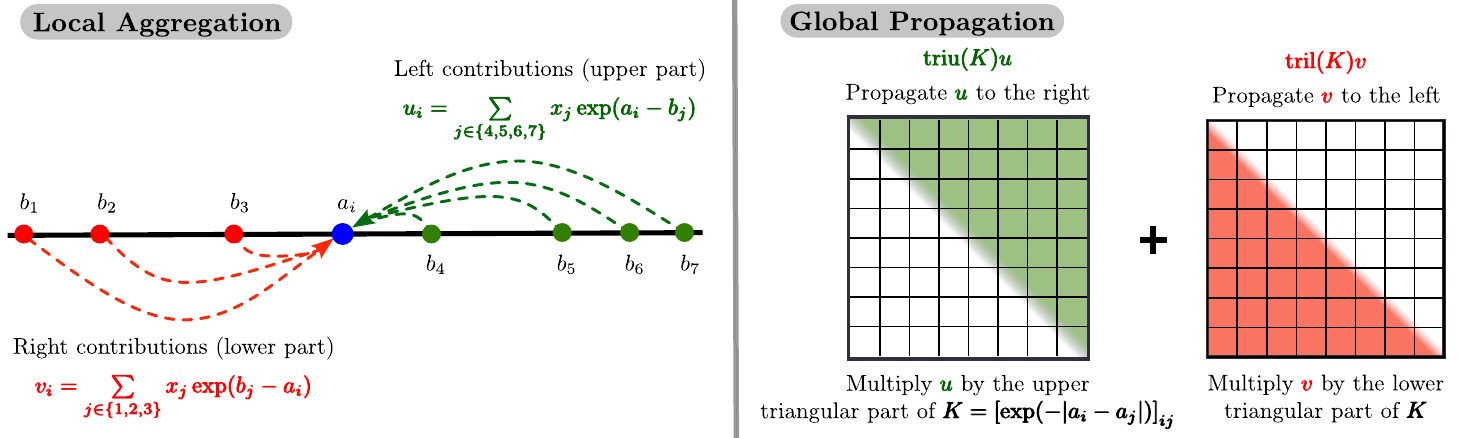
    \caption{Illustration of Theorem~\ref{thm:1}: exact decomposition of the general Laplace kernel matrix--vector product. The vector $b$ is partitioned into intervals defined by the sorted $a$, enabling local aggregation of contributions into vectors $u$ and $v$. These aggregated terms are then propagated globally using the upper and lower triangular parts of the symmetric kernel $K$, yielding an efficient and structured computation of $\our(a,b)x$ without explicitly forming the dense matrix.}
    \label{fig:thm1}
\end{figure}

\paragraph{Complexity.}
If sorted anchors are already available, interval assignment and local
aggregation cost $\mathcal{O}(k)$, and the two triangular scan products
cost $\mathcal{O}(n)$. Sorting gives
$\mathcal{O}(n\log n+k\log n)$ time, and the storage remains linear in the
number of anchors. In training, the sorted order can be reused across
batches and the dual reduction in App.~\ref{app:matvec-proofs} dispatches
through the shorter axis when $n$ and $k$ are imbalanced.

\paragraph{Computation of Gram matrix.}
The same order structure also gives exact weighted Gram matrices. This is
important because many downstream algorithms do not only apply a kernel
operator; they need expressions of the form
\[
M = A \operatorname{diag}(D) A^\top,
\quad
A=\our(a,b),
\quad D \in \mathbb{R}^k_+ .
\]
Such matrices appear in attention, kernel methods, Woodbury-accelerated
Gaussian likelihoods, and Fisher or Gauss--Newton curvature approximations
\citep{katharopoulos2020transformers,martens2020new,rahimi2007random,rasmussen2006gaussian,vaswani2017attention,williams2001using}.
A naive computation pays $\mathcal{O}(n^2k)$. For the Laplace kernel, the
middle anchor set can again be summarized by interval-wise statistics,
after which every entry of $M$ is recovered exactly from scan products and
prefix sums.

We work in the same assumptions as in the previous theorem.

\begin{theorem}[Exact Gram decomposition]
We define $u,v,w \in \R^n$ by
\[
u_i = \sum_{b_t \in P_i} e^{2(b_t - a_i)} D_t, \quad
v_i = \sum_{b_t \in P_{i}} e^{2(a_i - b_t)} D_t, \quad
w_i = \sum_{b_t \in P_i} D_t.
\]
Put
$U=\tril(K) u$, $V = \triu(K) v$ for $K=\our(2a)$. We also define $W$ by $W_i=\sum_{j \leq i} w_j$.

Then for $i \ge j$, the entries of
$M = A \operatorname{diag}(D) A^\top$
are given by
\[
M_{ij}
=
\exp(a_j - a_i)
\Bigl(
U_j
+
(W_i-W_j)
+
V_i
\Bigr),
\]
with symmetric extension $M_{ji} = M_{ij}$.
\end{theorem}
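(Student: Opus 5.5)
The plan is a direct entrywise computation. Fix $i\ge j$, so that $a_j\le a_i$ because the anchors are sorted, and expand
\[
M_{ij}=\sum_{t=1}^k D_t\, e^{-|a_i-b_t|}\,e^{-|a_j-b_t|}.
\]
The absolute values are resolved by splitting the index set $\{t\}$ according to where $b_t$ lies relative to $a_j$ and $a_i$. Since the intervals $P_l=[a_{l-1},a_l)$ partition $\R$, this split is the same as grouping $t$ by the index $l$ with $b_t\in P_l$. There are three regimes.

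\textbf{Left regime, $b_t<a_j$, i.e.\ $l\le j$.} Both exponents open the same way, and the summand is
\[
D_t\,e^{2b_t-a_i-a_j}=e^{a_j-a_i}\,e^{2(b_t-a_j)}D_t .
\]
I will check that $U_j=\sum_{l\le j}e^{-2(a_j-a_l)}u_l$ produces exactly this. We have $\tril(K)_{jl}=e^{-|2a_j-2a_l|}=e^{2(a_l-a_j)}$ for $K=\our(2a)$, and it multiplies $e^{2(b_t-a_l)}$, so $e^{2a_l}$ telescopes away and leaves $e^{2(b_t-a_j)}$. This is the same prefix-scan mechanism as in the symmetric case and in Theorem~\ref{thm:1}, now with doubled anchors.

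\textbf{Middle regime, $a_j\le b_t<a_i$, i.e.\ $j<l\le i$.} The exponents have opposite signs and $b_t$ cancels, giving $D_t\,e^{a_j-a_i}$. Summing over these $t$ gives $e^{a_j-a_i}\sum_{j<l\le i}w_l=e^{a_j-a_i}(W_i-W_j)$. When $i=j$ this regime is empty, consistent with $W_i-W_j=0$.

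\textbf{Right regime, $b_t\ge a_i$.} The summand is
\[
e^{a_j-a_i}\,e^{2(a_i-b_t)}D_t ,
\]
and the reverse (suffix) scan $V=\triu(K)v$ telescopes $e^{2(a_i-a_l)}e^{2(a_l-b_t)}=e^{2(a_i-b_t)}$ in the same way. Adding the three regimes and factoring out $e^{a_j-a_i}$ gives the claimed formula, and $M_{ji}=M_{ij}$ holds because $M$ is visibly symmetric.

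The step I expect to be the real obstacle is the index bookkeeping in the right regime. With $P_l=[a_{l-1},a_l)$, the points with $b_t\ge a_i$ lie in $P_l$ for $l\ge i+1$. A literal reading of $V_i=\sum_{l\ge i}K_{il}v_l$, with $\triu$ including the diagonal and $v_l$ collecting $P_l$, instead picks up $P_i$, whose points satisfy $b_t<a_i$. It also never sees $P_{n+1}$, since $v\in\R^n$. For such $b_t\in P_i$ with $b_t<a_i$, the term $e^{2(a_i-b_t)}$ is larger than $1$ and is not the correct contribution. It should be $e^{2(b_t-a_i)}$ in the $U$-part if $i=j$, and should belong to the middle regime if $i>j$.

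So I would first fix the convention so that the three regimes exactly partition the $P_l$'s: left $=\{l\le j\}$, middle $=\{j<l\le i\}$, right $=\{l\ge i+1\}$. The natural repair is to let $v$ collect the interval to the \emph{right} of each anchor, i.e.\ $v_i$ summed over $b_t\in[a_i,a_{i+1})=P_{i+1}$, so that $v$ has support on $P_2,\dots,P_{n+1}$. Equivalently, one keeps the stated $v$ and uses the strict $\triu(K,1)$ with a shifted index. I would state this convention explicitly, verify the boundary cases $b_t<a_1$ and $b_t\ge a_n$ (the latter requiring $P_{n+1}$), and then the three telescoping identities complete the proof.
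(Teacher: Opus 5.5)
Your proof is correct and follows the paper's argument. Both use the same three-way split of the $b_t$ relative to $a_j\le a_i$: the left block is telescoped through the prefix scan $U=\tril(K)u$, the middle block gives the $b_t$-independent factor $e^{a_j-a_i}$ times $W_i-W_j$, and the right block is telescoped through the suffix scan $V$.

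On the bookkeeping you flag, the paper takes $\triu$ to be strict, writing $V_i=\sum_{r>i}K_{ir}v_r$. This matches its matvec theorem, where $\triu$ is declared strictly upper-triangular, and it disposes of the diagonal issue. However, the paper's right-region sum over $r>i$ then tacitly needs a bucket $v_{n+1}$ anchored at $a_{n+1}=+\infty$, which does not exist for $v\in\R^n$, so points with $b_t\ge a_n$ are silently dropped. Your left-anchored choice $v_i=\sum_{b_t\in[a_i,a_{i+1})}e^{2(a_i-b_t)}D_t$ with inclusive $\triu$ closes exactly that gap, and it also keeps every scan weight at most $1$.
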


Proof in App.~\ref{app:gram-proof}.  The cost is $\mathcal{O}(n^2 + k \log k)$: the $n^2$ term is the output size of the full Gram matrix, while the dependence on the intermediate anchor set remains near-linear.

\paragraph{Phased Laplace kernels.}
For neural layers it is often advantageous adding the phase without losing the fast
algebra. Given phase anchors $\phi\in\mathbb{R}^{n}, \psi \in \mathbb{R}^k$ we define
\[
  \our(a,\phi;b,\psi)_{jk} \;=\; \exp\!\bigl(-|a_j - b_k|)\bigr) \cdot \cos(\phi_j - \psi_k),
\]
so the standard kernel is the special case $\alpha=\beta=0$. Angle-sum
identities reduce each phased matrix--vector or weighted Gram operation to
a constant number of non-phased \our{} calls plus $\mathcal{O}(n+k)$ pointwise
phase modulation. The phases are ordinary trainable parameters, and their
gradients flow through the same exact scan computation.


\section{Scaling consequences: GPU primitives and high-dimensional Gramians}
\label{sec:scaling-consequences}

The algebra above is meant to change the feasible regime, not merely the
notation. We therefore separate the consequences from the derivation. The
first test asks whether the scan primitive behaves like a fast transform on
hardware; the second asks whether exact weighted Gramians can support a
meaningful Gaussian model in hundreds of thousands of dimensions.

\subsection{Primitive scaling on GPU}
\label{sec:kernel-algebra-bench}

The systems question is whether the scan algebra behaves like a
fast transform on real hardware. We measure matrix--vector multiplication
$XK(a,b)$. The dense
baseline explicitly materializes the $n\times n$ kernel matrix and calls
\texttt{torch.matmul}. All experiments run on a single NVIDIA~H100 in
\texttt{float32}; full timing details, memory accounting, and OOM rules are
given in App.~\ref{sec:detailed-kernel-bench}.

\begin{figure}[ht]
    \centering
    \resizebox{\linewidth}{!}{\input{images/paper_appendix/exp1.pgf}}
    \caption{\textbf{Forward time (left), forward+backward time (middle),
        and peak GPU memory of the forward call (right) for
        $X K(a, b)$ at batch size $B = 8$.}  \our{} (blue) scales
        smoothly to $n = 2^{20}$.  The explicit dense baseline
        (orange) runs out of memory above $n = 2^{14}$
        (dashed vertical line).  At the largest size where both
        run, \our{} is $\sim\!65\times$ faster on the forward pass,
        $\sim\!430\times$ faster on forward+backward, and uses
        $\sim\!100\times$ less peak memory.}
    \label{fig:kernel-algebra-bench}
\end{figure}

\paragraph{Scaling and memory.}
Figure~\ref{fig:kernel-algebra-bench} is the basic scalability test. The
dense baseline stops being a method once the $n\times n$ kernel no longer
fits in memory; \our{} continues as an implicit operator and scales smoothly
to $n=2^{20}$. At the largest size where both methods fit, \our{} is
already $\sim\!65\times$ faster on the forward pass, $\sim\!430\times$
faster on forward+backward, and uses $\sim\!100\times$ less peak memory.
This is the practical difference between storing a dense interaction and
computing it as a fast transform.

\paragraph{Numerical results.}
The scan is not merely faster than dense \texttt{float32}; it is also more
accurate against a dense \texttt{float64} reference. Dense matmul sums
$n$ terms in ordinary floating point, so its error grows like
$\mathcal{O}(n\varepsilon)$. The \our{} scan keeps exponential scale factors
separate from mantissas and introduces only $\mathcal{O}(\varepsilon)$
rounding per element (App.~\ref{sec:app-exp-accuracy}).
The same pattern holds beyond the square forward pass. The weighted Gram
operator yields an asymptotic $\Theta(n^{2}/\log n)$ speed-up over dense
construction (App.~\ref{sec:app-exp-gram}); asymmetric problems dispatch to
the shorter axis (App.~\ref{sec:app-exp-asymm}); and even the pure-PyTorch
CPU reference remains far faster than dense (App.~\ref{sec:app-exp-cpu}).
%
%

\subsection{High-dimensional density modeling on DIV2K at dim$\pmb{\approx 3 \cdot 10^6}$}
\label{sec:experiments-div2k}

To stress-test the Gram primitive at an extremely high data dimension, we
fit Gaussian density models to flattened DIV2K-$1024$ images
($800$ training, $100$ validation), where each example is a
$3{,}145{,}728$-dimensional vector ($1024\!\times\!1024\!\times\!3$),
again with no patches, convolutions, or positional encoding. At this
scale even writing down an empirical covariance is hopeless (it would
require $\sim\!10^{13}$ parameters); the question is whether a
structured exact Gram operator still carries meaningful geometry once
the data dimension reaches the multi-million regime.

\paragraph{Setup.}
\begin{wrapfigure}{r}{0.5\textwidth}
\centering
\vspace{-\baselineskip}
\begin{subfigure}[b]{0.32\linewidth}
    \centering
    \includegraphics[width=\linewidth]{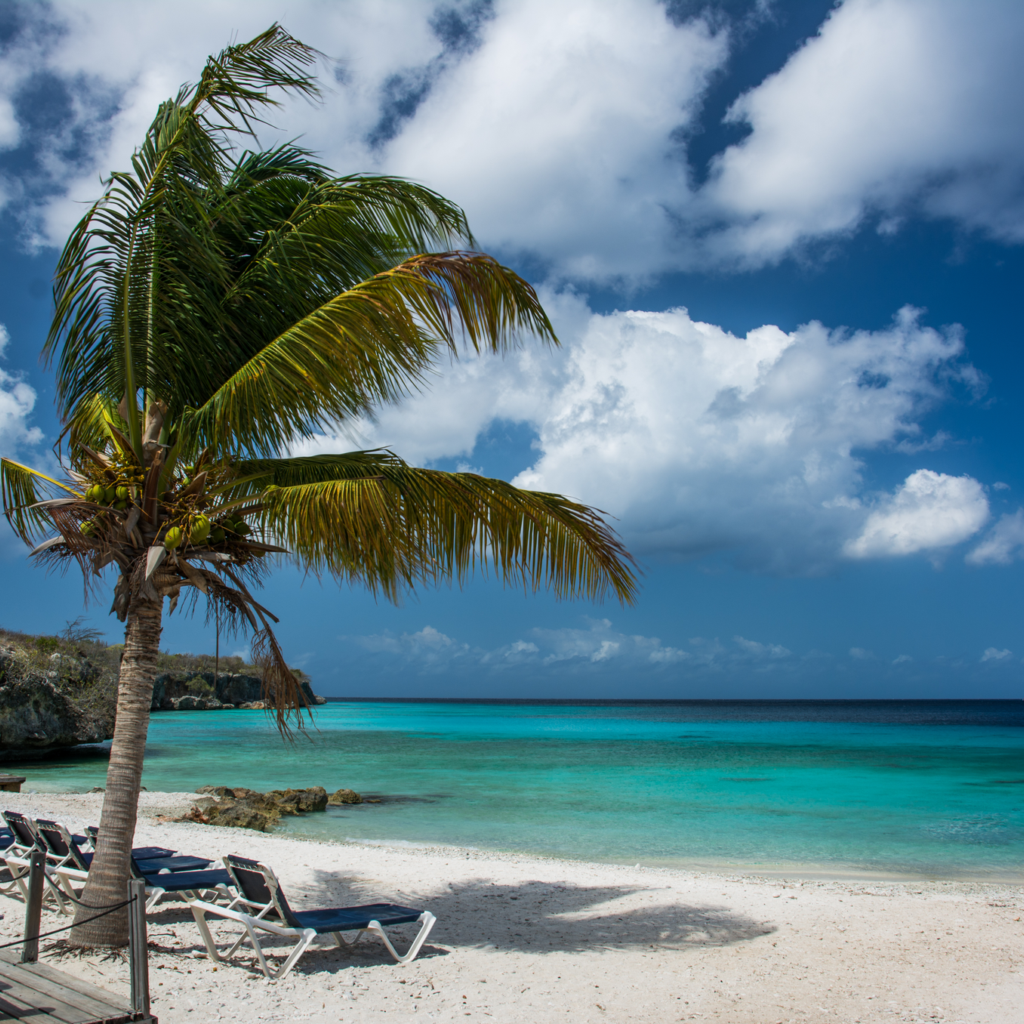}
    \caption*{\scriptsize input}
\end{subfigure}\hfill
\begin{subfigure}[b]{0.32\linewidth}
    \centering
    \includegraphics[width=\linewidth]{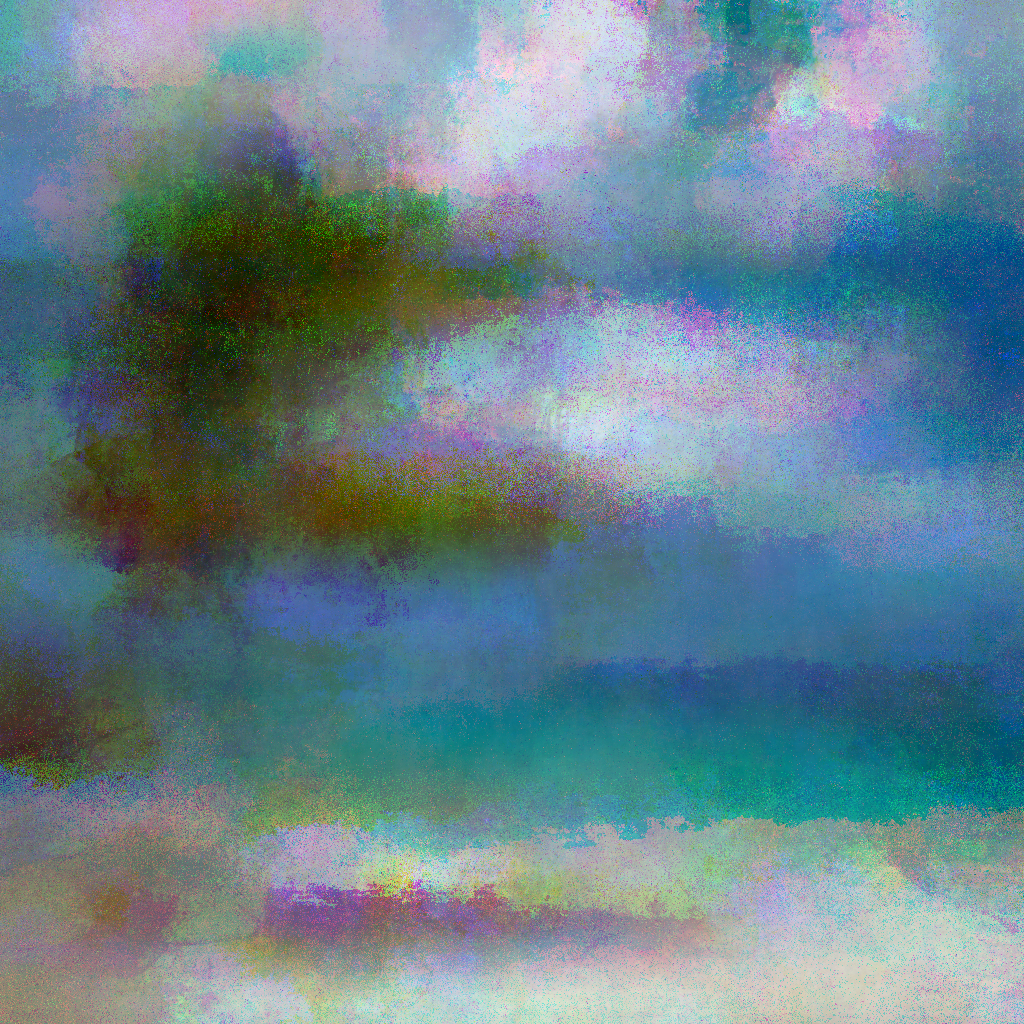}
    \caption*{\scriptsize \our{} MAP}
\end{subfigure}\hfill
\begin{subfigure}[b]{0.32\linewidth}
    \centering
    \includegraphics[width=\linewidth]{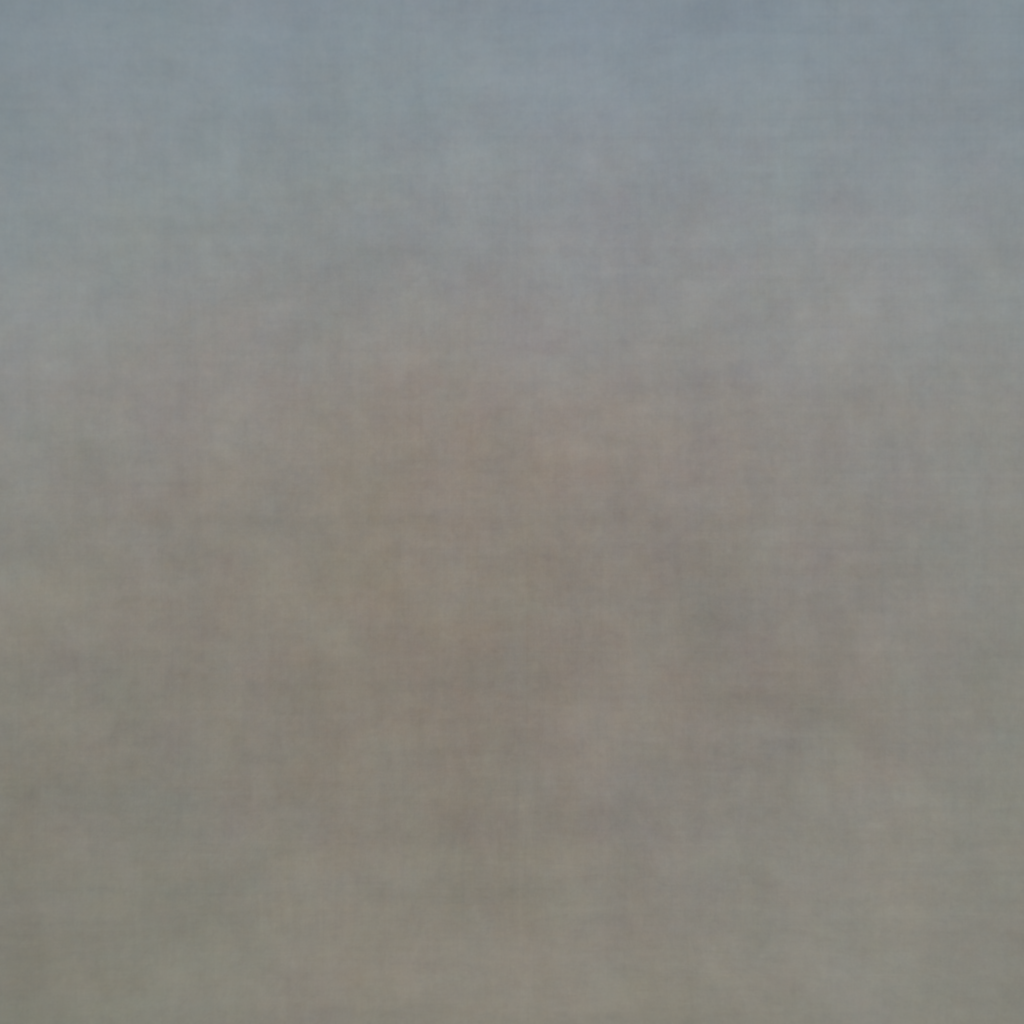}
    \caption*{\scriptsize low-rank MAP}
\end{subfigure}

\vspace{0.55ex}
\begin{subfigure}[b]{0.49\linewidth}
    \centering
    \includegraphics[width=\linewidth]{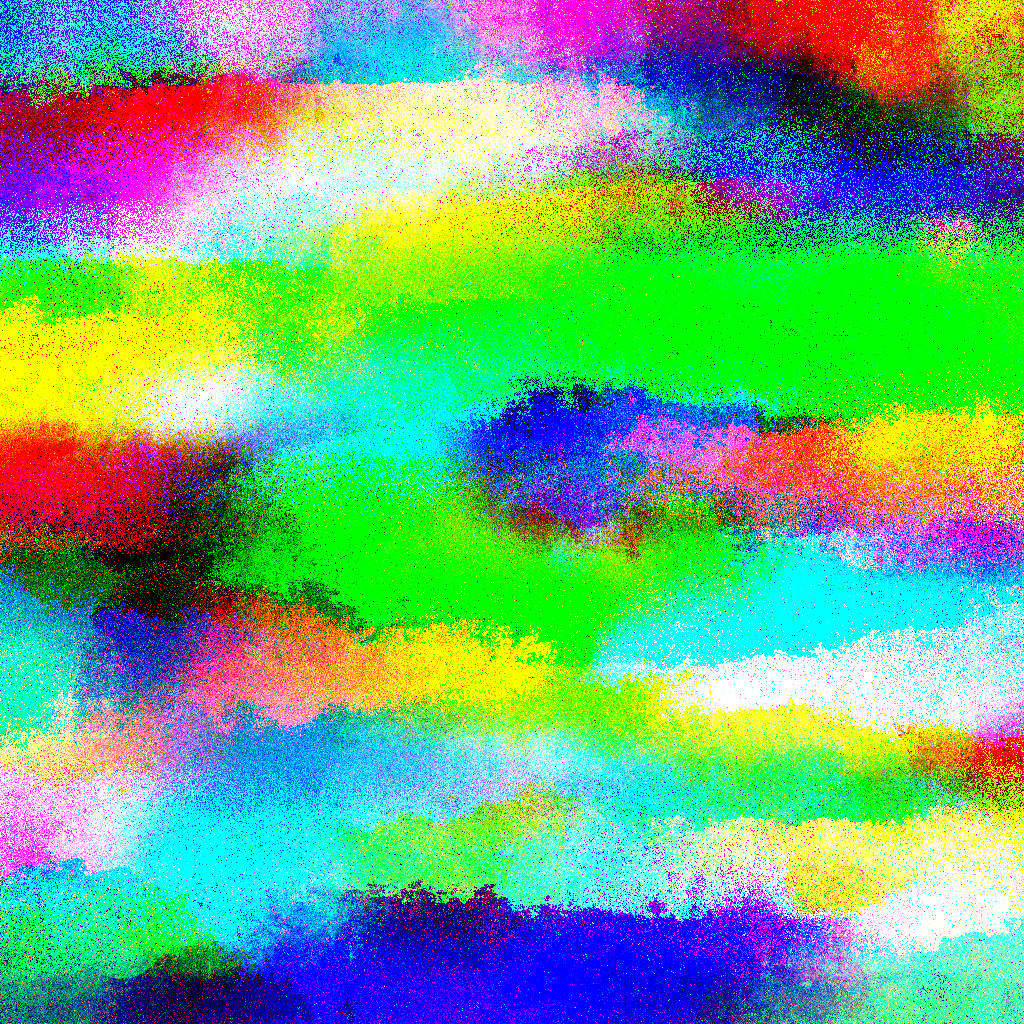}
    \caption*{\scriptsize \our{} sample\\\scriptsize LL $\mathbf{1{,}266{,}679}$}
\end{subfigure}\hfill
\begin{subfigure}[b]{0.49\linewidth}
    \centering
    \includegraphics[width=\linewidth]{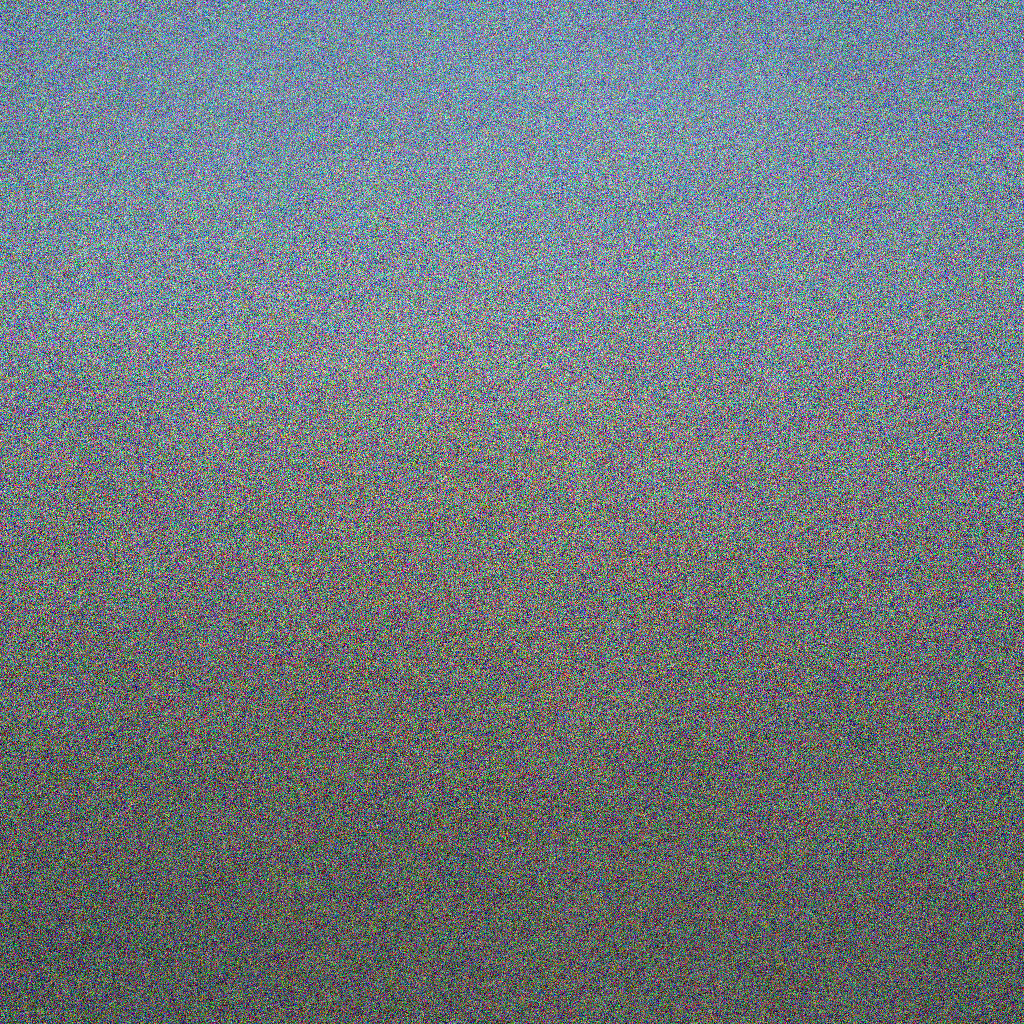}
    \caption*{\scriptsize low-rank sample\\\scriptsize LL $282{,}392$}
\end{subfigure}
\caption{DIV2K density at $n\!=\!3{,}145{,}728$. Matched
$\sim\!14$M-parameter budget: \our{} (rank $1000$) vs.\ dense low-rank
(rank $4$). \textbf{Top}: input and MAP reconstruction; \textbf{bottom}:
unconditional samples with validation log-likelihood. \our{} dominates
the dense baseline by a $4.5\times$ factor in test LL.}
\label{fig:div2k-samples}
\vspace{-\baselineskip}
\end{wrapfigure}
We use a low-rank-plus-diagonal Gaussian template
\[
x \;=\; m \;+\; F\,z \;+\; d\odot\varepsilon,
\]
with $z\sim\mathcal{N}(0,I_k)$ and $\varepsilon\sim\mathcal{N}(0,I_n)$.
At a matched parameter budget we compare two parameterizations of
$F$: \emph{(i)} a phased \our{} factor $F=\mathrm{diag}(w)AL$ with $A$ a
width-$1000$ Laplace kernel and $L$ a lower-triangular Cholesky factor
($13.6$M trainable parameters); \emph{(ii)} a freely trained dense
low-rank factor $F\in\mathbb{R}^{n\times k_{\mathrm{low-rank}}}$
($15.7$M trainable parameters at $k_{\mathrm{low-rank}}=4$). Under this
budget the dense baseline can only afford rank $\sim\!4$, whereas
\our{} learns a rank-$1000$ structured factor. Both models are trained
for $20$ epochs with Adam and identical optimizer settings. 

\paragraph{Takeaway.}
Figure~\ref{fig:div2k-samples} gives the most visual demonstration of
what the anchors learn. A \our{} covariance factor trained only on flat
vectors discovers coherent spatial color structure: samples have contiguous
regions, and the MAP reconstruction preserves the palm, beach and clouds silhouettes. The matched-budget dense low-rank baseline collapses
to nearly uniform noise because its rank budget is too small to represent
these correlations. The only place the spatial organization can live in
the \our{} model is the learned anchor geometry, which is precisely the
mechanism we want.

\section{Learnable routing: CountSketch limits and structured projections}
\label{sec:learnable-routing}

We first look at what the anchors represent. In \our{}, a matrix entry is
large when two learned coordinates are close and small when they are far
apart. At finite temperature this is a smooth matching rule; as the
temperature goes to zero it becomes discrete equality routing. Thus the
same object that behaves like a dense kernel in the forward pass also
contains permutations, hashes, and sparse routing maps on its boundary.

For the temperature-scaled kernel
\[
\our_t(a,b)=[\exp(-|a_i-b_j|/t)]_{ij},
\]
the limit $t\to0$ becomes an equality-routing matrix. Suitable anchor
multiplicities recover permutations and CountSketch-like maps, while
diagonal scalings and sparse replication recover arbitrary rectangular
dense matrices (App.~\ref{app:expressivity}). The important difference is
that \our{} does not ask us to choose the routing map before training: the
geometry is a continuous parameter of the model.

After the fast algebra is in place, the next question is what the anchors can
represent. If anchor geometry is a learnable version of sketching, then it
should preserve useful signal better than fixed
sketches when inserted as a projection from a very high-dimensional input.
We test this on flattened ImageNet10, where the input dimension is already
$150{,}528$.

\paragraph{Structured input projections versus CountSketch}
\label{sec:logistic-countsketch}

The test is intentionally unforgiving: each ImageNet10 image is flattened
to $d=224\!\times\!224\!\times\!3=150,528$ coordinates, and the classifier sees
only a projected vector of dimension $m\in\{25,100,500,1000\}$. In a frozen
regime, the projection is fixed and only the logistic head is trained, so
the projection is judged by how much signal it preserves. In a trainable
regime, \our{} anchors are optimized end-to-end and compared with an
unconstrained dense projection at the same output dimension. Frozen
competitors are CountSketch~\citep{charikar2002finding}, PCA, and Gaussian
random projections; the full $\tau$-sweep is in
App.~\ref{sec:logreg-frozen-app}.
\begin{wraptable}{r}{0.55\textwidth}
    \vspace{1\baselineskip}
    \small
    \centering
    \caption{Logistic regression on ImageNet10.  Mean validation
        accuracy (\%) over $5$ seeds.  Cols.~2--5: frozen projection
        (only the linear head is trained); cols.~6--7: projection
        trained jointly with the head.  Best in each regime is shown
        in \textit{italic} (frozen) and \textbf{bold} (trainable).
        CS = CountSketch, RP = Random Projections.  Standard
        deviations in App.~\ref{sec:logreg-frozen-app}.}
    \label{tab:logreg-merged}
    \begin{tabular}{@{}r@{\quad}c@{\,}c@{\;\,}c@{\;\,}c@{\quad}c@{\;\,}c@{}}
        \toprule
        & \multicolumn{4}{c}{\textbf{frozen backbone}}
        & \multicolumn{2}{c}{\textbf{trainable}} \\
        \cmidrule(lr){2-5}\cmidrule(lr){6-7}
        $m$ & CS & \our{} & PCA & RP & \our{} & Dense \\
        \midrule
          25 & 26.9 & 28.0 & \textit{34.8} & 28.0 & \textbf{43.5} & \textbf{43.5} \\
         100 & 31.0 & 31.8 & \textit{36.5} & 31.7 & \textbf{43.6} & 43.0 \\
         500 & 31.1 & 30.9 & \textit{32.4} & 30.2 & \textbf{43.8} & 43.0 \\
        1000 & 26.0 & \textit{31.3} & 27.2 & 25.2 & \textbf{44.2} & 43.8 \\
        \bottomrule
    \end{tabular}
    \vspace{-2.5\baselineskip}
\end{wraptable}
\paragraph{Results.}
Table~\ref{tab:logreg-merged} shows the expected hierarchy:
PCA is strongest at small
$m$ because it is fitted to the training set, while fixed random maps lose
signal quickly. Among data-agnostic maps--CountSketch, Gaussian random
projections, and frozen \our{}--\our{} is the most stable and becomes best
at the largest $m$. Even before learning the anchors, the smooth geometry
is a richer alternative to a fixed hash.
\par
The trainable columns are the more revealing part. Once anchors are learned
for the task, \our{} matches or beats the dense projection at every $m$
while using $\Theta(m+d)$ rather than $\Theta(md)$ parameters. The gains are
small but consistent across seeds. This is exactly the intended behavior:
the operator does not merely compress the input; it learns a task-adapted
coordinate matching in a space where a dense projection is expensive.

%
%
%
%
%
%
%
%

\section{Restricted views: fixed grids and sampled features}
\label{sec:restricted-views}

The comparison with Toeplitz--FFT and Random Fourier Features is useful
because these methods are not distant competitors. They are what the same
Laplace interaction becomes after two different restrictions. If the
anchors are frozen to a uniform grid, the matrix is Toeplitz and FFT gives
an exact specialized solver. If the anchors are irregular but the kernel is
replaced by a finite spectral expansion, we obtain RFF. \our{} sits between
these views: the anchors can move, the kernel values remain exact, and the
gradients with respect to the anchors are still available.

\paragraph{Toeplitz--FFT: the fixed-grid slice}
\label{subsec:kerlap-fft-bench}

On the uniform grid $a_i=b_i=i\Delta$, the Laplace matrix
$[\exp(-|a_i-b_j|)]_{ij}$ is Toeplitz. Toeplitz--FFT is therefore the right
algorithm for this slice of the family. The point of the comparison is not
to claim that a general learned-anchor primitive should beat a solver
hand-built for a fixed grid. The point is sharper: when we leave the grid,
do we only pay a constant factor, or do we lose the fast-transform regime?

\begin{wrapfigure}{r}{0.55\textwidth}
    \vspace{-0.5\baselineskip}
    \centering
    \resizebox{\linewidth}{!}{\input{images/paper_appendix/exp12a.pgf}}
    \caption{%
        GPU runtime for a single matrix--vector product on
        irregular anchors ($a,b\sim\mathcal{N}(0,1)$):
        \our{} (blue), Toeplitz--FFT on its fixed-grid specialization
        (orange), and a single cuFFT call as a hardware lower bound
        (green). All three curves follow the $\mathcal{O}(n\log n)$
        reference.
    }
    \label{fig:kerlap-vs-fft}
    \vspace{-1.5\baselineskip}
\end{wrapfigure}
Figure~\ref{fig:kerlap-vs-fft} shows the answer. \our{}, Toeplitz--FFT,
and a raw cuFFT call lie on parallel $\mathcal{O}(n\log n)$ curves. The
extra constant is the price of sorting and scanning arbitrary anchors; what
it buys is a different object: not convolution on a prescribed grid, but a
dense kernel operator whose coordinate system can be learned. The numerical
picture is also favorable. On the very Toeplitz problem where the FFT
baseline is exactly applicable, Fig.~\ref{fig:app_kerlap-vs-fft}(b) (App.~\ref{sec:app-fft-gpu}) shows that
\our{} has lower \texttt{float32} error than both dense matmul and
Toeplitz--FFT. Thus the learned-anchor extension does not trade away
stability for flexibility.

The categorical difference is differentiability with respect to the grid.
Toeplitz--FFT differentiates through the input vector, but the anchor
locations are the fixed discretization that makes the matrix Toeplitz.
\our{} differentiates through $x$, $a$, and $b$. For models that learn
coordinate systems, kernel nodes, attention anchors, or covariance factors,
this is the difference between tuning values on a grid and learning the
grid itself. CPU results, the differentiability table, and comparisons to
NUFFT and FMM are in App.~\ref{sec:detailed-fft-bench}.

\paragraph{RFF: the sampled surrogate}
\label{subsec:kerlap-vs-rff}
RFF~\citep{rahimi2007random} removes the grid restriction, but pays by
turning the exact kernel into a Monte Carlo feature map. For
$k(u-v)=\exp(-|u-v|)$, Bochner's theorem gives
$\widehat{k}_D(u, v) = \tfrac{2}{D} \sum_{d=1}^{D}
\cos(w_d u + c_d)\cos(w_d v + c_d)$ with Cauchy frequencies
$p(w)=[\pi(1+w^2)]^{-1}$. This is an approximation to the same Laplace
kernel that \our{} applies exactly. The feature count $D$ therefore plays
the role of an accuracy knob, while \our{} has no sampled feature dimension.

We test the favorable case for RFF: irregular anchors
$a,b\sim\mathcal{N}(0,1)$, $n=2^{16}$, $B=1$, and a shift-invariant
kernel with a known spectral density. Figure~\ref{fig:kerlap-vs-rff}
shows why this approximation is not a substitute for an exact learned
kernel primitive. RFF matches the runtime of \our{} only at very small
$D$, where the value error is still about $72\%$. Reducing the error below
$10\%$ requires $D\ge1024$, making RFF about $75\times$ slower.
\begin{figure}[H]
    \centering
    \resizebox{\linewidth}{!}{\input{images/paper_appendix/rff_val.pgf}}
    \caption{%
        \textbf{RFF vs.\ \our{}.}
        \textbf{(Left)} Runtime as a function of $D$. RFF is linear in
        the feature count; \our{} is independent of $D$
        ($\sim\!0.5$\,ms, dashed). The runtime crossover occurs near
        $D\approx16$. \textbf{(Right)} Relative $\ell_2$ approximation
        error of RFF (median and $[\min,\max]$ band over $10$ seeds) with
        the $1/\sqrt{D}$ reference. At the runtime crossover RFF still has
        $\sim\!72\%$ median error; below $10\%$ error requires
        $D\ge1024$, where RFF is $\sim\!75\times$ slower than \our{}.
    }
    \label{fig:kerlap-vs-rff}
\end{figure}
The gradient story is even more decisive. Differentiating the RFF estimator
with respect to anchors multiplies each term by the sampled Cauchy
frequency $w_d$, whose second moment is infinite. The resulting anchor
gradient estimator has unbounded variance; empirically, the $b$-anchor
gradient error does not reliably decrease with $D$
(App.~\ref{sec:app-rff-grad}). For fixed features this may be acceptable.
For learnable kernels it is the central failure mode: RFF gives a sampled
view of the Laplace kernel, while \our{} keeps exact values and exact
anchor gradients at the same $\mathcal{O}(n\log n)$ cost.
%
%

\section{Learned anchor geometry in neural heads}
\label{sec:learnable-layers}

The projection experiment tests \our{} as a high-dimensional compression
map, and the density experiment tests it as a Gram/covariance primitive. We
also want a familiar neural-network interface where the operator must
replace a standard dense matrix. 

A classification head provides a clean testbed for whether anchor geometry
can replace dense class--feature interactions. We freeze the pretrained
backbone and train only the final head. For
Swin-T~\citep{liu2021swin}, the original head is a single linear map
$W \in \mathbb{R}^{c \times f}$ with $f=768$ and $c=1000$; for
MaxViT-T~\citep{tu2022maxvit}, it is the linear component of a
two-layer MLP.
\begin{wraptable}{r}{0.4\textwidth}
\vspace{-0.5\baselineskip}
\centering
\caption{\small ImageNet classification with structured heads on
Swin-T and MaxViT-T. Mean top-1 validation accuracy over 5 runs. Best at each rank in \textbf{bold}.}
\label{tab:heads_results}

\footnotesize

\begin{tabular}{@{}l@{\quad}c@{\;\;}c@{\qquad}c@{\;\;}c@{}}
\toprule
 & \multicolumn{2}{c}{Swin-T} & \multicolumn{2}{c}{MaxViT-T} \\
\cmidrule(l{0pt}r{17pt}){2-3}
\cmidrule(l{0pt}r{0pt}){4-5}

Head & Params & ACC & Params & ACC \\

\cmidrule[1pt](l{0pt}r{0pt}){1-5}

Pretrained
    & $768$\,K & $81.5$
    & $774$\,K & $83.7$ \\

\cmidrule(l{2pt}r{2pt}){1-5}

FullRank
    & $768$\,K & 81.37
    & $774$\,K & 83.37 \\

\cmidrule(l{2pt}r{2pt}){1-5}

\our$^{(4)}$
    & $8.1$\,K & \textbf{54.04}
    & $11.7$\,K & \textbf{78.61} \\
LowRank$^{(4)}$
    & $8.1$\,K & 10.67
    & $11.7$\,K & 17.29 \\

\cmidrule(l{2pt}r{2pt}){1-5}

\our$^{(8)}$
    & $15.2$\,K & \textbf{71.72}
    & $21.8$\,K & \textbf{82.07}\\
LowRank$^{(8)}$
    & $15.1$\,K & 46.80
    & $21.8$\,K & 63.80\\

\cmidrule(l{2pt}r{2pt}){1-5}

\our$^{(16)}$
    & $29.3$\,K & \textbf{77.68}
    & $42.1$\,K & \textbf{82.65}\\
LowRank$^{(16)}$
    & $29.3$\,K & 67.87
    & $42.1$\,K & 78.19\\

\bottomrule
\end{tabular}

\vspace{-1.5\baselineskip}
\end{wraptable}
As a full-parameter baseline, we train a randomly initialized dense
head. The \our{} head is parameterized as a weighted sum of $r$
Laplace-kernel operators,
\begin{equation*}
    \mathrm{LAPLEX}^{(r)}
    \;=\; \sum_{i=1}^{r} w_i\,L_i,
    \qquad
    L_i \in \mathbb{R}^{c \times f},
\end{equation*}
where each $L_i$ is a \our{} matrix parameterized by an independent anchor pair $(a_i,b_i)$ and $w_i$ are scalar mixing weights. As a low-parameter baseline, we compare against a
low-rank factorization
\begin{equation*}
    \mathrm{LowRank}^{(r)}
    = U V^{\!\top},
    \qquad
    U \in \mathbb{R}^{c \times r},
    \quad
    V \in \mathbb{R}^{f \times r}.
\end{equation*}
All heads are trained under the same optimization setup; full details
are provided in App.~\ref{sec:experiments-heads-app}.

\paragraph{Results.}
Table~\ref{tab:heads_results} shows a consistent separation between \our{}$^{(r)}$ and LowRank$^{(r)}$  across both backbones. At matched parameter budgets, \our{}$^{(r)}$ consistently achieves higher accuracy, with the largest gains at low rank. While LowRank$^{(r)}$ constrains representations through a shared low-dimensional bottleneck, \our{}$^{(r)}$ models class--feature interactions through a learned geometric kernel structure. Using only $1\text{--}5\%$ of the original head parameters, \our{}$^{(16)}$ approaches full-rank performance within a few points. Full curves and training details are reported in App.~\ref{sec:experiments-heads-app}.

%
%

\section{Conclusions and Limitations}
\label{sec:conclusions}

The main message of this paper is that fast transforms do not have to live
on fixed coordinate systems. By parameterizing a dense operator through
learnable Laplace anchors, \our{} keeps exact matrix--vector and Gram
computations in near-linear time and linear memory while exposing the
geometry itself to gradient-based learning. Uniform-grid Toeplitz--FFT is a
rigid slice of this family; RFF is a Monte Carlo surrogate for the same
kernel; CountSketch-like routing appear as limiting cases.
\our{} connects these views with a single exact, differentiable operator.

The empirical results show why this matters. \our{} reaches scales where
dense kernels cannot be materialized, follows FFT-class scaling beyond the
uniform grid, improves \texttt{float32} accuracy over standard dense and
Toeplitz routes in the tested regime, and avoids the value/gradient
trade-offs of random features. As a learnable layer it gives compact
projections and classification heads; as a Gram primitive it enables
high-dimensional covariance models that preserve visible spatial structure
without convolutional bias. These are all instances of the same principle:
dense geometry can be learned without storing a dense matrix.

\textbf{Limitations.} The current construction is still deliberately narrow. It uses
one-dimensional Laplace anchors and sorting-based algorithms. Extending the
same exactness to higher-dimensional anchor geometries, richer kernels, and
more general learned metrics is the main mathematical direction. A second
limitation is systems maturity: the implementation demonstrates the
algorithmic regime, but large-scale attention blocks, optimizers, and
generative models will require specialized kernels and careful architecture
design. The broader opportunity is clear: make the geometry of fast linear
algebra learnable.

\section*{Acknowledgments}
The work of Jacek Tabor and Łukasz Struski was supported by the National Science Centre, Poland, grants no. 2023/49/B/ST6/01137. Some experiments were performed on servers purchased with funds from the flagship project entitled ``Artificial Intelligence Computing Center Core Facility'' from the DigiWorld Priority Research Area within the Excellence Initiative -- Research University program at Jagiellonian University in Kraków. 


\bibliographystyle{plain}
\bibliography{ref} 


\newpage

\appendix

\begin{center}
    \Large \textbf{Appendix}
\end{center}

\paragraph{Guide to the Appendix.}
The appendix extends the main paper by providing theoretical, algorithmic, and empirical details underlying LAPLEX. Section~\ref{app:expressivity} establishes expressivity, showing that LAPLEX recovers discrete routing (e.g., CountSketch) in limiting regimes and is universal for dense linear maps. Sections~\ref{app:matvec-proofs} and~\ref{app:gram-proof} present proofs of the fast scan-based algorithms for exact matrix--vector and weighted Gram computations, achieving near-linear complexity. Sections~\ref{sec:detailed-kernel-bench} and~\ref{sec:detailed-fft-bench} give a full empirical and systems-level evaluation, including GPU/CPU benchmarks and comparisons to FFT-based methods, highlighting both efficiency and differentiability advantages. Section~\ref{sec:detailed-rff} analyzes Random Fourier Features, demonstrating their limitations in value accuracy and especially gradient estimation compared to LAPLEX. Finally, Sections~\ref{sec:logreg-frozen-app}--\ref{sec:density-model-app} provide extended experimental results, covering projection tasks (Section~\ref{sec:logreg-frozen-app}), structured classification heads (Section~\ref{sec:experiments-heads-app}), and high-dimensional density modeling (Section~\ref{sec:density-model-app}), illustrating the practical benefits of learnable Laplace-kernel operators.
%
%

\section{Expressivity: CountSketch and dense matrices as limits of \our{}}
\label{app:expressivity}

The main text treats \our{} as a smooth, learnable fast transform. This
section identifies what lies on the boundary of that smooth family. For
the temperature-scaled Laplace kernel
\[
\our_t(a,b)\in\mathbb{R}^{m\times n},
\qquad
\bigl[\our_t(a,b)\bigr]_{ij}
\;=\;
\exp\!\left(-\frac{|a_i-b_j|}{t}\right),
\qquad t > 0,
\]
with $a\in\mathbb{R}^m$ and $b\in\mathbb{R}^n$, the limit $t\to0$ is an
equality-routing matrix. This immediately recovers permutations and
CountSketch-style hashing when anchors are chosen as discrete labels
(Prop.~\ref{prop:countsketch_appendix}). With diagonal scalings and sparse
replication, the same family is universal for arbitrary rectangular dense
matrices (Thm.~\ref{thm:rectangular_universality}). Thus, \our{} should not be viewed as a narrowly specialized matrix class, but rather as a continuous family interpolating between sparse routing structures and dense linear operators, while admitting fast exact computation throughout the smooth Laplace regime.

\subsection{Zero-temperature limit}
In the low-temperature limit, the kernel matrix converges to a binary
incidence matrix encoding the equality structure between the entries of
$a$ and~$b$, as stated in the following lemma.

\begin{lemma}[Zero-temperature limit]
\label{lem:zero_temp_appendix}
Let $a\in\mathbb{R}^m$ and $b\in\mathbb{R}^n$. Then, for every
$i\in\{1,\dots,m\}$ and $j\in\{1,\dots,n\}$,
\[
\lim_{t\to 0}\bigl[\our_t(a,b)\bigr]_{ij}
=
\mathbf{1}\{a_i=b_j\}.
\]
Equivalently, $\our_t(a,b)$ converges entrywise to the matching matrix
$M(a,b)$ defined by
\[
M(a,b)_{ij}=\mathbf{1}\{a_i=b_j\}.
\]
\end{lemma}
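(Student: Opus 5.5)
The plan is to argue entrywise: fix $i$ and $j$, set $\delta=|a_i-b_j|\ge 0$, and study the scalar function $t\mapsto \exp(-\delta/t)$ as $t\to 0^+$. Note that $t>0$ throughout, since the temperature-scaled kernel $\our_t$ is only defined for positive $t$, so the limit is one-sided. The argument then splits into two cases according to whether $\delta$ vanishes.

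\emph{Case $a_i=b_j$.} Here $\delta=0$, so $\exp(-\delta/t)=\exp(0)=1$ for every $t>0$. The limit is therefore $1=\mathbf{1}\{a_i=b_j\}$.

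\emph{Case $a_i\neq b_j$.} Here $\delta>0$, so $-\delta/t\to-\infty$ as $t\to0^+$. Since $\exp$ is continuous and $\exp(s)\to 0$ as $s\to-\infty$, the entry tends to $0=\mathbf{1}\{a_i=b_j\}$. For a quantitative version, given $\varepsilon\in(0,1)$ one may take $t<\delta/\log(1/\varepsilon)$.

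Combining the two cases gives the entrywise limit for every pair $(i,j)$. Because the matrices have finitely many entries, this is exactly entrywise convergence of $\our_t(a,b)$ to $M(a,b)$. The same finiteness also yields convergence in any matrix norm: take $t$ below the minimum of the finitely many thresholds $\delta_{ij}/\log(1/\varepsilon)$ over pairs with $\delta_{ij}>0$.

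There is no real obstacle here. The only point requiring care is the degenerate case $\delta=0$, where the expression $\delta/t$ must not be treated as an indeterminate form. It equals $0$ identically for every $t>0$, which is precisely why the limit matrix is a discrete $0/1$ equality pattern rather than something continuous.
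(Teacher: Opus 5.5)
Your proposal is correct and follows the paper's proof exactly. The paper also fixes an entry, notes that it equals $1$ for every $t>0$ when $a_i=b_j$, and observes that $\exp(-c/t)\to 0$ with $c=|a_i-b_j|>0$ otherwise; your explicit threshold and norm-convergence remark are valid additions the paper does not include.
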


\begin{proof}
If $a_i = b_j$ then $\exp(-|a_i-b_j|/t) = 1$ for every $t > 0$.
If $a_i \neq b_j$ then $|a_i - b_j| = c > 0$ and
$\exp(-c/t) \to 0$ as $t \to 0$.
\end{proof}

\subsection{CountSketch as a special case}
In the low-temperature limit, the kernel recovers a hashing-based
aggregation scheme, as formalized in the following proposition. 

\begin{proposition}[CountSketch realization]
\label{prop:countsketch_appendix}
Let $n$ denote the input dimension and $m$ the sketch dimension,
typically with $m \ll n$. Let $h:\{1,\dots,n\}\to\{1,\dots,m\}$ be a
hash function and let $s\in\{\pm 1\}^{n}$ be a sign vector. Choose
pairwise distinct scalars $c_{1},\dots,c_{m}$ and define
\[
a_{i} = c_{i},
\qquad
b_{j} = c_{h(j)} .
\]
Then for every $x\in\mathbb{R}^{n}$,
\[
\lim_{t\to 0}\our_{t}(a,b)\,(s\odot x) \;=\; y,
\qquad
y_{i} \;=\; \sum_{j:\,h(j)=i} s_{j}\,x_{j}.
\]
That is, CountSketch arises as the zero-temperature limit of the
kernel architecture (with sign multiplication folded into the input).
\end{proposition}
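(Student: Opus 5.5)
The plan is to reduce the statement to Lemma~\ref{lem:zero_temp_appendix} and then compute the limiting matrix--vector product explicitly. Write $\tilde x = s\odot x$, which does not depend on $t$. For each fixed $t>0$ and each row index $i\in\{1,\dots,m\}$,
\[
\bigl[\our_t(a,b)\,\tilde x\bigr]_i \;=\; \sum_{j=1}^{n} \exp\!\left(-\frac{|c_i - c_{h(j)}|}{t}\right) s_j x_j ,
\]
a finite sum of $n$ terms. The limit $t\to 0$ may therefore be taken term by term, with no uniformity or dominated-convergence argument needed.

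By Lemma~\ref{lem:zero_temp_appendix}, each coefficient converges to $\mathbf{1}\{a_i=b_j\}=\mathbf{1}\{c_i=c_{h(j)}\}$. This is the only place where the hypothesis of pairwise distinct $c_1,\dots,c_m$ enters. Because the $c$'s are distinct, $c_i=c_{h(j)}$ holds if and only if $i=h(j)$. Hence the limiting matrix $M(a,b)$ is precisely the $m\times n$ hash incidence matrix with entries $\mathbf{1}\{h(j)=i\}$. Substituting into the row sum gives
\[
\lim_{t\to0}\bigl[\our_t(a,b)\,\tilde x\bigr]_i=\sum_{j:\,h(j)=i} s_j x_j = y_i .
\]
This holds for every $i$, so the limit holds entrywise and, since the dimension is finite, also in any norm.

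There is no real obstacle in this argument. The one point to state carefully is the equivalence $c_i=c_{h(j)}\iff i=h(j)$. Without distinctness, two sketch buckets sharing the same anchor value would merge, and the limit would be a coarser sum rather than CountSketch. I would also remark that the signs are applied to the input, not through the anchors, because the Laplace kernel is nonnegative and cannot produce negative routing weights.
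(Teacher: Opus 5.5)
Your proof is correct and follows the same route as the paper: you apply Lemma~\ref{lem:zero_temp_appendix} entrywise, use the distinctness of the $c_r$ to identify the limit matrix as $\mathbf{1}\{i=h(j)\}$, and then sum over $j$. You also state explicitly that the sum is finite, so the limit can be taken term by term; the paper leaves that step implicit.
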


\begin{proof}
By Lemma~\ref{lem:zero_temp_appendix} and the fact that
$\{c_{r}\}$ are distinct,
$\lim_{t\to 0}[\our_{t}(a,b)]_{ij} = \mathbf{1}\{i = h(j)\}$.
For the $i$-th output coordinate,
$\sum_{j} \mathbf{1}\{i=h(j)\} s_{j} x_{j} = \sum_{j:\,h(j)=i} s_{j} x_{j}$,
which is precisely CountSketch.
\end{proof}

\subsection{Universality for rectangular matrices}

\begin{theorem}[Rectangular universality]
\label{thm:rectangular_universality}
Let $W\in\mathbb{R}^{m\times n}$ be arbitrary.  There exist a
replication operator $R_m:\mathbb{R}^{n}\!\to\!\mathbb{R}^{mn}$, a
diagonal matrix $D_{W}\in\mathbb{R}^{mn\times mn}$, and vectors
$a\in\mathbb{R}^{m}$, $b\in\mathbb{R}^{mn}$ such that
\[
W \;=\; \Bigl(\lim_{t\to 0}\our_{t}(a,b)\Bigr)\, D_{W}\, R_{m}.
\]
The kernel architecture is therefore universal for all linear maps
$\mathbb{R}^{n}\!\to\!\mathbb{R}^{m}$.
\end{theorem}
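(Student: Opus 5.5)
The plan is an explicit construction. Replicate the input $m$ times, rescale each copy by the entries of one row of $W$, and then use the zero-temperature limit of Lemma~\ref{lem:zero_temp_appendix} to route each block to its output coordinate. Index $\mathbb{R}^{mn}$ by pairs $(r,j)$ with $r\in\{1,\dots,m\}$ and $j\in\{1,\dots,n\}$, enumerated block-wise as position $(r-1)n+j$.

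We first define the three ingredients.
\begin{itemize}
\item The replication operator is $R_m x = (x,x,\dots,x)$, i.e.\ $(R_m x)_{(r,j)}=x_j$. As a matrix, $R_m$ is the stack of $m$ copies of $I_n$.
\item The diagonal matrix is $D_W=\operatorname{diag}\bigl(W_{rj}\bigr)_{(r,j)}$, so that $(D_W R_m x)_{(r,j)}=W_{rj}x_j$.
\item For the anchors, pick pairwise distinct scalars $c_1,\dots,c_m$ (for instance $c_r=r$). Set $a_i=c_i$ and $b_{(r,j)}=c_r$, so every coordinate in block $r$ carries the label of output $r$. This is precisely the CountSketch anchor pattern of Prop.~\ref{prop:countsketch_appendix}, with hash $h(r,j)=r$.
\end{itemize}

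Next we take the limit. By Lemma~\ref{lem:zero_temp_appendix}, $\lim_{t\to0}[\our_t(a,b)]_{i,(r,j)}=\mathbf{1}\{c_i=c_r\}=\mathbf{1}\{i=r\}$, where the last equality uses distinctness of the $c$'s. Multiplying out gives
\[
\Bigl(\bigl(\lim_{t\to0}\our_t(a,b)\bigr)D_W R_m x\Bigr)_i=\sum_{r,j}\mathbf{1}\{i=r\}\,W_{rj}x_j=\sum_j W_{ij}x_j=(Wx)_i .
\]
Since this holds for every $x$, the matrix identity follows. The limit is a finite entrywise limit of a fixed-size matrix, so it commutes with the matrix multiplication.

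No step is a real obstacle. The only points needing care are the bookkeeping of the block indexing and the distinctness of the anchors, which is what makes the limiting matching matrix exactly the block selector $[I_m\otimes \mathbf{1}_n^\top]$. Without distinctness, distinct rows would merge. We would also remark that $D_W$ is allowed to have zero or negative entries, so no positivity is needed.
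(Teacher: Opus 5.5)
Your proposal is correct and is essentially the paper's proof. The paper uses the same replication $(R_m x)_{(i,j)}=x_j$, the same diagonal $(D_W)_{(i,j),(i,j)}=W_{ij}$, and the same anchors $a_i=c_i$, $b_{(i,j)}=c_i$ with pairwise distinct $c_i$, then applies the zero-temperature lemma to get the selector $\mathbf{1}\{r=i\}$ and sums to $(Wx)_r$; your remarks on block indexing, distinctness and signs of $D_W$ are accurate additions (the limit-interchange remark is harmless but not needed, since the statement takes the limit before multiplying).
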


\begin{proof}
Choose pairwise distinct scalars $c_{1},\dots,c_{m}$ and set
$a_i = c_i$. Index replicas by pairs $(i,j)$ with
$i\in\{1,\dots,m\}$ and $j\in\{1,\dots,n\}$, and define
\[
(R_m x)_{(i,j)} = x_j, \qquad
(D_W)_{(i,j),(i,j)} = W_{ij}, \qquad
b_{(i,j)} = c_i .
\]
By Lemma~\ref{lem:zero_temp_appendix},
\[
\lim_{t\to 0} [\our_t(a,b)]_{r,(i,j)}
= \mathbf{1}\{c_r = c_i\}
= \mathbf{1}\{r = i\}.
\]
Thus the kernel groups all replicas with the same first index $i$, and
the $r$-th output coordinate becomes
\[
\sum_{i,j} \mathbf{1}\{r=i\}\,W_{ij}\,x_j
= \sum_{j} W_{rj}\,x_j
= (W x)_r.
\]
\end{proof}

The proof shows that no \emph{output} aggregation operator is needed:
the kernel itself sums all replicas with the same target index.

\subsection{Interpretation}

The two results above sketch a continuous bridge between smooth
structured kernels at positive temperature and discrete combinatorial
routing in the limit $t \to 0$.  At low temperature, $\our_{t}(a,b)$
acts as a sparse hash table determined by equality patterns --
recovering CountSketch and arbitrary discrete routings as special
cases.  Adding replication and diagonal scaling makes the family
universal for arbitrary rectangular matrices.  Computationally, all
the building blocks remain cheap: diagonal maps and replications cost
$\mathcal{O}(n)$, and $\our_{t}(a,b)$ admits the fast multiplication
algorithm of App.~\ref{app:matvec-proofs}.  A composition of a few
such layers is therefore both highly expressive and asymptotically
efficient.

%
%

\section{Fast matrix--vector multiplication: proofs and complexity}
\label{app:matvec-proofs}

This section proves the two reduction theorems announced in
Sec.~\ref{sec:theory} and gives the exact complexity in both regimes.
Everything rests on one primitive: an exponentially weighted prefix/suffix
scan over a sorted vector. We describe it first in
Sec.~\ref{app:matvec:symmetric}, then handle the two dispatch branches:
Sec.~\ref{app:matvec:short-a} reduces the general
$\mathbb{R}^{n}\!\to\!\mathbb{R}^{k}$ matvec to that primitive when
$n \le k$ (sort and scan along the shorter $a$-axis),
and Sec.~\ref{app:matvec:short-b} treats the dual case $n \ge k$
(sort and scan along the shorter $b$-axis).  The combined complexity
is summarized in Sec.~\ref{app:matvec:complexity}.

\subsection{Symmetric case: prefix-scan reduction}
\label{app:matvec:symmetric}

Let $a\in\mathbb{R}^{n}$ with $a_1 \le \cdots \le a_n$ and write
\[
A \;=\; \our(a) \;=\; \bigl[\exp(-|a_i - a_j|)\bigr]_{ij}.
\]
For $i \ge j$, $A_{ij} = \exp(a_j - a_i)$; for $i < j$,
$A_{ij} = \exp(a_i - a_j)$.  Decompose
$Ax = \mathrm{tril}(A)\,x + \mathrm{triu}(A)\,x$ and consider the
lower triangular part
\[
y_i \;=\; \sum_{j \le i}\exp(a_j - a_i)\,x_j
   \;=\; \exp(-a_i)\sum_{j \le i}\exp(a_j)\,x_j .
\]
The inner sum $s_i = \sum_{j \le i}\exp(a_j)\,x_j$ satisfies the
recurrence $s_1 = \exp(a_1)\,x_1$, $s_i = s_{i-1} + \exp(a_i)\,x_i$.
This recurrence is associative, hence \emph{parallel}: a prefix-scan
computes $(s_i)$ in $\mathcal{O}(n)$ work and $\mathcal{O}(\log n)$
depth, after which $y_i = \exp(-a_i)\,s_i$ is a pointwise rescaling.
The above scan-based formulation can be implemented as an associative operators on pairs $(A,B)$ using the binary operator
\[
(A_1, B_1) \oplus (A_2, B_2)
=
\left(
A_1 + A_2,\;
\exp(A_2)\, B_1 + B_2
\right),
\]
which is compatible with parallel prefix-scan.

While PyTorch does not provide a native parallel prefix-scan primitive analogous to \texttt{associative\_scan}, the prefix computation can be implemented efficiently using \texttt{logcumsumexp} in the log-domain.
Starting from
\[
y_i = \exp(-a_i) \sum_{j \leq i} \exp(a_j) x_j,
\]
we observe that the sum can be stably computed via a cumulative complex log-sum-exp. 

We compute prefix values via
\[
\log s_i
= \texttt{logcumsumexp}_{j \le i}\!\bigl(a_j + \log x_j\bigr).
\]

The final result is then obtained as
\[
y_i = \exp(-a_i + \log s_i).
\]
This approach replaces the explicit prefix-sum with a numerically stable cumulative log-sum-exp, yielding a fully vectorized and GPU-efficient implementation within the standard PyTorch API.

\subsection{General case via the shorter axis ($n \le k$)}
\label{app:matvec:short-a}

We now prove the main-text Theorem on the
$\mathbb{R}^{n}\!\to\!\mathbb{R}^{k}$ matvec with the kernel
$A = \our(a, b) \in \mathbb{R}^{n \times k}$ in the regime
$n \le k$, and locate its complexity.

\begin{theorem}[Reduction via the $a$-axis]
\label{thm:app-short-a}
Assume $a_1 \leq \cdots \leq a_n$ and extend with $a_0 = -\infty$, $a_{n+1} = +\infty$.
For $i = 1, \ldots, n+1$ define the intervals
\[
P_i = [a_{i-1}, a_i).
\]
Given $b, x \in \mathbb{R}^k$, define
$u, v \in \mathbb{R}^{n}$ by:
\[
u_i = \sum_{j : b_j \in P_i} x_j\, e^{\,a_i - b_j},
\qquad
v_i = \sum_{j : b_j \in P_i} x_j\, e^{\,b_j - a_i}.
\]
Let $K = \our(a) \in \mathbb{R}^{n \times n}$ be the kernel matrix with entries
$K_{ij} = e^{-|a_i - a_j|}$. Then the product
$y = \our(a,b)\, x$ satisfies
\[
y \;=\; \mathrm{triu}(K)\,u \;+\; \mathrm{tril}(K)\,v,
\]
where $\mathrm{triu}(K)$ is the strictly upper-triangular part of $K$ and
$\mathrm{tril}(K)$ is the lower-triangular part including the diagonal.
\end{theorem}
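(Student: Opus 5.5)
The identity is linear in $x$, so I would prove it column by column. Fix an index $j$, take $x=e_j$, and check that both sides give the same column. Write $y=\our(a,b)x$, so that $y_r=\sum_{j} x_j e^{-|a_r-b_j|}$. Each $b_j$ lies in exactly one interval $P_i=[a_{i-1},a_i)$, so the sums defining $u$ and $v$ partition the index set $\{1,\dots,k\}$. It therefore suffices to show that, for a single $b_j\in P_i$, the quantity $x_j e^{-|a_r-b_j|}$ equals the $r$-th entry of $\triu(K)$ applied to $x_j e^{a_i-b_j}e_i$ plus $\tril(K)$ applied to $x_j e^{b_j-a_i}e_i$. Summing over $j$ then gives the theorem.

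The core step is a case split on $r$ relative to $i$, using the ordering $a_1\le\dots\le a_n$.

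\emph{Case $r\ge i$.} Here $a_r\ge a_i>b_j$, so
\[
e^{-|a_r-b_j|}=e^{-(a_r-a_i)}\,e^{-(a_i-b_j)}=K_{ri}\,e^{b_j-a_i}.
\]
This is the $(r,i)$ entry of $\tril(K)$, diagonal included, multiplying $v_i$. The entry $\triu(K)_{ri}$ vanishes in this case.

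\emph{Case $r\le i-1$.} Here $a_r\le a_{i-1}\le b_j$, so
\[
e^{-|a_r-b_j|}=e^{-(a_i-a_r)}\,e^{a_i-b_j}=K_{ri}\,e^{a_i-b_j}.
\]
This is the strictly upper entry $\triu(K)_{ri}$ multiplying $u_i$, and $\tril(K)_{ri}=0$.

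The half-open convention handles ties correctly. If $b_j=a_{i-1}$, the row $r=i-1$ still falls in the second case with exponent $0$.

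The one place I expect trouble is the boundary. For the first interval $P_1=(-\infty,a_1)$, only the case $r\ge 1$ occurs, and the argument goes through. The last interval $P_{n+1}=[a_n,+\infty)$ does not fit: $u,v\in\R^n$ have no $(n+1)$-th slot, and $a_{n+1}=+\infty$ cannot serve as a pivot. For $b_j\ge a_n$ one has instead
\[
e^{-|a_r-b_j|}=K_{rn}\,e^{a_n-b_j}\quad\text{for every } r\le n.
\]
So these terms contribute the full $n$-th column of $K$, diagonal included, times $\sum_{b_j\ge a_n}x_j e^{a_n-b_j}$. That term is not captured by $\triu(K)u$.

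I would therefore state and prove the identity under one of two conventions. The first augments $u$ with this last bucket using the pivot $a_n$ and the non-strict triangle in column $n$. The second adds the explicit rank-one correction $K e_n\sum_{b_j\ge a_n}x_je^{a_n-b_j}$. Either one is the natural reading of the statement, and verifying it is the same one-line computation as above.

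The complexity claim follows from two observations. The bucket indices come from a binary search, or a merge if $b$ is sorted, and $u,v$ are then accumulated in $\mathcal{O}(k)$ work. The products $\triu(K)u$ and $\tril(K)v$ are exactly the reverse and forward exponential scans of the symmetric case in Sec.~\ref{app:matvec:symmetric}.
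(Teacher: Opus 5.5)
Your argument is correct and uses the same case split as the paper's proof, only organized by columns rather than by rows: you fix the bucket $P_i$ containing $b_j$ and vary the output row $r$, while the paper fixes the output index and splits the buckets into $r\le i$ and $r>i$. Your boundary objection is also a real defect, not a matter of convention: the statement as written is false whenever some $b_j\ge a_n$ has $x_j\neq 0$ (e.g.\ $n=k=1$, $a_1=0$, $b_1=1$, $x_1=1$ gives $y_1=e^{-1}$ while $u=v=0$), and the paper's proof skips exactly this point by placing the bucket $r=n+1$ in the right-hand sum and then equating $\sum_{r=i+1}^{n+1}e^{-(a_r-a_i)}u_r$ (whose last term is a formal $0\cdot\infty$ with no entry of $u\in\R^n$ behind it) with $(\triu(K)u)_i$, so your rank-one correction $Ke_n\sum_{b_j\ge a_n}x_je^{a_n-b_j}$ is the right repair.
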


\begin{proof}
By definition,
\[
y_i = \sum_{j=1}^{k} e^{-|a_i - b_j|}\, x_j
    = \sum_{r=1}^{n+1}\; \sum_{j : b_j \in P_r} e^{-|a_i - b_j|}\, x_j.
\]
We split the outer sum at $r = i$ and analyze the two regions separately.

\emph{Region $r > i$ (right of $a_i$).}
For $b_j \in P_r$ with $r > i$ we have $b_j \ge a_{r-1} \ge a_i$, hence
$|a_i - b_j| = b_j - a_i$. Therefore
\[
e^{-|a_i - b_j|} = e^{a_i - b_j} = e^{a_i - a_r}\, e^{a_r - b_j}.
\]
Summing over all $j$ with $b_j \in P_r$ yields $e^{-(a_r - a_i)}\, u_r$.
Summing over all $r > i$ gives
\[
\sum_{r=i+1}^{n+1} e^{-(a_r - a_i)}\, u_r
= \sum_{r > i} K_{ir}\, u_r
= \bigl(\mathrm{triu}(K)\,u\bigr)_i.
\]

\emph{Region $r \le i$ (left of or at $a_i$).}
For $b_j \in P_r$ with $r \le i$ we have $b_j < a_r \le a_i$, hence
$|a_i - b_j| = a_i - b_j$. Therefore
\[
e^{-|a_i - b_j|} = e^{b_j - a_i} = e^{-(a_i - a_r)}\, e^{b_j - a_r}.
\]
Summing over all $j$ with $b_j \in P_r$ yields $e^{-(a_i - a_r)}\, v_r$.
Summing over all $r \le i$ gives
\[
\sum_{r=1}^{i} e^{-(a_i - a_r)}\, v_r
= \sum_{r \le i} K_{ir}\, v_r
= \bigl(\mathrm{tril}(K)\,v\bigr)_i.
\]

Adding the two contributions yields $y_i = (\mathrm{triu}(K)u)_i + (\mathrm{tril}(K)v)_i$,
which completes the proof.
\end{proof}

\paragraph{Complexity, case $n \le k$.}
Sorting $a$ costs $\mathcal{O}(n\log n)$.  Each $b_j$ is placed in its
bucket $P_r$ via binary search, costing
$\mathcal{O}(k\log n)$ in total.  Forming $u$ and $v$ from the
bucketed contributions is $\mathcal{O}(k)$ pointwise work.  The two
triangular kernel multiplications are $\mathcal{O}(n)$ each by
Sec.~\ref{app:matvec:symmetric}.  In total, we have
\[
\mathcal{O}(n\log n)\;+\;\mathcal{O}(k\log n)\;+\;\mathcal{O}(k)\;+\;\mathcal{O}(n)
\;=\;
\mathcal{O}(k\log n) \quad\text{when } n \le k.
\]
If the sort of $a$ can be cached across multiple matvecs (as in
attention or training, where the same anchors are reused over many
batches), it amortises away and the per-call cost drops to
$\mathcal{O}(n + k\log n) = \mathcal{O}(k\log n)$.

\subsection{Dual reduction via the shorter axis ($n \ge k$)}
\label{app:matvec:short-b}

In the regime $n \ge k$ the symmetric construction operates on
$b$ rather than $a$: we sort $b$, scan the symmetric kernel
$\our(b)$ over $b$, and propagate the result to each $a_i$ by a
binary search and a pair of pointwise products.

\begin{theorem}[Dual reduction via the $b$-axis]
\label{thm:app-short-b}
Assume $b_1 \le \cdots \le b_k$ and let $K = \our(b)$.  Define
\[
u \;=\; \mathrm{tril}(K)\,x,
\qquad
v \;=\; \mathrm{triu}(K)\,x .
\]
For each $i$, set $j_i = \max\{j : b_j \le a_i\}$, with $j_i = 0$ if
no such $j$ exists.  Then $y = \our(a, b)\, x$ satisfies
\[
y_i
\;=\;
\begin{cases}\exp(b_{j_i} - a_i)\, u_{j_i} & j_i \ge 1,\\ 0 & \text{otherwise},\end{cases}
\;+\;
\begin{cases}\exp(a_i - b_{j_i+1})\, v_{j_i+1} & j_i < k,\\ 0 & \text{otherwise}.\end{cases}
\]
\end{theorem}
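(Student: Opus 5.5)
We prove the formula by splitting each output coordinate according to where the $b$-anchors sit relative to $a_i$. This mirrors the proof of Theorem~\ref{thm:app-short-a}, with the roles of the two axes exchanged.

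Fix $i$ and write
\[
y_i=\sum_{j=1}^k e^{-|a_i-b_j|}x_j=\sum_{j\le j_i} e^{-|a_i-b_j|}x_j+\sum_{j> j_i} e^{-|a_i-b_j|}x_j .
\]
Because $b$ is sorted and $j_i=\max\{j: b_j\le a_i\}$, every index $j\le j_i$ satisfies $b_j\le a_i$, and every index $j>j_i$ satisfies $b_j>a_i$. The absolute values therefore resolve to $a_i-b_j$ in the first sum and $b_j-a_i$ in the second.

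\emph{Left sum.} Assume $j_i\ge1$ and factor through the pivot $b_{j_i}$:
\[
e^{b_j-a_i}=e^{b_{j_i}-a_i}\,e^{b_j-b_{j_i}} .
\]
For $j\le j_i$ we have $b_j\le b_{j_i}$, so $e^{b_j-b_{j_i}}=e^{-|b_{j_i}-b_j|}=K_{j_i j}$. Summing over $j\le j_i$ then gives $e^{b_{j_i}-a_i}\bigl(\mathrm{tril}(K)x\bigr)_{j_i}=e^{b_{j_i}-a_i}u_{j_i}$. If $j_i=0$ the sum is empty and contributes $0$.

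\emph{Right sum.} Assume $j_i<k$ and factor through $b_{j_i+1}$:
\[
e^{a_i-b_j}=e^{a_i-b_{j_i+1}}\,e^{b_{j_i+1}-b_j} .
\]
For $j\ge j_i+1$ we have $b_{j_i+1}\le b_j$, so the second factor equals $K_{j_i+1,j}$. Summing yields $e^{a_i-b_{j_i+1}}\bigl(\mathrm{triu}(K)x\bigr)_{j_i+1}$. If $j_i=k$ the sum is empty and contributes $0$.

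\emph{Main obstacle: matching the index ranges to the triangular conventions.} This is where care is needed.
\begin{itemize}
\item For the left sum we need $\mathrm{tril}(K)$ to include the diagonal, so that the term $j=j_i$ is counted. This agrees with the convention fixed in Theorem~\ref{thm:app-short-a}.
\item For the right sum we need the row $j_i+1$ to range over $j\ge j_i+1$, diagonal included. Under the strictly upper-triangular convention of Theorem~\ref{thm:app-short-a}, however, $(\mathrm{triu}(K)x)_{j_i+1}$ covers only $j>j_i+1$ and misses the $j=j_i+1$ term.
\end{itemize}
I would therefore state explicitly that in this theorem $v=\mathrm{triu}(K)x$ is understood to include the diagonal, which is the reading under which the claimed formula is exact. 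Equivalently, under the strict convention one can pivot the right sum at $b_{j_i}$ and add the missing diagonal term $e^{a_i-b_{j_i+1}}x_{j_i+1}$ separately.

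\emph{Ties.} Ties $b_j=b_{j'}$ cause no problem, since the argument uses only $b_j\le b_{j_i}$ for $j\le j_i$ and $b_j\ge b_{j_i+1}$ for $j>j_i$, both of which follow from the sorting and from $j_i$ being a maximum. Adding the two cases completes the identity for $y_i$.
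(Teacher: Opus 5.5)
Your argument is correct and is essentially the paper's proof: you split $y_i$ at $j_i$, resolve the absolute values using sortedness, and read off the two pieces as row $j_i$ of $\mathrm{tril}(K)x$ and row $j_i+1$ of $\mathrm{triu}(K)x$ (the paper factors out $e^{\mp a_i}$ and rescales prefix/suffix sums instead of pivoting directly, but it is the same computation). Your diagonal-inclusive reading of $\mathrm{triu}$ is exactly what the paper's proof uses, since it writes $v_j=\sum_{l\ge j}\exp(b_j-b_l)\,x_l$; only your side remark is slightly off, because pivoting at $b_{j_i}$ with the strict $\mathrm{triu}$ already gives the whole right sum (when $j_i\ge1$), whereas the correction term $e^{a_i-b_{j_i+1}}x_{j_i+1}$ is needed only with the pivot at $b_{j_i+1}$.
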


\begin{proof}
Fix $i$ and split the sum at the threshold $j_i$
\[
y_i \;=\; \sum_{j \le j_i}\exp(b_j - a_i)\,x_j \;+\; \sum_{j > j_i}\exp(a_i - b_j)\,x_j.
\]
Factoring out the dependence on $a_i$ we have
\[
y_i
\;=\;
\exp(-a_i)\sum_{j \le j_i}\exp(b_j)\,x_j
\;+\;
\exp(a_i)\sum_{j > j_i}\exp(-b_j)\,x_j.
\]

The triangular kernel applied to $b$ yields
\[
u_j = \sum_{l \le j} \exp(b_l - b_j)\, x_l,
\qquad
v_j = \sum_{l \ge j} \exp(b_j - b_l)\, x_l,
\]
what, after multiplying both sides by $\exp(b_j)$ and  $\exp(-b_j)$ gives
\[
\sum_{l \le j} \exp(b_l)\, x_l = \exp(b_j)\, u_j,
\qquad
\sum_{l \ge j} \exp(-b_l)\, x_l = \exp(-b_j)\, v_j.
\]
Evaluating these identities at $j = j_i$ and $j = j_i + 1$
gives the desired formula.
\end{proof}

\paragraph{Complexity, case $n \ge k$.}
Sort of $b$ costs $\mathcal{O}(k\log k)$; the two triangular kernel
multiplications cost $\mathcal{O}(k)$ via the symmetric scan.  Each
of the $n$ split points $j_i$ is found by binary search in
$\mathcal{O}(\log k)$, giving $\mathcal{O}(n\log k)$ total.  Final
evaluation of $y$ is $\mathcal{O}(n)$.  Adding the components,
\[
\mathcal{O}(k\log k) \;+\; \mathcal{O}(k) \;+\; \mathcal{O}(n\log k) \;+\; \mathcal{O}(n)
\;=\;
\mathcal{O}(n\log k) \quad\text{when }n \ge k.
\]
As before, the sort of $b$ amortises away when the same anchors are
reused.

\subsection{Combined complexity}
\label{app:matvec:complexity}

Combining both branches, the implementation dispatches automatically
on $\min(n, k)$:
\[
\boxed{\quad
T_{\our}(n, k)
\;=\;
\mathcal{O}\bigl(\max(n, k)\,\log\min(n, k)\bigr)
\quad}
\]
for a \emph{single} matrix--vector product, against the
$\mathcal{O}(n k)$ of the dense baseline.  In a batch of $B$ vectors
sharing the same anchors, the sort and the binary-search results
are reused across the batch, and the per-vector cost drops to
\[
T_{\our}(n, k, B)
\;=\;
\mathcal{O}\bigl(B\,\max(n, k)\,\log\min(n, k)\bigr)
\;+\;
\mathcal{O}\bigl(\max(n, k)\log\min(n, k)\bigr).
\]
Memory usage is $\mathcal{O}(B(n + k))$ for the working tensors, with
no $n\!\times\!k$ matrix ever materialized.

%
%

\section{Fast weighted Gram matrix: proof and complexity}
\label{app:gram-proof}

The matrix--vector reductions explain how to apply a single Laplace
operator.  The weighted Gram result is the next step: it shows that the
same one-dimensional ordering also controls the second-order object that
appears in kernel methods, curvature approximations, and Woodbury
likelihoods.  We prove the main-text theorem for
\[
M \;=\; A\,\mathrm{diag}(D)\,A^{\!\top},
\qquad
A_{ij} \;=\; \exp(-|a_i - b_j|),
\qquad D \in \mathbb{R}^{k}_{+},
\]
in $\mathcal{O}(n^{2} + k\log n)$ rather than the
$\mathcal{O}(n^{2} k)$ of the naive matmul (see Fig.~\ref{fig:gram_decomposition}). 

\begin{figure}
    \centering
    \def\svgwidth{\linewidth}
    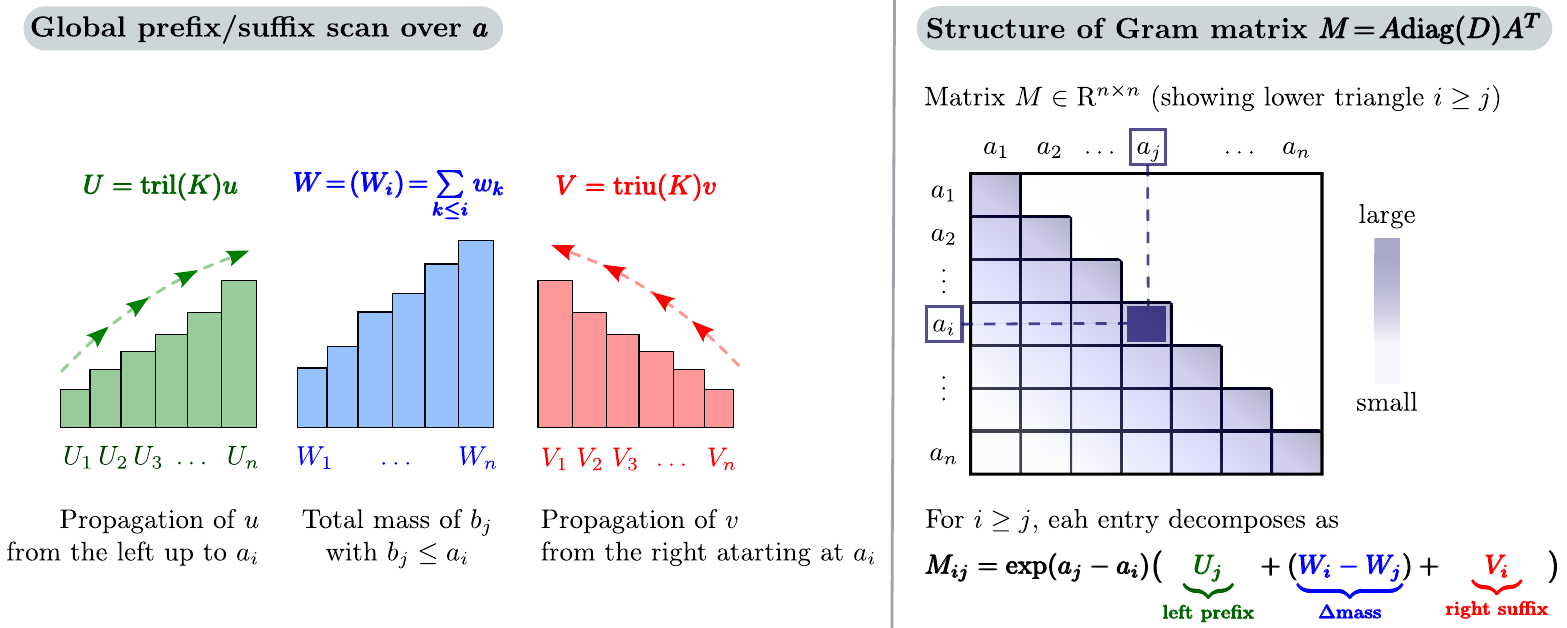
    \caption{Structure of the exact Gram matrix $M = A \operatorname{diag}(D) A^\top$ for the Laplace kernel. Each entry $M_{ij}$ (shown for $i \ge j$) admits a decomposition into three one-dimensional aggregated contributions: a left prefix term $U_j$ capturing accumulated influence from indices up to $a_j$, an interval mass $(W_i - W_j)$ corresponding to a prefix-suffix decomposition over the interval $[a_j, a_i]$, and a right suffix term $V_i$ encoding contributions propagated from the right. All components are modulated by the exponential envelope $\exp(a_j - a_i)$, reflecting the ordering-induced decay structure of the Laplace kernel. The dense Gram matrix can be exactly represented using prefix- and suffix-scan statistics over a single sorted coordinate axis, rather than full pairwise interactions.}
    \label{fig:gram_decomposition}
\end{figure}

\begin{theorem}[Exact Gram decomposition]
\label{thm:app-gram}
Let $a_1 \le \cdots \le a_n$, and set $a_0=-\infty$, $a_{n+1}=+\infty$.
Partition $\mathbb{R}$ into intervals
\[
P_r := [a_{r-1}, a_r), \qquad r=1,\dots,n.
\]
Given $b\in\mathbb{R}^k$ and weights $D\in\mathbb{R}^k$, define
$u,v,w \in \mathbb{R}^n$ by
\[
u_i := \sum_{b_t \in P_i} e^{2(b_t - a_i)}\,D_t,
\qquad
v_i := \sum_{b_t \in P_i} e^{2(a_i - b_t)}\,D_t,
\qquad
w_i := \sum_{b_t \in P_i} D_t.
\]
Let $W_i := \sum_{r \le i} w_r$ be the cumulative weights.
Define the kernel matrix $K \in \mathbb{R}^{n\times n}$ by
$K_{ij} := e^{-2|a_i-a_j|}$, and let $A :=\our(a,b)$. Set additionally
\[
U := \operatorname{tril}(K)\,u,
\qquad
V := \operatorname{triu}(K)\,v.
\]
Then the Gram matrix $M := A\,\operatorname{diag}(D)\,A^\top$
satisfies, for all $i \ge j$,
\[
M_{ij}
=
e^{a_j-a_i}\,\bigl( U_j + (W_i - W_j) + V_i \bigr),
\]
with symmetry $M_{ji}=M_{ij}$.
\end{theorem}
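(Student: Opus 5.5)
The plan is to expand each entry directly and sort the intermediate anchors $b_t$ into the buckets $P_r$, exactly as in the proof of Theorem~\ref{thm:app-short-a}. For $i\ge j$ we have $a_j\le a_i$, and
\[
M_{ij}=\sum_{t=1}^k D_t\,e^{-|a_i-b_t|}\,e^{-|a_j-b_t|}.
\]
The key observation is that, relative to the two points $a_j\le a_i$, each $b_t$ lies in one of three regions. In each region the product of the two exponentials factors as $e^{a_j-a_i}$ times a quantity that depends only on $b_t$ and a single anchor:
\begin{itemize}
\item if $b_t<a_j$, the product equals $e^{a_j-a_i}\,e^{2(b_t-a_j)}$;
\item if $a_j\le b_t<a_i$, the product equals exactly $e^{a_j-a_i}$;
\item if $b_t\ge a_i$, the product equals $e^{a_j-a_i}\,e^{2(a_i-b_t)}$.
\end{itemize}
Boundary ties such as $b_t=a_j$ need no special care, since the neighbouring formulas agree there. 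I would therefore factor out $e^{a_j-a_i}$ at the start and prove separately that the three bracketed sums equal $U_j$, $W_i-W_j$ and $V_i$.

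\emph{Left term.} Buckets $P_r$ with $r\le j$ contain exactly the $b_t<a_j$. Inside $P_r$ I write
\[
e^{2(b_t-a_j)}=e^{-2(a_j-a_r)}\,e^{2(b_t-a_r)},
\]
so summing over the bucket gives $K_{jr}u_r$. Summing over $r\le j$ gives $(\tril(K)u)_j=U_j$, with the diagonal included as in Theorem~\ref{thm:app-short-a}.

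\emph{Middle term.} Buckets with $j<r\le i$ cover exactly $[a_j,a_i)$. Their total weight is $\sum_{j<r\le i}w_r=W_i-W_j$.

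\emph{Right term.} Buckets with $r>i$ contain exactly the $b_t\ge a_i$. Inside $P_r$ I write
\[
e^{2(a_i-b_t)}=e^{-2(a_r-a_i)}\,e^{2(a_r-b_t)},
\]
so summing over the bucket gives $K_{ir}v_r$. Summing over $r>i$ gives $(\triu(K)v)_i=V_i$, using the strictly upper part as in the paper's convention. Adding the three terms gives the stated formula, and the case $i<j$ follows from the symmetry of $M$.

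I expect the main obstacle to be boundary bookkeeping rather than any estimate. The right term needs the final ray $[a_n,\infty)$, i.e.\ $P_{n+1}$, which is not among the $n$ buckets in the statement, and with $a_{n+1}=+\infty$ the factor $e^{2(a_r-b_t)}$ is not defined there. I would handle this by anchoring the last bucket at $a_n$: for $b_t\ge a_n$ and any $i\le n$, $e^{2(a_i-b_t)}=K_{in}\,e^{2(a_n-b_t)}$. These points can thus be folded into an extra summand $\tilde v_n=\sum_{b_t\ge a_n}e^{2(a_n-b_t)}D_t$ entering with the inclusive diagonal. I would first check whether the intended reading is this convention, namely ``$P_n$ closed on the right and extended to infinity'' for $v$ only, and state it explicitly so that the identity is exact. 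Symmetrically, $P_1=(-\infty,a_1)$ causes no trouble, because for the left term the anchor used is $a_r$ with $r\ge1$.
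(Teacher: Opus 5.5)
Your proposal is correct and follows the paper's proof. Both split the $b_t$ into three regions relative to $a_j\le a_i$ and resum each region bucketwise into $U_j$, $W_i-W_j$ and $V_i$. Your bucket-aligned regions $b_t<a_j$, $a_j\le b_t<a_i$, $b_t\ge a_i$ are tidier than the paper's $b_t\le a_j$, $a_j<b_t\le a_i$, $b_t>a_i$, which do not match the half-open $P_r$, although, as you note, ties are harmless.

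Your boundary worry is a genuine defect of the statement, which the paper's proof passes over when it identifies $\{b_t>a_i\}$ with $\bigcup_{r>i}P_r$. With only $P_1,\dots,P_n$ the ray $[a_n,\infty)$ is lost: for $n=k=1$, $a_1=0$, $b_1=1$, $D_1=1$ one has $M_{11}=e^{-2}$, but the formula gives $0$. Your repair $V_i\mapsto V_i+K_{in}\tilde v_n$ for all $i\le n$ is the right one, provided it stays a separate term and is not merged into $v_n$. The points of $P_n$ must enter $V_i$ only for $i<n$, whereas the ray must also enter at $i=n$.
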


\begin{proof}
By definition, $M_{ij} = \sum_{t=1}^{N} D_t \, e^{-|a_i - b_t|} \, e^{-|a_j - b_t|}$, and 
assume $i \ge j$. We partition the inner sum into three disjoint regions according to the position of $b_t$ relative to $a_j$ and $a_i$.

\emph{Left region ($b_t \le a_j$).} 
Here $b_t \le a_j \le a_i$, so $|a_i - b_t| = a_i - b_t$ and $|a_j - b_t| = a_j - b_t$. Hence
\[
e^{-|a_i - b_t|} e^{-|a_j - b_t|} 
= e^{-(a_i - b_t)} e^{-(a_j - b_t)} 
= e^{-(a_i + a_j)} e^{2b_t}.
\]
Summing over all $b_t \le a_j$ (i.e., over intervals $P_r$ with $r \le j$) yields
\[
e^{-(a_i + a_j)} \sum_{r \le j} \sum_{b_t \in P_r} D_t e^{2b_t}
= e^{-(a_i + a_j)} \sum_{r \le j} e^{2a_r} u_r.
\]
By definition, $U_j = \sum_{r \le j} K_{jr} u_r = \sum_{r \le j} e^{-2(a_j - a_r)} u_r = e^{-2a_j} \sum_{r \le j} e^{2a_r} u_r$. 
Thus the left contribution simplifies to $e^{a_j - a_i} U_j$.

\emph{Middle region ($a_j < b_t \le a_i$).} 
Here $|a_i - b_t| = a_i - b_t$ and $|a_j - b_t| = b_t - a_j$. The product becomes
\[
e^{-(a_i - b_t)} e^{-(b_t - a_j)} = e^{-(a_i - a_j)} = e^{a_j - a_i},
\]
which is independent of $b_t$. Summing over this region gives
\[
e^{a_j - a_i} \sum_{a_j < b_t \le a_i} D_t = e^{a_j - a_i} (W_i - W_j).
\]

\emph{Right region ($b_t > a_i$).} 
Here $b_t > a_i \ge a_j$, so $|a_i - b_t| = b_t - a_i$ and $|a_j - b_t| = b_t - a_j$. Hence
\[
e^{-|a_i - b_t|} e^{-|a_j - b_t|} 
= e^{-(b_t - a_i)} e^{-(b_t - a_j)} 
= e^{a_i + a_j} e^{-2b_t}.
\]
Summing over all $b_t > a_i$ (intervals $P_r$ with $r > i$) yields
\[
e^{a_i + a_j} \sum_{r > i} \sum_{b_t \in P_r} D_t e^{-2b_t}
= e^{a_i + a_j} \sum_{r > i} e^{-2a_r} v_r.
\]
Using $V_i = \sum_{r > i} K_{ir} v_r = \sum_{r > i} e^{-2(a_r - a_i)} v_r = e^{2a_i} \sum_{r > i} e^{-2a_r} v_r$, 
the right contribution simplifies to $e^{a_j - a_i} V_i$.

Adding the three contributions and factoring out $e^{a_j - a_i}$ yields the claimed formula. 
Symmetry $M_{ji} = M_{ij}$ follows directly from the definition of $M$ and the absolute value in the exponent.
\end{proof}

\paragraph{Complexity.}
Sorting $a$ and bucketing $b$ cost $\mathcal{O}(n\log n + k\log n)$.
The per-interval aggregates $p_i, s_i, w_i$ are each a one-pass scatter
over $b$, $\mathcal{O}(k)$.  The cumulative quantities $P_i, S_i, W_i$
are prefix scans, $\mathcal{O}(n)$.  Once $P, S, W$ are tabulated,
each entry $M_{ij}$ is evaluated in $\mathcal{O}(1)$ from the boxed
formula, so the dominant cost is the explicit construction of the
$n \times n$ matrix:
\[
\boxed{\quad
T_{\mathrm{Gram}}(n, k)
\;=\;
\mathcal{O}\bigl(n^{2} + k\log n\bigr),
\quad}
\]
versus $\mathcal{O}(n^{2} k)$ for the dense reference.  The
$\Theta(n^{2})$ component is the size of the output matrix and is
unavoidable.  The asymptotic speed-up over the dense baseline is
$\Theta(n^{2}\,k / (n^{2} + k\log n))$, which reduces to
$\Theta(k / \log n)$ when $k\log n \gtrsim n^{2}$ and to
$\Theta(k)$ when the inner dimension dominates.  Memory is
$\mathcal{O}(n^{2} + k)$, again dominated by the output.

%
%

\section{Kernel-algebra benchmarks: full empirical record}
\label{sec:detailed-kernel-bench}

The main text compresses the kernel-algebra story into one central
figure. Here we unpack it into the individual stress tests behind
Sec.~\ref{sec:kernel-algebra-bench}. We benchmark the three
kernel-algebra primitives provided by \our{}: matrix--vector
multiplication $X\,K(a,b)$ (and its transpose), and the weighted Gram
matrix $K(\alpha,\beta)\,\mathrm{diag}(D)\,K(\alpha,\beta)^{\!\top}$,
where $K(a,b)_{ij} = \exp(-|a_i - b_j|)$. 
Each timing is the best of three independent runs of $30$ timed iterations after
$10$ warm-up iterations, with explicit \texttt{cuda.synchronize()}
barriers around the timed region. For backward measurements we form
$\mathcal{L} = \mathbf{1}^{\!\top}\!Y$ with $Y$ the operator output and
back-propagate through the inputs (data tensor and both anchor sets).
The \emph{explicit-dense} baseline materializes the full $n\times k$
kernel matrix and uses a single \texttt{torch.matmul}\ call; this is the
strongest PyTorch baseline whenever memory permits. We report a baseline
measurement only as long as it does not exceed either a $500$\,ms
wall-clock budget or a $1.5$\,GB VRAM allocation estimate; once either
threshold trips, every larger problem size for the baseline is marked
\emph{OOM} and shown as a vertical dashed line on the corresponding
plot.

%
%
\subsection{GPU runtime, peak memory, and forward+backward}
\label{sec:app-exp-gpu}

\paragraph{Setup.}
Square kernel ($n = k$), batch size $B = 8$.  We sweep
$n \in \{2^{10}, 2^{11}, \dots, 2^{20}\}$ for \our{} and
$n \in \{2^{10}, \dots, 2^{14}\}$ for the explicit-dense baseline; at
$n = 2^{15}$ the dense $K$ matrix already exceeds $2$\,GB and the
forward+backward graph triggers an OOM on this hardware.

\begin{figure}[ht]
    \centering
    \resizebox{\linewidth}{!}{\input{images/paper_appendix/exp1_app.pgf}}
    \caption{\textbf{GPU benchmark of $X K(a,b)$ as a function of $n=k$ at
        $B=8$.}  Forward time (left), forward+backward time (middle), and
        peak GPU memory of the forward call (right).  \our{} is shown in
        blue; the explicit-dense baseline is shown in orange and runs out
        of memory above $n = 2^{14}$ (dashed vertical line).
        At $n = 2^{14}$ the forward speed-up is
        $\sim\!65\times$, the forward+backward speed-up exceeds
        $\sim\!430\times$, and the peak memory reduction is
        $\sim\!100\times$.  At $n = 2^{20}$, \our{} still completes
        the forward+backward pass in $\sim\!71$\,ms while requiring
        only $\sim\!320$\,MB of working memory, comfortably below the
        device limit.}
    \label{fig:app-exp-gpu}
\end{figure}

\paragraph{Observations.}
The forward time of \our{} grows roughly linearly between
$n = 2^{17}$ and $n = 2^{20}$ (consistent with the
$\mathcal{O}(B(n+k))$ working set, dominated by memory bandwidth at
these sizes), while sub-$n=2^{17}$ measurements are dominated by
constant kernel-launch overhead and stay flat at $\sim\!1$\,ms, (see~Fig.~\ref{fig:app-exp-gpu}). The
forward+backward time is approximately $3\times$ the forward time
(one extra scan launch plus the fused $\partial / \partial \alpha$
reduction), independent of $n$.  Peak memory grows as $\mathcal{O}(B
n)$ with a small constant (the working set is the input tensor plus a
few $(B,n)$ scratch buffers), and at $n = 2^{20}$ uses
$\sim\!320$\,MB.  The dense baseline uses $\mathcal{O}(n^{2})$ memory
and OOMs at the very small problem size of $n = 2^{15} = 32\text{K}$.

%
%
\subsection{Numerical accuracy}
\label{sec:app-exp-accuracy}

\paragraph{Setup.}
We compare \our{}'s \texttt{float32} forward output and the explicit
\texttt{float32} dense matmul \emph{both against an
\texttt{float64} dense ground truth}.  The same input tensors $a, b, X$
in \texttt{float32} are cast to \texttt{float64} for the reference;
the difference therefore reflects only the algorithmic precision of the
two \texttt{float32} routes, not input rounding.  Square kernel,
$B = 8$, $n \in \{2^{10}, \dots, 2^{14}\}$.

\begin{figure}[ht]
    \centering
    \resizebox{\linewidth}{!}{\input{images/paper_appendix/exp2.pgf}}
    \caption{\textbf{Relative error against an \texttt{float64} dense
        reference.}  Left: $\ell_2$ relative error.  Right: $\ell_\infty$
        relative error.  The dashed line marks the
        \texttt{float32}\ machine epsilon $\varepsilon \approx 1.19 \cdot 10^{-7}$.
        \our{} sits at the unit-of-last-place noise floor across the entire
        range, while the explicit dense \texttt{float32}\ matmul accumulates
        error proportional to $n$, exceeding $\our{}$ by a factor of
        $2$--$5\times$ at $n = 2^{14}$.}
    \label{fig:app-exp-accuracy}
\end{figure}

\paragraph{Observations.}
Although \our{}'s \texttt{float32} path uses an internally rescaled
mantissa--exponent representation, its output relative error is
\emph{lower} than that of the corresponding $X @ K$ matmul performed in
the same precision.  This is intuitive: the explicit dense path must
sum $n$ terms of comparable magnitude in a single accumulator, picking
up an $O(n\,\varepsilon)$ rounding error that grows visibly with $n$,
while the decoupled-lookback scan in \our{} introduces a single
$\varepsilon$ per element.  At all tested sizes the
$\ell_\infty$ error of \our{} stays below $5\cdot10^{-7}$, and the
$\ell_2$ error stays at $\sim\!1.5\cdot10^{-7}$ (see~Fig.~\ref{fig:app-exp-accuracy}).

%
%
\subsection{Batch-size scaling}
\label{sec:app-exp-batch}

\paragraph{Setup.}
We fix $n = k = 2^{14}$ and sweep $B \in \{1, 2, 4, \dots, 2^{8}\}$.
For the explicit-dense baseline, the forward+backward graph (which
retains the $1$\,GB matrix $K$) triggers OOM for $B > 32$ on this
hardware; those entries are reported as \emph{OOM}.

\begin{figure}[ht]
    \centering
    \resizebox{\linewidth}{!}{\input{images/paper_appendix/exp3.pgf}}
    \caption{\textbf{Batch-size scaling at $n = k = 2^{14}$.}  Forward
        (left) and forward+backward (right).  The dense baseline is
        nearly $B$-independent at moderate $B$ because its cost is
        dominated by building $K$ (independent of $B$); only at the
        very largest $B$ does the actual matmul become the bottleneck.
        \our{} grows linearly with $B$ once $B \gtrsim 8$, with a
        small constant offset for kernel-launch overhead.  The
        forward+backward column is empty above $B = 32$ for the
        baseline (OOM during graph retention).}
    \label{fig:app-exp-batch}
\end{figure}

\paragraph{Observations.}
The dense baseline's runtime is essentially the cost of forming the
$n \times n$ kernel matrix; the subsequent $X @ K$ matmul is so
inexpensive next to that operation that batch size has very little
effect.  This pattern flips for \our{}: there is no $K$ to form, so
the cost is entirely $\mathcal{O}(B \cdot n)$ in the small-$B$ regime
and remains so up to $B = 256$.  The crossover where \our{} starts
showing batch dependence is at $B \approx 8$; below that, kernel-launch
overhead dominates.  The baseline's apparent dense-fb plateau at
$\sim\!1.4$\,s is purely the time of \emph{computing the gradient of}
the explicit kernel matrix construction -- a cost that \our{}
sidesteps entirely (see~Fig.~\ref{fig:app-exp-batch}).

%
%
\subsection{Asymmetric kernels: fixed short axis}
\label{sec:app-exp-asymm}

\paragraph{Setup.}
The $\mathcal{O}(n\log n + k)$ scan-based algorithm we use selects the
shorter axis automatically: when $n \le k$ it scans along $a$ and
gathers per $b$ (branch~A), otherwise it scatters from $a$ into
buckets along sorted $b$ and scans along $b$ (branch~B).  Both
branches share the same asymptotic cost but expose different sets of
operations.  We test each branch with the corresponding axis fixed at
$1024$ and the other swept across
$\{2^{10}, 2^{11}, \dots, 2^{20}\}$.

\begin{figure}[ht]
    \centering
    \resizebox{\linewidth}{!}{\input{images/paper_appendix/exp4.pgf}}
    \caption{\textbf{Asymmetric kernels.}  Left: $n = 1024$ fixed,
        $k$ varies (branch~A).  Right: $k = 1024$ fixed, $n$ varies
        (branch~B).  \our{} is essentially constant in the swept axis
        up to $\sim\!2^{16}$ (cost is dominated by the fixed-axis
        scan), then grows linearly with the long axis.  The dense
        baseline grows linearly throughout and OOMs above $\sim\!2^{18}$.}
    \label{fig:app-exp-asymm}
\end{figure}

\paragraph{Observations.}
With $n = 1024$ fixed, \our{}'s forward stays under
$1.1$\,ms up to $k = 2^{17}$ and only reaches $7$\,ms at $k = 2^{20}$;
the dense baseline reaches $91$\,ms at $k = 2^{18}$ before OOM-ing.
The mirror-image experiment ($k = 1024$ fixed, $n$ varying) shows
nearly identical behavior with a slight overhead from the scatter-add
step.  At the maximum tested size with the dense baseline still
running ($n$ or $k = 2^{18}$), \our{} delivers
$\sim\!37$--$47\times$ speed-up; both implementations of the dense
baseline OOM at $2^{19}$ (see~Fig.~\ref{fig:app-exp-asymm}).

%
%
\subsection{Weighted Gram matrix}
\label{sec:app-exp-gram}

\paragraph{Setup.}
For the weighted-Gram operation
$M = K(\alpha,\beta)\,\mathrm{diag}(D)\,K(\alpha,\beta)^{\!\top}$ we
fix the \emph{output} dimension at $n = 1024$ (so the result $M$ is
always a $1024 \times 1024$ matrix, $4$\,MB in \texttt{float32}) and
sweep the \emph{inner} dimension
$k \in \{2^{10}, \dots, 2^{20}\}$.  This isolates the algorithmic
scaling: we want to measure how the cost depends on the dimension we
sum over, not on the size of the result.

\begin{figure}[ht]
    \centering
    \resizebox{\linewidth}{!}{\input{images/paper_appendix/exp5.pgf}}
    \caption{\textbf{Weighted Gram matrix scaling.}  Left: forward time.
        Right: peak GPU memory.  $n = 1024$ is fixed; $k$ varies.  Up to
        $k \approx 2^{17}$ the cost of \our{} is dominated by the
        $\mathcal{O}(n^2)$ output construction and is therefore
        \emph{independent of} $k$.  The dense baseline grows linearly
        in $k$ in both axes and OOMs above $k = 2^{17}$ (the dense $K$
        matrix alone exceeds $2$\,GB at $k = 2^{19}$).}
    \label{fig:app-exp-gram}
\end{figure}

\paragraph{Observations.}
The flat plateau in the forward-time panel is the predicted behavior
of the algorithm.  Working backwards from the analytical
cost~$\mathcal{O}(n^2 + k\log n)$: the $n^2$ term is
$1024^2 \approx 10^{6}$ FLOPs and dominates until the inner-dimension
term $k\log n = 10\,k$ catches up around $k \approx 10^{5}$, at which
point \our{} begins to grow linearly in $k$.  The dense baseline is
$\mathcal{O}(n^{2} k)$ in time and uses $\mathcal{O}(n k)$ memory, so
its time grows linearly with $k$ and so does its memory; at the last
non-OOM measurement $k = 2^{17}$ it costs $216$\,ms (\our{}: $1.45$\,ms,
$149\times$ speed-up) and uses $1.0$\,GB of VRAM (\our{}:
$33$\,MB, see~Fig.~\ref{fig:app-exp-gram}).

\paragraph{Asymptotic speed-up.}
For any fixed $n$, the dense Gram costs $\mathcal{O}(n^{2} k)$ while
\our{} costs $\mathcal{O}(n^{2} + k \log n) =
\mathcal{O}(\max(n^2, k \log n))$.  In the regime where the inner
dimension dominates ($k \log n \gg n^{2}$), the speed-up of \our{}
over the dense baseline is asymptotically
$\Theta(n^{2} / \log n)$.

%
%
\subsection{CPU benchmark (no CUDA backend)}
\label{sec:app-exp-cpu}

\paragraph{Setup.}
The \our{} package ships with a portable pure-PyTorch implementation
(\texttt{laplex\_noCUDA}) built on a single \texttt{torch.logcumsumexp}
call per scan; this version supports CPU, CUDA, MPS, and ROCm uniformly
and requires no compilation.  We benchmark it on CPU only, so the
custom CUDA path is not invoked.  Square kernel, $B = 1$,
$n \in \{2^{10}, \dots, 2^{14}\}$.

\begin{figure}[ht]
    \centering
    \resizebox{.7\linewidth}{!}{\input{images/paper_appendix/exp6.pgf}}
    \caption{\textbf{CPU benchmark.}  Pure-PyTorch \our{} versus
        explicit dense, both running on CPU. The pure-PyTorch \our{}
        scales close to linearly in $n$ (\texttt{logcumsumexp}\ is
        well-vectorised in PyTorch's CPU backend) while the dense
        baseline grows quadratically.  At $n = 2^{14}$ the \our{}
        forward is $108\times$ faster than the explicit dense.}
    \label{fig:app-exp-cpu}
\end{figure}

\paragraph{Observations.}
The pure-PyTorch path is intentionally several times slower than the
custom CUDA backend on a per-call basis (compare the
$\sim\!1$\,ms-at-$n=2^{14}$ figure of Section~\ref{sec:app-exp-gpu}
against the $\sim\!6$\,ms here), but it is portable to any device
PyTorch supports and requires no compilation step on first import.
The dense baseline's CPU time grows from $0.7$\,ms at $n = 2^{10}$ to
$662$\,ms at $n = 2^{14}$, putting the speed-up of even the
pure-PyTorch \our{} over $100\times$ at the largest tested size and
making it the only path of the two that is usable for moderate-$n$
kernel work without a GPU (see~Fig.~\ref{fig:app-exp-cpu}).

%
%
\subsection{Summary of empirical findings}
\label{sec:app-exp-summary}

Across all six stress tests, the same pattern repeats: avoiding the
explicit kernel matrix is not just an asymptotic win, but a practical
wall-clock and memory win on commodity hardware.  Concretely:

\begin{itemize}
    \item On GPU at the largest size where the dense baseline still
        runs ($n = k = 2^{14}$, $B = 8$), $\our{}$ is
        $\sim\!65\times$ faster on the forward pass,
        $\sim\!430\times$ faster on forward+backward, and uses
        $\sim\!100\times$ less peak VRAM
        (Sec.~\ref{sec:app-exp-gpu}).

    \item \our{} scales gracefully far past the OOM point of the
        dense baseline: at $n = 2^{20}$, the forward+backward call
        takes $\sim\!71$\,ms and uses
        $\sim\!320$\,MB of VRAM (Sec.~\ref{sec:app-exp-gpu}).

    \item Despite running internally in \texttt{float32}, \our{}
        delivers \emph{better} relative-error numbers than the explicit
        dense \texttt{float32} matmul against an \texttt{float64}
        ground truth, at all tested $n$
        (Sec.~\ref{sec:app-exp-accuracy}).

    \item For asymmetric workloads with a short axis fixed at
        $1024$, \our{} gives $\sim\!37$--$47\times$ speed-up at the
        last non-OOM point of the dense baseline
        (Sec.~\ref{sec:app-exp-asymm}).

    \item The weighted-Gram operation enjoys an asymptotic
        $\Theta(n^{2} / \log n)$ speed-up over the dense baseline,
        with $149\times$ measured at the largest non-OOM size
        ($n = 1024$, $k = 2^{17}$; Sec.~\ref{sec:app-exp-gram}).

    \item Even on CPU, the pure-PyTorch reference implementation is
        already $108\times$ faster than the explicit dense matmul at
        $n = 2^{14}$ (Sec.~\ref{sec:app-exp-cpu}).
\end{itemize}

\noindent
These measurements are the empirical backbone for the main-text claim
that \our{} behaves like a fast transform in practice while retaining
exact Laplace-kernel semantics.

%
%
%

\section{FFT and Toeplitz--FFT baselines: full comparison}
\label{sec:detailed-fft-bench}

This appendix section spells out the relationship between \our{} and
FFT-based multiplication. The important structural fact is simple:
Toeplitz--FFT is the uniform-grid specialization of the Laplace operator
studied in this paper. If $a_i=b_i=i\Delta$, then $\our(a,b)$ is a
Toeplitz matrix and circulant embedding gives an exact FFT-based matvec.
If the anchors are irregular or learned, that diagonalization disappears.
The experiments below therefore measure the cost and benefit of leaving
the fixed-grid slice.

We separate four questions: runtime on GPU, accuracy on the uniform-grid
case where Toeplitz--FFT is exactly applicable, runtime without a custom GPU
kernel, and differentiability with respect to anchor locations. This is the
comparison that matters for a learnable fast transform: FFT methods provide
the best constants on their slice, while \our{} keeps the same asymptotic
class on a strictly larger and trainable family.
%
%
\subsection{GPU runtime (recap)}
\label{sec:app-fft-gpu}

\begin{figure}[t]
    \centering
    \resizebox{\linewidth}{!}{\input{images/paper_appendix/exp12_full.pgf}}
    \caption{%
        \textbf{FFT comparison, \texttt{float32}.}
        \textbf{(a)} GPU runtime for a single matrix--vector product on
        irregular anchors ($B=1$, $n=k$, $a,b\sim\mathcal{N}(0,1)$):
        \our{} (blue), Toeplitz--FFT on its fixed-grid specialization
        (orange), and a single cuFFT call as a hardware lower bound
        (green). All three curves follow the $\mathcal{O}(n\log n)$
        reference. \textbf{(b)} Relative $\ell_2$ error against a
        \texttt{float64} dense reference on the uniform-grid Toeplitz
        problem where the FFT baseline is exactly applicable
        (App.~\ref{sec:app-fft-accuracy}). \our{} gives the smallest
        \texttt{float32} error at every size.
    }
    \label{fig:app_kerlap-vs-fft}
\end{figure}

Figure~\ref{fig:app_kerlap-vs-fft} reports
single-vector ($B = 1$) wall-clock times for
\our{}, Toeplitz--FFT, and a single \texttt{cuFFT} call across
$n \in \{2^{10}, \dots, 2^{20}\}$.  All three curves are parallel on a
log--log plot with slope $\approx n \log n$, confirming the matching
asymptotic complexity class. 
This behaviour is further corroborated by
Fig.~\ref{fig:runtime_memory}, which extends the analysis to
larger problem sizes and additionally reports memory usage, showing
consistent scaling trends across architectures. 
The constant-factor breakdown is
$\our{} : \text{Toeplitz--FFT} : \text{cuFFT} \approx 4 : 1 : 0.13$ at
$n = 2^{20}$, i.e. \our{} is $\sim\!4\times$ slower than the
specialised circulant-embedding code, which itself is
$\sim\!8\times$ slower than a single (non-multiplying)
\texttt{cuFFT}\ call -- a ratio that simply reflects the fact that
Toeplitz--FFT performs three FFTs on a vector of length $2n$ plus a
pointwise multiplication and a padding operation.  The crucial
observation is that all three have the same slope on the log--log
plot.

\begin{figure}[t]
    \centering
    \resizebox{\linewidth}{!}{\input{images/paper_appendix/exp11.pgf}}
    \caption{%
        Runtime and peak memory usage of \textsc{Laplex}, Dense, and \texttt{torch.fft.fft} (cuFFT backend) as a function of the number of elements~$n$ (log\textsubscript{2} scale), evaluated on an NVIDIA H100 (solid lines) and an RTX~2080 (dashed lines). Shaded regions denote the standard deviation across runs.
        \textbf{Left:} \textsc{Laplex} achieves near-constant runtime of
        ${\approx}0.34$\,ms on the H100 for $n \leq 2^{19}$, scaling to
        ${\approx}704$\,ms at $n = 2^{29}$. The \texttt{torch.fft.fft} baseline (single FFT call) exhibits the lowest runtime across all $n$, reflecting the cost of a single transform without additional operations. Dense runs out of memory beyond $n = 2^{16}$ ($155$,ms at that point).
        On the RTX~2080, \textsc{Laplex} remains below $2$,ms up to $n = 2^{17}$ and reaches $413$,ms at $n = 2^{25}$; \texttt{torch.fft.fft} again provides the fastest baseline, while Dense fails already beyond $n = 2^{14}$.
        \textbf{Right:} \textsc{Laplex} uses $64$\,MB on the H100 at $n = 2^{10}$, growing linearly to $41$\,GB at $n = 2^{29}$. The \texttt{torch.fft.fft} baseline has the lowest memory footprint, corresponding to FFT workspace and input/output buffers only. In contrast, Dense requires $80$,MB at $n = 2^{10}$ and already reaches $65$,GB at $n = 2^{16}$, after which it exceeds available memory.
        On the RTX~2080 (8,GB), \textsc{Laplex} fits up to $n = 2^{25}$ ($2.6$,GB), \texttt{torch.fft.fft} remains well within memory limits across the evaluated range, while Dense is limited to $n = 2^{14}$ ($4.1$,GB).
    }
    \label{fig:runtime_memory}
\end{figure}

%
%
\subsection{Numerical accuracy on a uniform-grid (Toeplitz) problem}
\label{sec:app-fft-accuracy}

\paragraph{Setup.}
We pick \emph{uniform} anchors $a_i = b_i = i \Delta$ with
$\Delta = 1/n$, so the kernel matrix $K(a,b)_{ij}\!=\!\exp(-|i-j|\Delta)$
is symmetric Toeplitz and Toeplitz--FFT is exactly applicable.
$X \!\in\! \mathbb{R}^{B \times n}$ is sampled $\mathcal{N}(0,1)$ in
\texttt{float32}.  We compute $X K$ via four routes:
\emph{(i)} dense \texttt{float64} (the ground truth);
\emph{(ii)} dense \texttt{float32} matmul;
\emph{(iii)} Toeplitz--FFT \texttt{float32} (circulant embedding +
$3 \times$ FFT);
\emph{(iv)} \our{} \texttt{float32} (prefix/suffix scan).
We report the relative $\ell_2$ and $\ell_\infty$ errors of routes
(ii)--(iv) against the (i) reference, $B = 8$,
$n \in \{2^{10}, \dots, 2^{14}\}$.

\begin{figure}[ht]
    \centering
    \resizebox{\linewidth}{!}{\input{images/paper_appendix/exp7_toeplitz_acc.pgf}}
    \caption{\textbf{Numerical accuracy on a uniform-grid Toeplitz
        problem.}  Relative $\ell_2$ (left) and $\ell_\infty$ (right)
        error against an \texttt{float64} dense reference.  All three
        \texttt{float32} routes operate on \emph{the same} input
        tensors.  Toeplitz--FFT improves on the dense \texttt{float32}
        matmul by a factor of $\sim\!1.5\text{--}2\times$;
        \our{} improves on Toeplitz--FFT by another
        $\sim\!1.5\times$, despite making no use of the uniformity of
        the grid.  The dashed line marks the \texttt{float32} machine
        epsilon $\varepsilon \approx 1.19 \cdot 10^{-7}$.}
    \label{fig:app-fft-accuracy}
\end{figure}

\paragraph{Observations.}
At $n = 2^{14}$ the relative $\ell_\infty$ errors are
$\text{dense fp32}\!\approx\!1.1 \cdot 10^{-6}$,
$\text{Toeplitz--FFT}\!\approx\!4.4 \cdot 10^{-7}$,
$\our{}\!\approx\!4.2 \cdot 10^{-7}$ (see~Fig.~\ref{fig:app-fft-accuracy}).  Three remarks.

(a) The dense \texttt{float32} matmul accumulates
$\mathcal{O}(n \, \varepsilon)$ rounding error, visible as the
upward-tilting orange curve.  Both fast methods avoid this
accumulation.

(b) Toeplitz--FFT inherits the well-known
$\mathcal{O}(\log n \cdot \varepsilon)$ accuracy of a butterfly FFT
plus a small extra constant from the complex-valued IFFT and from the
single pointwise multiplication; the result is roughly flat in $n$
and noticeably better than dense \texttt{float32}.

(c) \our{} keeps the mantissa--exponent representation in
\texttt{float32} and combines partial sums with $\mathcal{O}(\varepsilon)$
rounding per element, giving an essentially $n$-independent error that
is even slightly smaller than that of Toeplitz--FFT.  Although the
asymptotic constant is favorable to FFT (no complex arithmetic in
intermediate accumulators), our integer-exponent rescaling means we
never accumulate values whose magnitudes differ by many orders of
magnitude.  This is what makes \our{}'s \texttt{float32} error
\emph{lower} than Toeplitz--FFT's at all tested sizes.

We emphasise that this experiment is a fair comparison \emph{on FFT's
home turf}: a uniform grid is exactly the configuration where
Toeplitz--FFT is applicable as a baseline.  On a non-uniform grid
the FFT-based comparator simply does not run.

%
%
\subsection{CPU runtime (no GPU required)}
\label{sec:app-fft-cpu}

\paragraph{Setup.}
The equal-complexity-class claim of Section~\ref{subsec:kerlap-fft-bench}
relies on the custom CUDA scan compiled by \texttt{prefix\_suffix.py}.
A natural worry is that the conclusion may change once that specialised
kernel is unavailable.  This experiment puts that worry to rest: we run
all three methods on CPU using only PyTorch built-ins
(\texttt{torch.logcumsumexp} for \our{}, \texttt{torch.fft} for the FFT
baselines), $B = 1$, \texttt{float32},
$n \in \{2^{10}, \dots, 2^{18}\}$.

\begin{figure}[ht]
    \centering
    \resizebox{0.7\linewidth}{!}{\input{images/paper_appendix/exp8_nocuda_fft.pgf}}
    \caption{\textbf{CPU runtime.}  All three methods on a single CPU
        thread, log--log axes.  The dashed line is an
        $\mathcal{O}(n \log n)$ reference fitted to the last
        \texttt{torch.fft.fft} measurement.  All curves are parallel
        to that reference, confirming that the matching-complexity
        result of the GPU experiment is hardware-independent.}
    \label{fig:app-fft-cpu}
\end{figure}

\paragraph{Observations.}
At $n = 2^{18}$ the wall-clock costs are
$\our{}\!\approx\!64$\,ms,
Toeplitz--FFT$\!\approx\!6$\,ms,
\texttt{torch.fft.fft}$\!\approx\!0.65$\,ms.  The constants on CPU are
larger than on GPU, but the scaling pattern is identical to
Figure~\ref{fig:app_kerlap-vs-fft}: three parallel lines on the log--log
plot, with constant-factor ratios
$\our{} : \text{Toeplitz--FFT} : \text{FFT} \approx 100 : 9 : 1$ at the
largest tested size.  Notably, while \our{} pays a
$\sim\!10\times$ constant penalty over Toeplitz--FFT on CPU (compared
to $\sim\!4\times$ on GPU), the dependence on $n$ is the same on both
devices, so the relative cost is bounded above by a constant for any
problem size.

The wider gap on CPU is mostly a property of PyTorch's
\texttt{torch.logcumsumexp}, which is single-threaded and not as
heavily optimized as the MKL-backed FFT.  The gap shrinks substantially
on GPU (where our custom CUDA kernel beats a generic primitive) and is
therefore a property of the reference implementation in
\texttt{laplex\_noCUDA.py}, not of the algorithm itself (see~Fig.~\ref{fig:app-fft-cpu}).

%
%
\subsection{Differentiability with respect to the anchor sets}
\label{sec:app-fft-grad}

A subtler difference between \our{} and the FFT-based baselines is
that the FFT path supports gradients only with respect to the
\emph{input vector}, not with respect to the anchor positions
themselves.  This is intrinsic, not an oversight: the Toeplitz
structure is built from a fixed grid spacing $\Delta$ rather than an
explicit pair of tensors $(a, b)$, so there is nothing
``differentiable'' on the anchor side.  In the language of
\texttt{torch.autograd}, attempting
\texttt{loss.backward()} with \texttt{a.requires\_grad\_(True)}
returns \texttt{None} or a zero gradient, depending on how the grid
spacing was constructed.

\begin{table}[ht]
    \centering
    \small
    \caption{Differentiability and applicability of the three
        $\mathcal{O}(n\log n)$ comparators.  ``$\checkmark$'': supported
        and used in our experiments.  ``$-$'': not supported.}
    \label{tab:app-fft-grad}
    \begin{tabular}{lccc}
        \toprule
        \textbf{Method} &
            \makecell{Irregular grid \\ $a, b$ arbitrary} &
            \makecell{$\partial / \partial x$ \\ (input vec.)} &
            \makecell{$\partial / \partial a$, $\partial / \partial b$ \\ (anchors)} \\
        \midrule
        Single \texttt{cuFFT}        & not a kernel matvec   & $\checkmark$ & -- \\
        Toeplitz--FFT                & $-$                   & $\checkmark$ & -- \\
        \our{}                       & $\checkmark$          & $\checkmark$ & $\checkmark$ \\
        \bottomrule
    \end{tabular}
\end{table}

This category-level difference is summarized in Table~\ref{tab:app-fft-grad} and matters in any setting where the
anchors are themselves trainable parameters: feature lookup tables,
node embeddings in graph kernel methods, kernel temperature
parameters, or end-to-end-trained Gaussian process models.  The
Toeplitz--FFT comparator is simply unavailable in those settings.

%
%
\subsection{Other $\mathcal{O}(n\log n)$ comparators}
\label{sec:app-fft-others}

For completeness we briefly mention two further fast comparators that
do \emph{not} appear in our benchmark, and explain why.

\paragraph{Non-uniform FFT (NUFFT).}
The non-uniform FFT~\citep{greengard2004accelerating, dutt1993fastfourier}
extends the FFT to irregular grids in $\mathcal{O}(n\log n)$ time, but
only for \emph{Fourier-type} kernels of the form
$\exp(2\pi i\, a_i\, b_j)$.  The Laplace kernel $\exp(-|a_i - b_j|)$
is not in this family, so NUFFT is not directly applicable: it would
require approximating the Laplace decay by a finite Fourier series
with bounded support, which loses the smooth tail and reduces the
resulting matrix--vector product to a multi-pole expansion (see below).
NUFFT is therefore the right tool for kernels where the FFT spectrum
is the natural representation, but not for the operator we study.

\paragraph{Fast Multipole Method (FMM) and Hierarchical Semi-Separable matrices.}
The 1-D Laplace kernel falls within the broad class to which the
Fast Multipole Method~\citep{greengard1987fastalgorithm} and
Hierarchically Semi-Separable matrix
representations~\citep{xia2010fastalgorithms} apply.  In principle these
methods achieve $\mathcal{O}(n \log n)$ matvec on irregular grids by
hierarchical low-rank factorization of off-diagonal blocks.  Modern,
PyTorch-friendly, GPU-accelerated, and \emph{differentiable} FMM
implementations for the Laplace kernel are not, to our knowledge,
publicly available, and recreating one is a non-trivial software
project orthogonal to the contribution of this paper.  The advantage
of \our{}'s scan-based decomposition is that it is mechanically
simple, end-to-end differentiable, and exact (within
\texttt{float32}\ rounding) by construction; FMM is more general
(arbitrary smooth kernels) but achieves $\mathcal{O}(n\log n)$ only
through controlled approximation, with an accuracy/cost trade-off
that is non-trivial to tune.

%
%
\subsection{Summary}
\label{sec:app-fft-summary}

The FFT comparison is best summarized as a containment statement plus a
constant-factor measurement. Uniform-grid Laplace Toeplitz multiplication
is a special case of \our{} evaluated by a highly optimized FFT route.
\our{} pays a moderate constant factor to move from that fixed slice to
arbitrary trainable anchors, while preserving exactness and improving
\texttt{float32} accuracy in the tested Toeplitz setting.

\begin{itemize}
    \item \textbf{Same asymptotic class.} On GPU
        (Sec.~\ref{sec:app-fft-gpu}) and CPU
        (Sec.~\ref{sec:app-fft-cpu}), \our{}, Toeplitz--FFT, and a single
        FFT call follow parallel $\mathcal{O}(n\log n)$ curves.

    \item \textbf{Special-case constants.} Toeplitz--FFT is faster because
        it assumes a uniform grid and uses a circulant embedding. \our{} is
        slower by a bounded constant but accepts arbitrary anchors and can
        train them.

    \item \textbf{Numerical accuracy.} On the uniform-grid problem where
        Toeplitz--FFT is valid, \our{} gives lower relative \texttt{float32}
        error than both dense matmul and Toeplitz--FFT
        (Sec.~\ref{sec:app-fft-accuracy}).

    \item \textbf{Differentiability.} Toeplitz--FFT has no gradients with
        respect to arbitrary anchor locations, because those locations are
        not model parameters in the FFT formulation. \our{} differentiates
        through values and anchors.

    \item \textbf{Other comparators.} NUFFT targets Fourier-type kernels,
        not the Laplace kernel directly; FMM/HSS methods are broader but
        approximate and lack a standard differentiable GPU implementation
        for this setting (Sec.~\ref{sec:app-fft-others}).
\end{itemize}
\section{Random Fourier Features: value and gradient behavior}
\label{sec:detailed-rff}

RFF is the natural irregular-grid comparator because it approximates the
same shift-invariant Laplace kernel without requiring a uniform grid. The
comparison is therefore not between unrelated models. It is between an
exact ordered-kernel algorithm (\our{}) and a finite Monte Carlo
factorization of the same kernel (RFF). This section shows why that
distinction becomes decisive when the anchor locations are learnable: value
errors shrink slowly with the feature count, while anchor-gradient errors
inherit the heavy tails of the Cauchy spectral density.
\subsection{Setup and value accuracy}
\label{sec:app-rff-value}

\paragraph{The estimator.}
Random Fourier Features~\citep{rahimi2007random} approximate a
shift-invariant kernel $k(u-v)$ via Bochner's theorem: if $p$ is the
spectral density of $k$, then
$k(u-v) = \mathbb{E}_{w \sim p}\,e^{i w(u-v)}
       = \mathbb{E}_{w \sim p,\, c \sim \mathcal{U}[0, 2\pi]}\,
        2 \cos(w u + c)\cos(w v + c)$,
which yields the unbiased Monte Carlo estimator
\begin{equation}
    \widehat{k}_D(u, v)
    \;=\; \tfrac{2}{D}\sum_{d = 1}^{D}
          \cos(w_d u + c_d)\, \cos(w_d v + c_d),
    \qquad
    w_d \sim p,\ \ c_d \sim \mathcal{U}[0, 2\pi].
    \label{eq:rff-est}
\end{equation}
The corresponding feature matrices $\Phi(a) \in \mathbb{R}^{n \times D}$
and $\Phi(b) \in \mathbb{R}^{k \times D}$ are dense; the forward pass
costs $\mathcal{O}((n + k) D)$.  For the Laplace kernel
$k(u - v) = \exp(-|u - v|)$ in one dimension, $p$ is the standard
Cauchy distribution
$p(w) = \pi^{-1} (1 + w^{2})^{-1}$, sampled in our implementation via
the inverse-CDF transform $w_d = \tan(\pi (U_d - \tfrac12))$,
$U_d \sim \mathcal{U}[0, 1]$.  This is the only spectral baseline that
applies to the irregular node sets that motivate \our{}.

\paragraph{Experimental protocol.}
$n = k = 2^{16}$, $B = 1$, $a, b \sim \mathcal{N}(0, 1)$ independently
and irregularly.  For each $D \in \{1, 2, 4, \ldots, 2048\}$ we evaluate
the RFF estimator $10$ times with independent random features
$(w_d, c_d)_{d=1}^{D}$ and report the median and the $[\min, \max]$
range of the relative $\ell_2$ error
$\|\widehat{y} - y\|_2 / \|y\|_2$, with $y = x^{\!\top}\!K(a, b)$
computed by \our{} (which we verified against an
$\mathcal{O}(n^{2})$ dense matmul up to $n = 2048$, relative error
$< 10^{-6}$).  Gradients
$\partial L / \partial a$ and $\partial L / \partial b$ are computed
by back-propagating $L = \mathbf{1}^{\!\top}\!\widehat{y}$ through
either the RFF estimator or \our{}.  

\paragraph{Value error.}
The empirical value error tracks the classical
$\mathcal{O}(1/\sqrt{D})$ concentration rate
(Fig.~\ref{fig:kerlap-vs-rff}(b) of the main text):
the median error decreases from $\sim\!4.5$ at $D = 1$ to
$\sim\!0.076$ at $D = 2048$, a factor of $\sim\!59\times$ for a
$2048\times$ increase in $D$ ($\sqrt{2048} \approx 45$).  The
$[\min, \max]$ band stays narrow because the summands in
\eqref{eq:rff-est} are bounded by $1$ and Hoeffding's inequality
applies; the worst-case-to-median ratio across seeds is at most
$\sim\!2\times$ for every tested $D$.

\subsection{Gradient with respect to anchor positions}
\label{sec:app-rff-grad}

\paragraph{A theoretical obstruction.}
Differentiating the RFF estimator with respect to the node locations
introduces a frequency factor:
\begin{equation}
    \frac{\partial \widehat{k}_D(u,v)}{\partial u}
    \;=\; -\tfrac{2}{D} \sum_{d=1}^{D}
          w_d\, \sin(w_d u + c_d)\, \cos(w_d v + c_d).
    \label{eq:rff-grad}
\end{equation}
Each summand in \eqref{eq:rff-grad} is bounded only by $|w_d|$, and
for the Laplace kernel $w_d$ is drawn from the standard Cauchy
distribution, whose second moment is infinite.  Two consequences
follow.

\emph{(i)} The estimator has \emph{unbounded variance}, so the
standard concentration tools (Hoeffding, Bernstein) do not bound its
deviations.  \emph{(ii)} Sums of $D$ i.i.d.\ Cauchy variables are
themselves Cauchy: by closure of the Cauchy family under convolution,
no central limit theorem at the canonical $1/\sqrt{D}$ rate holds.
The error distribution is therefore heavy-tailed, dominated by
occasional large-frequency draws, and concentration as
$D \to \infty$ is not guaranteed
\citep{sutherland2015error}.  \our{}, in contrast, delivers gradients
that are exact by construction (up to \texttt{float32} round-off) at
the same $\mathcal{O}(n \log n)$ cost as the forward pass.

\paragraph{Empirical verification.}
Figure~\ref{fig:app-rff-grad} reports the relative error of
$\widehat{y}$, $\widehat{\nabla_a L}$, and $\widehat{\nabla_b L}$
(medians and $[\min, \max]$ ranges across $10$ seeds).
Three qualitative differences from the value case are visible.

\emph{(i) Broken $1/\sqrt{D}$ scaling.}
The gradient error with respect to $a$ decreases only by a factor of
$\sim\!26$ between $D = 1$ and $D = 2048$, against $\sim\!57$ for the
value; the gradient with respect to $b$ does not measurably decrease
at all, saturating around $200\text{--}460\%$ relative error across
the entire sweep.

\emph{(ii) Heavy tails.}
The worst-case-to-median ratio across seeds exceeds $10\times$ for
$\widehat{\nabla_a L}$ at several values of $D$ and reaches over
$100\times$ for $\widehat{\nabla_b L}$, a direct manifestation of the
Cauchy-sampled frequency factor in~\eqref{eq:rff-grad}.

\emph{(iii) Asymmetry between $\nabla_a$ and $\nabla_b$.}
Because the loss $L = \mathbf{1}^{\!\top}(x^{\!\top}\!K)$ sums the
$x$-weighted entries over rows, the two gradient norms differ by two
orders of magnitude in our setup
($\|\nabla_a\| \approx 2.5 \cdot 10^{6}$ versus
$\|\nabla_b\| \approx 2.3 \cdot 10^{4}$), so the same absolute
Cauchy-driven noise produces dramatically larger \emph{relative}
error on $\nabla_b$.  Whichever coordinate produces a smaller-norm
gradient is where RFF fails first.

\begin{figure}[t]
    \centering
    \resizebox{\linewidth}{!}{\input{images/paper_appendix/rff_grad.pgf}}
    \caption{%
        \textbf{RFF gradient accuracy on the Laplace kernel.}
        $n = k = 2^{16}$, median (line) with $[\min, \max]$ range
        (shaded) over $10$ seeds.
        \emph{Left:} relative error of the value, $\nabla_a L$, and
        $\nabla_b L$ versus $D$; the dashed line is the
        $\mathcal{O}(1/\sqrt{D})$ reference established for the value.
        The value error tracks the reference, the $a$-gradient lags,
        and the $b$-gradient does not measurably improve with $D$.
        \emph{Right:} worst-case-to-median error ratio across seeds.
        Values concentrate ($\approx 2\times$); gradients exhibit
        heavy tails ($10\times$--$100\times$), consistent with the
        unbounded variance of the Cauchy-sampled frequencies.
    }
    \label{fig:app-rff-grad}
\end{figure}

\paragraph{Takeaway.}
RFF is a useful approximation when one needs a fixed-kernel value estimate
and can choose a feature count large enough for the desired tolerance. It is
a poor replacement for \our{} as a learnable operator over Laplace anchors.
At the runtime crossover its value error is still large; at feature counts
where the value becomes accurate it is much slower; and the anchor-gradient
estimator has unbounded variance because the Laplace spectrum is Cauchy.
\our{} computes the same kernel product exactly, including anchor
gradients, at a cost independent of any feature dimension.

%
%

\section{Logistic regression: full $\pmb{\tau}$-sweep for the frozen baseline}
\label{sec:logreg-frozen-app}

The main text reports the frozen-projection experiment at a single
representative inverse-temperature, $\tau = 0.1$.  Here we expose the
temperature sweep; see~Table~\ref{tab:logreg-frozen-tau-sweep}.  The point is not to tune aggressively, but to show
that the behavior is stable: sharp kernels behave like learned
gathering, smoother kernels behave more like random projections, and a
wide middle range remains competitive.  CountSketch, PCA, and RP are
repeated from Table~\ref{tab:logreg-merged} for reference.

\begin{table}[htpb]
    \small
    \centering
    \caption{Frozen-projection logistic regression on ImageNet10:
        full sweep over the inverse temperature $\tau$ of \our{}.
        Validation accuracy (\%) $\pm$ standard deviation over $5$
        seeds.  Only $\tau = 0.1$ is reported in the main text
        (Table~\ref{tab:logreg-merged}).}
    \label{tab:logreg-frozen-tau-sweep}

\begin{tabular}{@{}l@{\quad}c@{\quad}c@{\quad}c@{\quad}c@{\quad}c@{\quad}c@{\quad}c@{}}
\toprule
Method & 25 & 50 & 100 & 250 & 500 & 750 & 1000 \\
\midrule
CountSketch & 26.88{\tiny$\pm$2.17} & 30.92{\tiny$\pm$1.97} & 31.00{\tiny$\pm$1.65} & 32.04{\tiny$\pm$1.40} & 31.12{\tiny$\pm$1.95} & 27.76{\tiny$\pm$1.79} & 25.96{\tiny$\pm$1.09} \\
\our{} ($\tau\!=0.01$) & 29.08{\tiny$\pm$0.54} & 30.44{\tiny$\pm$1.89} & 32.20{\tiny$\pm$1.74} & 30.92{\tiny$\pm$0.58} & 30.24{\tiny$\pm$2.09} & 28.56{\tiny$\pm$1.52} & 27.64{\tiny$\pm$1.88} \\
\our{} ($\tau\!=0.05$) & 26.28{\tiny$\pm$1.51} & 30.08{\tiny$\pm$2.51} & 30.24{\tiny$\pm$1.81} & 32.68{\tiny$\pm$1.32} & 31.48{\tiny$\pm$2.32} & 30.08{\tiny$\pm$1.22} & 30.08{\tiny$\pm$0.91} \\
\our{} ($\tau\!=0.1$)  & 27.96{\tiny$\pm$1.17} & 29.88{\tiny$\pm$1.04} & 31.84{\tiny$\pm$2.05} & 32.20{\tiny$\pm$1.29} & 30.92{\tiny$\pm$1.82} & 31.28{\tiny$\pm$1.57} & 31.28{\tiny$\pm$1.48} \\
\our{} ($\tau\!=0.5$)  & 27.76{\tiny$\pm$1.77} & 29.44{\tiny$\pm$0.62} & 29.12{\tiny$\pm$1.22} & 31.60{\tiny$\pm$1.22} & 31.24{\tiny$\pm$0.80} & 30.60{\tiny$\pm$1.14} & 31.64{\tiny$\pm$0.71} \\
\our{} ($\tau\!=1.0$)  & 27.28{\tiny$\pm$1.94} & 27.76{\tiny$\pm$0.85} & 29.72{\tiny$\pm$1.91} & 29.56{\tiny$\pm$1.19} & 29.56{\tiny$\pm$2.46} & 30.12{\tiny$\pm$1.06} & 29.64{\tiny$\pm$1.44} \\
PCA                 & 34.80{\tiny$\pm$0.51} & 35.92{\tiny$\pm$1.68} & 36.48{\tiny$\pm$2.09} & 35.84{\tiny$\pm$2.19} & 32.40{\tiny$\pm$1.00} & 29.32{\tiny$\pm$1.83} & 27.16{\tiny$\pm$1.95} \\
RP                  & 27.96{\tiny$\pm$0.52} & 30.28{\tiny$\pm$1.43} & 31.68{\tiny$\pm$1.52} & 32.36{\tiny$\pm$1.92} & 30.16{\tiny$\pm$0.86} & 28.04{\tiny$\pm$1.37} & 25.16{\tiny$\pm$0.86} \\
\bottomrule
\end{tabular}
\end{table}

\paragraph{Observations.}
The qualitative picture extends the main-text reading:
\emph{(i)} small $\tau$ ($0.01$, $0.05$) acts approximately like a
sharp gather and tracks CountSketch closely, with a peak near
$m = 250$ and a slow decay afterwards;
\emph{(ii)} intermediate $\tau$ ($0.1$, $0.5$) interpolates between
sharp and smooth and is the most robust to changes in $m$, peaking
later and not collapsing;
\emph{(iii)} large $\tau$ ($1.0$) behaves like a smoothed RP and
plateaus around $30\%$.  At $m = 1000$ the best frozen-\our{}
configurations ($\tau \in \{0.1, 0.5\}$) are the only methods that
maintain $> 31\%$ accuracy across the full sweep -- every
non-\our{} baseline (CountSketch, PCA, RP) is below $30\%$ at that
size.  The choice $\tau = 0.1$ used in the main text is a
reasonable default; in practice $\tau$ can simply be made
trainable, in which case it converges to a value in this same
range.

%
%
%

\section{Classification heads: full details}
\label{sec:experiments-heads-app}

The structured-head experiment is intentionally simple: the backbone is kept fixed while only the final classification layer is replaced, allowing us to study which low-parameter head most faithfully reproduces the original classifier. The main text reports the headline
comparison; here we include the training configuration, exact parameter
counts, validation accuracies, and optimization curves. 

As backbone models, we use Swin-T \cite{liu2021swin} and MaxViT-T \cite{tu2022maxvit}, both with publicly available pretrained weights on ImageNet-1k.

\paragraph{Setup.} For Swin-T, the original head is a single linear layer $W \in \mathbb{R}^{c \times f}$; for MaxViT-T, it is the linear component of a two-layer MLP with $W_1 \in \mathbb{R}^{f_2 \times f_1}$ and $W_2 \in \mathbb{R}^{c \times f_2}$, where $c=1000$ denotes the ImageNet-1k classes. As baselines, we report both the original pretrained head and a randomly initialized full-rank dense head. We then compare two structured low-parameter alternatives at matched rank $r$:

\noindent\emph{(i)} the \our{} head,
\begin{equation*}
    \mathrm{LAPLEX}^{(r)}
    \;=\; \sum_{i=1}^{r} w_i\,L_i,
    \qquad
    L_i \in \mathbb{R}^{c \times f},
\end{equation*}
where each $L_i$ is a LAPLEX matrix parameterized by an independent anchor pair $(a_i,b_i)$ and $w_i$ are trained scalar mixing weights;

\noindent\emph{(ii)} the low-rank factorization,
\begin{equation*}
    \mathrm{LowRank}^{(r)}
    = U V^{\!\top},
    \qquad
    U \in \mathbb{R}^{c \times r},
    \quad
    V \in \mathbb{R}^{f \times r}.
\end{equation*}

\noindent All structured heads are trained with the backbone frozen on the full ImageNet-1k training split. To ensure fair comparison, the optimization setup is kept identical across all ranks and both backbones (Table~\ref{tab:heads-training-config}).

\begin{table}[ht]
\centering
\small
\caption{\small Training configuration shared by all classification-head experiments.}
\label{tab:heads-training-config}
\begin{tabular}{l c}
\toprule
\textbf{Setting} & \textbf{Value} \\
\cmidrule(l{0pt}r{0pt}){1-2}

Optimizer & AdamW ($\beta_1=0.9$, $\beta_2=0.999$, $\epsilon=10^{-8}$) \\
Epochs & 30 \\
Batch size & 128 \\

\cmidrule(lr){1-2}
\our{}$^{(r)}$ \& LowRank$^{(r)}$ lr / weight decay& $10^{-4}$ / $0$ \\
FullRank \& bias lr / weight decay & $10^{-5}$ / $0.05$ \\

\cmidrule(lr){1-2}
Scheduler & cosine, $\eta_{\min}=10^{-6}$, no warmup \\

\cmidrule(lr){1-2}
Seeds & $0$, $1$, $2$, $3$, $4$ (deterministic) \\

\bottomrule
\end{tabular}
\end{table}

\paragraph{Results.}
Table~\ref{tab:head_comparison} reports parameter counts and achieved accuracy for both Swin-T and MaxViT-T. Across both backbones, the \our{}$^{(r)}$ head consistently outperforms LowRank$^{(r)}$ at matched parameter budgets, with the largest margin observed at low rank ($r=4$). As the rank increases, performance improves for both methods; however, \our{}$^{(r)}$ remains superior across all settings. At $r=16$, \our{}$^{(r)}$ reduces the gap to the original pretrained head to approximately $1\%$ on MaxViT-T and $4\%$ on Swin-T, while using only $\sim$5\% and $\sim$4\% of the original parameters, respectively.

Figure~\ref{fig:swin_maxvitl-app} further illustrates training dynamics. Across all ranks and both backbones, \our{}$^{(r)}$ exhibits faster convergence and consistently lower training and validation loss compared to LowRank$^{(r)}$,  reaching equal or better final values.

\begin{table}[h]
\centering
\small
\caption{\small Comparison of \our{}$^{(r)}$ and LowRank$^{(r)}$ classification heads on ImageNet-1k. \our{}$^{(r)}$ consistently outperforms LowRank$^{(r)}$ at matched parameter count, while validation accuracy increases with rank for both methods.}
\label{tab:head_comparison}

\vspace{0.5em} 

\begin{subtable}[t]{0.48\linewidth}
\centering
\caption{Swin-T}
\begin{tabular}{l l c}
\toprule
\textbf{Head} & \textbf{Params} & \textbf{ACC} \\
\midrule
Original pretrained & $768$\,K & $81.5$ \\
\midrule
Random FullRank & $768$\,K & 81.37{\tiny$\pm$0.03} \\
\midrule
\our{}$^{(4)}$  & $8.1$\,K  & 54.04{\tiny$\pm$0.29} \\
LowRank$^{(4)}$    & $8.1$\,K  & 10.67{\tiny$\pm$0.26} \\
\midrule
\our$^{(8)}$  & $15.2$\,K & 71.72{\tiny$\pm$0.17} \\
LowRank$^{(8)}$    & $15.1$\,K & 46.80{\tiny$\pm$0.15} \\
\midrule
\our$^{(16)}$ & $29.3$\,K & 77.68{\tiny$\pm$0.09} \\
LowRank$^{(16)}$   & $29.3$\,K & 67.87{\tiny$\pm$0.14} \\
\bottomrule
\end{tabular}
\end{subtable}
\hfill
\begin{subtable}[t]{0.48\linewidth}
\centering
\caption{MaxViT-T}
\begin{tabular}{l l c}
\toprule
\textbf{Head} & \textbf{Params} & \textbf{ACC} \\
\midrule
Original pretrained  & $774$\,K & $83.7$ \\
\midrule
Random FullRank & $774$\,K & 83.37{\tiny$\pm$0.04} \\
\midrule
\our$^{(4)}$  & $11.7$\,K & 78.61{\tiny$\pm$0.16} \\
LowRank$^{(4)}$  & $11.7$\,K & 17.29{\tiny$\pm$2.31} \\
\midrule
\our$^{(8)}$  & $21.8$\,K & 82.07{\tiny$\pm$0.06} \\
LowRank$^{(8)}$  & $21.8$\,K & 63.80{\tiny$\pm$1.31} \\
\midrule
\our$^{(16)}$ & $42.1$\,K & 82.65{\tiny$\pm$0.03} \\
LowRank$^{(16)}$ & $42.1$\,K & 78.19{\tiny$\pm$0.24} \\
\bottomrule
\end{tabular}
\end{subtable}

\end{table}

\begin{figure}[ht]
    \centering
    \includegraphics[width=\textwidth]{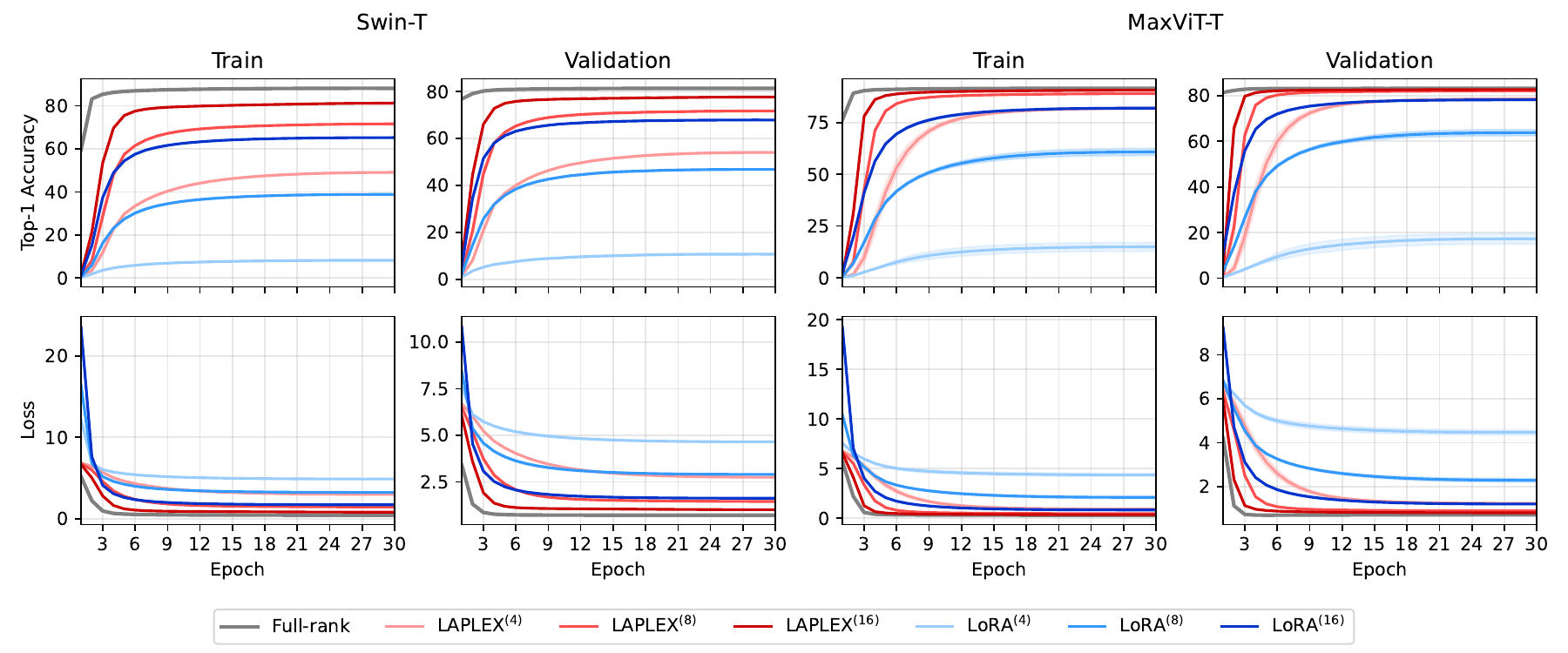}
    \caption{\small Training and validation curves on Swin-T and MaxViT-T for a FullRank (gray), \our{} (red) and LowRank (blue) heads at ranks ${4, 8, 16}$. Each curve represents the mean over 5 runs, with shading denoting standard deviation. \our{} converges faster and to consistently lower training and validation loss than LowRank across the entire schedule, irrespective of rank.}
    \label{fig:swin_maxvitl-app}
\end{figure}

%
%
%

\section{Density modeling with a Laplace-kernel covariance: full details}
\label{sec:density-model-app}

The density experiment is the most concrete demonstration of what the
Gram primitive buys: a high-dimensional covariance model whose structure
is learned from flat data, rather than imposed by image geometry.  
The point is to show how \our{} enables \emph{efficient and exact}
computation of the weighted Gramian
$A^{\!\top}\!\mathrm{diag}(D)A$ on high-dimensional data, the central
primitive that unlocks second-order methods (Gauss--Newton, natural
gradient, Fisher-based curvature, Woodbury-accelerated Gaussian
likelihoods) at scales where forming the corresponding
$n\!\times\!n$ covariance is infeasible. We use \emph{Gaussian density
modeling} as the running illustration: a Gaussian density on flattened
images is fully specified by the mean and a structured covariance, and
the quality of that covariance is directly visible through samples and
MAP reconstructions. The same primitive underlies every other
second-order application of \our{} in this paper.

\paragraph{All models operate on the flattened pixel vector.}
Throughout this section, every model -- \our{}, the dense low-rank
baseline, the empirical-covariance truncation, and the unconstrained
empirical Gaussian reference -- treats each image as a flat
$n$-dimensional vector with no spatial bias built in: pixels are not
arranged on a grid, no convolution or patching is applied, and no
positional encoding is provided. Whatever spatial-color structure
the learned covariance recovers must therefore be discovered
end-to-end from the data itself.

\paragraph{Why ImageNette-320.}
We benchmark on \emph{flattened} ImageNette-$320$
($n{=}320^{2}\!\cdot\!3{=}307{,}200$, $N{=}9{,}469$ training images,
$10$ classes). The pixel dimension is large enough that natural
images display rich spatial-color structure \emph{and} the
unconstrained empirical covariance
$\hat\Sigma\!\in\!\mathbb{R}^{n\times n}$ is no longer storable in
practice ($\sim\!354$\,GB in fp32). At the
same time, the modest number of training images ($N{\ll}n$) lets us
still \emph{evaluate} the full empirical Gaussian exactly through an
$N\!\times\!N$ ``SVD trick'': the top eigenvectors of $\hat\Sigma$
are recovered from the centered-data Gram
$X_{c}X_{c}^{\!\top}/(N{-}1)$ without ever materializing the
$n\!\times\!n$ matrix. This combination makes ImageNette-$320$ a
regime where structured-covariance parameterizations such as \our{}
face a non-trivial empirical-covariance ceiling, and where
matched-budget dense families are severely crippled by their
effective-rank constraint.

\paragraph{Low-rank Gaussian modeling in high dimensions.}
A standard recipe for building a multivariate Gaussian density
$\mathcal{N}(m,\Sigma)$ in high-dimensional $\mathbb{R}^{n}$ when the
full $n\!\times\!n$ covariance is neither storable nor factorisable
is to lift a \emph{lower $k$-dimensional} latent
Gaussian\footnote{We parameterise $\Sigma_{k}$ through its Cholesky
factor.} $\mathcal{N}(0,\Sigma_{k})$ through a linear map and add
diagonal noise:
\begin{equation}
\label{eq:generative-standard-app}
x \;=\; m \;+\; F\,z \;+\; d\odot\varepsilon,
\quad z\sim\mathcal{N}(0,\Sigma_{k}),\;
\varepsilon\sim\mathcal{N}(0,I_{n}),
\end{equation}
which induces
\[
\Sigma \;=\; D + F\,\Sigma_{k}\,F^{\!\top} \;=\; D + WW^{\!\top},
\qquad
D = \mathrm{diag}(d^{2}),\quad W = F\,L,
\]
where $L$ is the Cholesky factor of $\Sigma_{k}=LL^{\!\top}$. This
diagonal-plus-low-rank factor-analysis template underlies
probabilistic PCA \citep{tipping1999probabilistic}, mixtures of PPCA
and factor analyzers
\citep{tipping1999mixtures,ghahramani1996algorithm}, EPLL-style
patch-based image priors \citep{zoran2011learning}, and efficient
GMMs of natural images \citep{richardson2018gans}.

The key practical reason for the template is the Woodbury identity,
which reduces log-likelihood evaluation from operations of size
$n\!\times\!n$ to matrix--vector operations of size $n\!\times\!k$
and a determinant of size $k\!\times\!k$:
\begin{align}
\Sigma^{-1}
&= D^{-1} - D^{-1} W \bigl(I_{k} + W^{\!\top} D^{-1} W\bigr)^{-1} W^{\!\top} D^{-1},\\
\log\det\Sigma
&= \log\det D + \log\det\bigl(I_{k} + W^{\!\top} D^{-1} W\bigr).
\end{align}
The bottleneck is therefore the $k\!\times\!k$ capacitance matrix
$M = I_{k} + W^{\!\top} D^{-1} W$.

The key design choice is the parameterization of $F$. We compare
\emph{two} matched-budget regimes on the same template
\eqref{eq:generative-standard-app}: a small budget
($\sim\!2.2$M trainable parameters) at which we run the simplest
\our{} instantiation alongside two dense baselines, and a larger
budget ($\sim\!3.5$M) at which we contrast the natural
multi-component extension of \our{} against the same two dense
baselines re-trained at the larger rank. Across both budgets we
evaluate four kinds of $F$:
\begin{itemize}
\item \our{} (single-component, $I{=}1$):
$F{=}\mathrm{diag}(w)\,A\,L^{\!\top}$, where
$A\!\in\!\mathbb{R}^{n\times k_{\mathrm{laplex}}}$ is the phased
Laplace kernel with continuous anchor parameters and
$L\!\in\!\mathbb{R}^{k_{\mathrm{laplex}}\times k_{\mathrm{laplex}}}$
is lower-triangular with positive diagonal;
\item \our{} (multi-component, $I{=}2$):
$F{=}\sum_{i=1}^{I}\mathrm{diag}(w_{i})\,A\,L_{i}^{\!\top}$ with $A$
shared and per-component weights $w_{i}$ and \emph{arbitrary}
factors $L_{i}\!\in\!\mathbb{R}^{k_{\mathrm{laplex}}\times k_{\mathrm{laplex}}}$
(no triangular constraint);
\item dense low-rank baseline: fully free
$F\!\in\!\mathbb{R}^{n\times k_{\text{low-rank}}}$;
\item empirical-covariance truncation (probabilistic PCA): the top
$k_{\mathrm{cov}}$ eigenvectors of $\hat\Sigma$, with
$\Sigma{=}V_{k_{\mathrm{cov}}}\Lambda V_{k_{\mathrm{cov}}}^{\!\top}+\sigma^{2}(I{-}V_{k_{\mathrm{cov}}}V_{k_{\mathrm{cov}}}^{\!\top})$
and $\sigma^{2}$ from \citep{tipping1999probabilistic};
\item \emph{(reference, not matched-budget)} the full empirical
Gaussian $\mathcal{N}(\hat m, \hat\Sigma+\varepsilon I)$ with optimal
ridge $\varepsilon$, evaluated exactly via the SVD trick.
\end{itemize}
The matched ranks for each budget regime are determined
algebraically by the budget formulas
$P_{\mathrm{laplex}}^{(I)}{=}n(1{+}I)+I\,k_{\mathrm{laplex}}^{\,2}+\mathcal{O}(k_{\mathrm{laplex}})$,
$P_{\mathrm{low\text{-}rank}}{=}n+n\,k_{\text{low-rank}}+\mathcal{O}(k_{\text{low-rank}}^{2})$,
$P_{\mathrm{cov}}{=}n+n\,k_{\mathrm{cov}}+k_{\mathrm{cov}}$. At
$\sim\!2.2$M they pin
$k_{\mathrm{laplex}}{=}1000$ ($I{=}1$), $k_{\text{low-rank}}{=}6$
and $k_{\mathrm{cov}}{=}6$. At $\sim\!3.5$M (matched to multi
\our{}, which has exactly $3{,}538{,}001$ trainables) the dense
families lift to $k_{\text{low-rank}}{=}11$ and
$k_{\mathrm{cov}}{=}11$, an effective rank still nearly two orders
of magnitude below the $k_{\mathrm{laplex}}{=}1000$ that \our{}
sustains in either regime.

\paragraph{Sampling and MAP reconstruction.}
From the generative form~\eqref{eq:generative-standard-app} both
sampling and reconstruction are natural. \emph{Sampling} draws
$z\sim\mathcal{N}(0,I_{k})$ and $\varepsilon\sim\mathcal{N}(0,I_{n})$
and returns $x{=}m+Wz+d\odot\varepsilon$; for the PCA reference (no
noise term, latent $z\sim\mathcal{N}(0,\Lambda)$ with
$\Lambda{=}\mathrm{diag}(\lambda_{1},\ldots,\lambda_{k_{\mathrm{cov}}})$
the top-$k_{\mathrm{cov}}$ eigenvalues) this reduces to
$x{=}\hat m+V_{k_{\mathrm{cov}}}\Lambda^{1/2}z$.
\emph{MAP reconstruction} of an observed image~$x$ under the
generative model is, for our kernel and dense-LR variants, a
Woodbury-accelerated posterior mean: the MAP latent
$z^{\!*}=\arg\max_{z}p(z\mid x)$ is the unique solution of
\begin{equation}
\label{eq:map-app}
(I_{k}+F^{\!\top}D^{-1}F)\,z^{\!*}
\;=\; F^{\!\top}D^{-1}(x-m),
\qquad
\hat x_{\mathrm{MAP}} \;=\; m + F\,z^{\!*},
\end{equation}
obtained by one $k\!\times\!k$ Cholesky solve. For the
empirical-covariance truncation the noise term vanishes and the MAP
coincides with the orthogonal projection
$\hat x_{\mathrm{MAP}}=\hat m + V_{k_{\mathrm{cov}}}V_{k_{\mathrm{cov}}}^{\!\top}(x-\hat m)$.

\paragraph{Training and evaluation.}
We train all variants of \our{} and the dense low-rank baseline on
flattened ImageNette-$320$ for $25$ epochs with Adam
(lr~$2{\times}10^{-3}$, batch~$64$); the Gram matrices $G_{ij}$ and
the Cholesky of the capacitance $M$ run in fp64 via a custom
decoupled-lookback scan, the rest in fp32. The empirical-covariance
truncation is computed in closed form via the SVD trick -- no
training -- and the same eigendecomposition serves both budget
regimes ($k_{\mathrm{cov}}{=}6$ and $k_{\mathrm{cov}}{=}11$ are
just two truncation depths of the same spectrum). At the
$\sim\!2.2$M-parameter budget the three matched models have
$2{,}231{,}801$ (\our{} $I{=}1$), $2{,}150{,}442$
(dense $k_{\text{low-rank}}{=}6$) and $2{,}150{,}406$
(PCA $k_{\mathrm{cov}}{=}6$) trainables (matched within $4\%$).
At the $\sim\!3.5$M-parameter budget they have $3{,}538{,}001$
(\our{} $I{=}2$, two arbitrary $L_{i}$), $3{,}686{,}466$ (dense
$k_{\text{low-rank}}{=}11$) and $3{,}686{,}401$ (PCA
$k_{\mathrm{cov}}{=}11$) trainables (matched within $5\%$). The
unconstrained ridge-regularized empirical Gaussian ceiling sits
above both: with optimal $\varepsilon{=}10^{-2}$ (selected by val
LL) it effectively requires $\sim\!2.9$\,B parameters to store the
centered training data $X_{c}$ and the recovered spectrum -- roughly
$1{,}300{\times}$ more than the small matched set and
$\sim\!820{\times}$ more than the larger matched set.

\paragraph{Construction of the multi-component variant.}
At the $\sim\!2.2$M-parameter budget \our{} reduces to a single
$F{=}\mathrm{diag}(w)\,A\,L^{\!\top}$ with shared phased Laplace
kernel $A$ and one lower-triangular factor
$L\!\in\!\mathbb{R}^{k_{\mathrm{laplex}}\times k_{\mathrm{laplex}}}$.
At the larger $\sim\!3.5$M-parameter budget we evaluate the natural
\emph{multi-component} enrichment in which we allow a sum of $I$
structured factors, sharing $A$ but with per-component weights and
per-component $L_{i}$:
\begin{equation}
\label{eq:multi-component}
F \;=\; \sum_{i=1}^{I} \mathrm{diag}(w_{i})\, A\, L_{i}^{\!\top},
\quad L_{i}\!\in\!\mathbb{R}^{k_{\mathrm{laplex}}\times k_{\mathrm{laplex}}}\ \text{(arbitrary, no triangular constraint)},
\end{equation}
which yields
$\Sigma=D+FF^{\!\top}=D+\sum_{i,j}\mathrm{diag}(w_{i})\,A L_{i}^{\!\top}L_{j}\,A^{\!\top}\mathrm{diag}(w_{j})$.
The $k_{\mathrm{laplex}}\!\times\!k_{\mathrm{laplex}}$ capacitance
matrix factorizes as
\[
M = I_{k_{\mathrm{laplex}}} + F^{\!\top}D^{-1}F
  \;=\; I_{k_{\mathrm{laplex}}} + \sum_{i,j=1}^{I} L_{i}\,G_{ij}\,L_{j}^{\!\top},
\quad
G_{ij} = A^{\!\top}\mathrm{diag}(w_{i}w_{j}/d^{2})\,A.
\]
By symmetry $G_{ij}{=}G_{ji}$ there are only $I(I{+}1)/2$ distinct
Grams -- each computed by the three-real-Gram identity in
$\mathcal{O}(n^{2}+n\,k_{\mathrm{laplex}})$, so the total Gramian
cost is $\mathcal{O}\!\bigl(I(I{+}1)/2\cdot n^{2}\bigr)$, while the
Cholesky of $M$ remains $\mathcal{O}(k_{\mathrm{laplex}}^{3})$. The
rank of $\Sigma{-}D$ is unchanged at $k_{\mathrm{laplex}}$ (since
$F\!\in\!\mathbb{R}^{n\times k_{\mathrm{laplex}}}$ in spite of the
sum), but the family of representable $F$ is strictly richer than
the single-component case: arbitrary $L_{i}$ break the
lower-triangular restriction and the sum cannot in general be
realised by a single $(w,L)$ pair. The parameterization carries an
$\mathcal{O}(k_{\mathrm{laplex}})$ gauge symmetry
$L_{i}\!\mapsto\!L_{i}Q$ for any
$Q\!\in\!O(k_{\mathrm{laplex}})$ applied uniformly; this leaves
$FF^{\!\top}$ invariant but does not obstruct Adam empirically. We
instantiate the construction at $I{\in}\{2,3\}$, both with
\emph{full $k_{\mathrm{laplex}}\!\times\!k_{\mathrm{laplex}}$}
$L_{i}$.

\begin{figure}[t]
\centering
\begin{subfigure}[b]{0.24\textwidth}
    \centering
    \includegraphics[width=\linewidth]{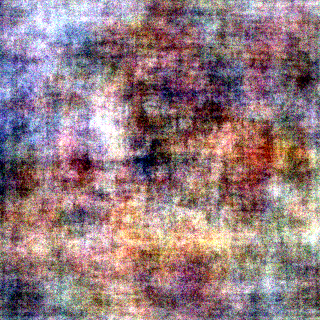}
    \caption*{$\mathcal{N}(m,\Sigma)$ sample \emph{(ceiling)}\\$\sim\!2.9$\,B params, LL $315{,}549$}
\end{subfigure}\hfill
\begin{subfigure}[b]{0.24\textwidth}
    \centering
    \includegraphics[width=\linewidth]{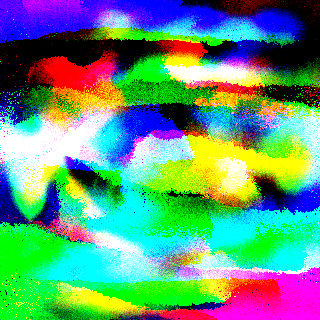}
    \caption*{\our{} ($I{=}1$) sample\\$\sim\!2.2$M params, LL $198{,}549$}
\end{subfigure}\hfill
\begin{subfigure}[b]{0.24\textwidth}
    \centering
    \includegraphics[width=\linewidth]{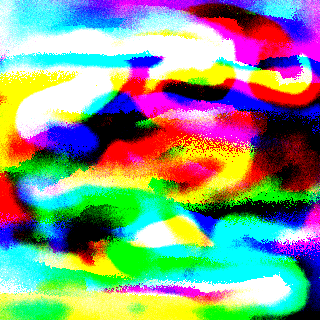}
    \caption*{\our{} ($I{=}2$) sample\\$\sim\!3.54$M params, LL $229{,}607$}
\end{subfigure}\hfill
\begin{subfigure}[b]{0.24\textwidth}
    \centering
    \includegraphics[width=\linewidth]{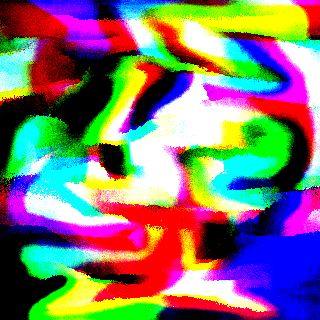}
    \caption*{\our{} ($I{=}3$) sample\\$\sim\!4.84$M params, LL $\mathbf{238{,}578}$}
\end{subfigure}
\caption{ImageNette-$320$ density: random samples drawn from four
Gaussian models, all conditioned on the same Welford-frozen
$307{,}200$-vector mean $m$, ordered left to right by trainable
budget and (equivalently for \our{}) by component count.
\textbf{Left}: full-empirical-Gaussian ceiling
$\Sigma_{\varepsilon}{=}\hat\Sigma+\varepsilon I$ with val-optimal
$\varepsilon{=}10^{-2}$ ($\sim\!2.9$\,B effective parameters).
\textbf{Centre-left}: \our{} single-component ($I{=}1$,
$\sim\!2.2$M trainable). \textbf{Centre-right}: \our{}
multi-component ($I{=}2$, two arbitrary $L_{i}$ factors of
Eq.~\eqref{eq:multi-component}, $\sim\!3.54$M trainable).
\textbf{Right}: \our{} multi-component ($I{=}3$, three arbitrary
$L_{i}$, $\sim\!4.84$M trainable). All three \our{} variants
reproduce the soft natural-image color blobs of the
empirical-Gaussian ceiling at three to four orders of magnitude
fewer parameters; closure to the ceiling on test LL grows
monotonically $63\%\!\to\!73\%\!\to\!76\%$ for
$I{=}1{,}2{,}3$, with a clear diminishing-returns kink between
$I{=}2$ and $I{=}3$ (gain $+31{,}058$ nats vs.\ $+8{,}971$ nats).
Matched-budget dense baselines (low-rank with
$k_{\text{low-rank}}{\in}\{6,11\}$ and PPCA truncation with
$k_{\mathrm{cov}}{\in}\{6,11\}$) produce qualitatively unstructured
samples at either of the matched budgets and are reported in
Tab.~\ref{tab:imagenette-density-app} but not shown here. MAP
reconstructions for the same models on held-out images appear in
Fig.~\ref{fig:imagenette-recon-gallery}.}
\label{fig:imagenette-samples-app}
\end{figure}

\begin{figure}[t]
\centering
\begin{subfigure}[b]{0.095\textwidth}
    \centering
    \includegraphics[width=\linewidth]{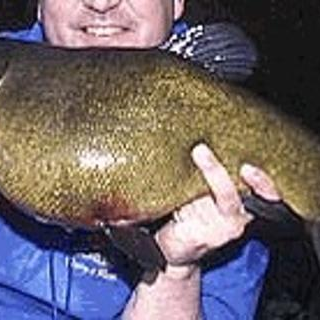}
    \caption*{\tiny tench}
\end{subfigure}\hfill
\begin{subfigure}[b]{0.095\textwidth}
    \centering
    \includegraphics[width=\linewidth]{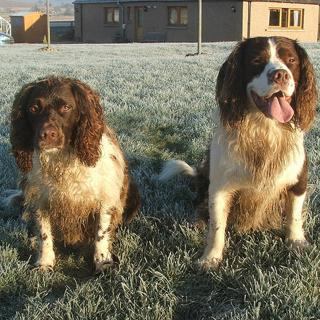}
    \caption*{\tiny springer}
\end{subfigure}\hfill
\begin{subfigure}[b]{0.095\textwidth}
    \centering
    \includegraphics[width=\linewidth]{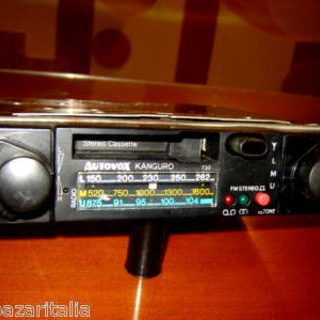}
    \caption*{\tiny cassette}
\end{subfigure}\hfill
\begin{subfigure}[b]{0.095\textwidth}
    \centering
    \includegraphics[width=\linewidth]{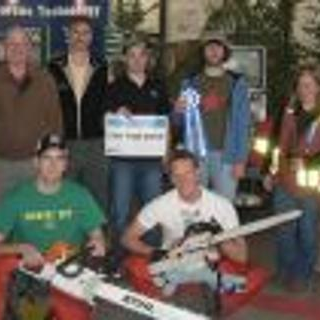}
    \caption*{\tiny chainsaw}
\end{subfigure}\hfill
\begin{subfigure}[b]{0.095\textwidth}
    \centering
    \includegraphics[width=\linewidth]{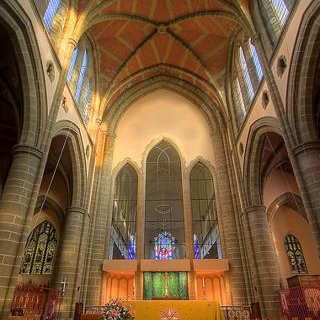}
    \caption*{\tiny church}
\end{subfigure}\hfill
\begin{subfigure}[b]{0.095\textwidth}
    \centering
    \includegraphics[width=\linewidth]{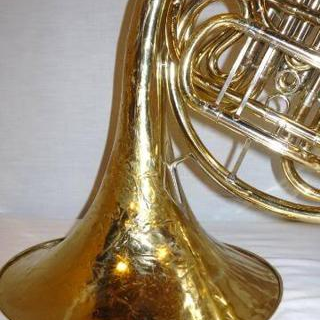}
    \caption*{\tiny horn}
\end{subfigure}\hfill
\begin{subfigure}[b]{0.095\textwidth}
    \centering
    \includegraphics[width=\linewidth]{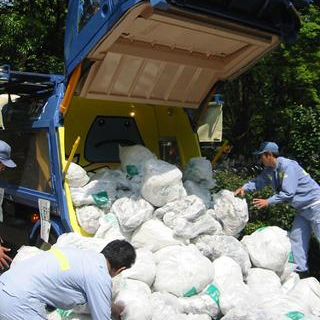}
    \caption*{\tiny truck}
\end{subfigure}\hfill
\begin{subfigure}[b]{0.095\textwidth}
    \centering
    \includegraphics[width=\linewidth]{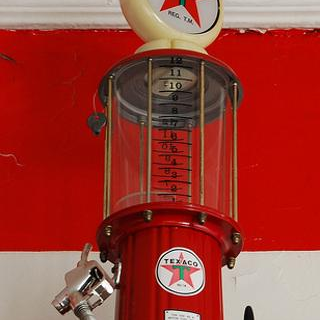}
    \caption*{\tiny gas pump}
\end{subfigure}\hfill
\begin{subfigure}[b]{0.095\textwidth}
    \centering
    \includegraphics[width=\linewidth]{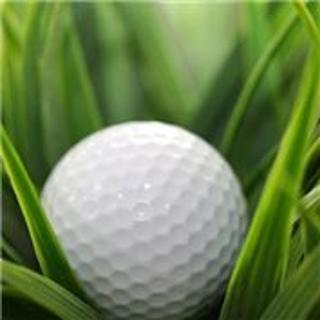}
    \caption*{\tiny golf ball}
\end{subfigure}\hfill
\begin{subfigure}[b]{0.095\textwidth}
    \centering
    \includegraphics[width=\linewidth]{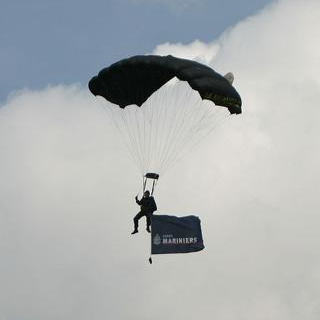}
    \caption*{\tiny parachute}
\end{subfigure}

\vspace{0.5ex}
\begin{subfigure}[b]{0.095\textwidth}
    \centering
    \includegraphics[width=\linewidth]{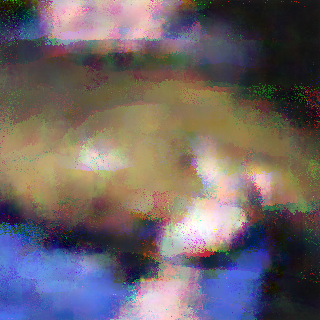}
    \caption*{\tiny $I{=}1$\\$0.121$}
\end{subfigure}\hfill
\begin{subfigure}[b]{0.095\textwidth}
    \centering
    \includegraphics[width=\linewidth]{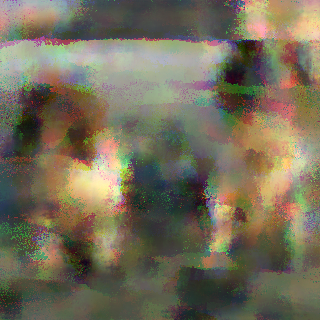}
    \caption*{\tiny $I{=}1$\\$0.147$}
\end{subfigure}\hfill
\begin{subfigure}[b]{0.095\textwidth}
    \centering
    \includegraphics[width=\linewidth]{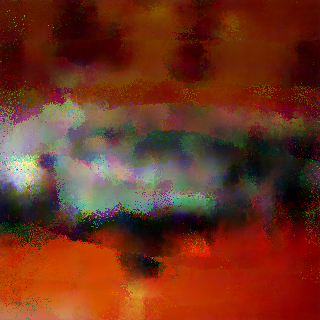}
    \caption*{\tiny $I{=}1$\\$0.128$}
\end{subfigure}\hfill
\begin{subfigure}[b]{0.095\textwidth}
    \centering
    \includegraphics[width=\linewidth]{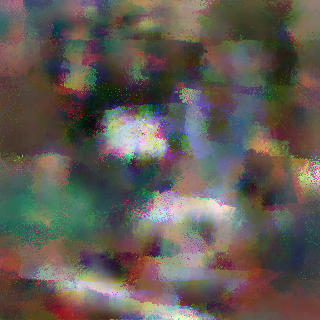}
    \caption*{\tiny $I{=}1$\\$0.122$}
\end{subfigure}\hfill
\begin{subfigure}[b]{0.095\textwidth}
    \centering
    \includegraphics[width=\linewidth]{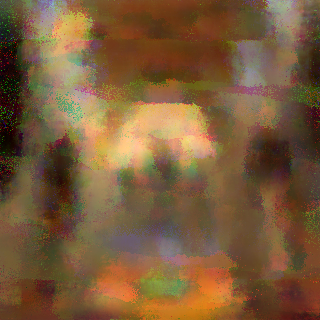}
    \caption*{\tiny $I{=}1$\\$0.109$}
\end{subfigure}\hfill
\begin{subfigure}[b]{0.095\textwidth}
    \centering
    \includegraphics[width=\linewidth]{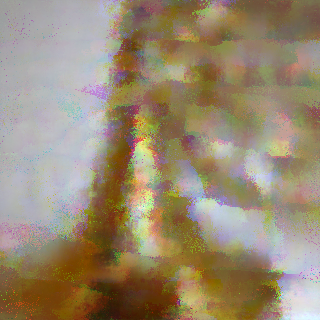}
    \caption*{\tiny $I{=}1$\\$0.141$}
\end{subfigure}\hfill
\begin{subfigure}[b]{0.095\textwidth}
    \centering
    \includegraphics[width=\linewidth]{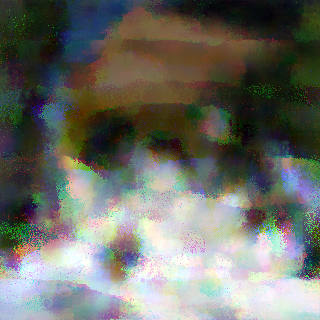}
    \caption*{\tiny $I{=}1$\\$0.143$}
\end{subfigure}\hfill
\begin{subfigure}[b]{0.095\textwidth}
    \centering
    \includegraphics[width=\linewidth]{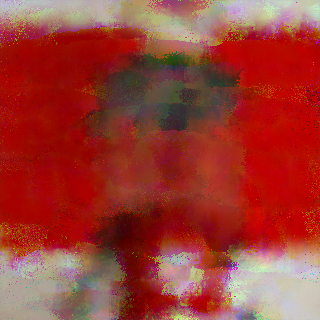}
    \caption*{\tiny $I{=}1$\\$0.110$}
\end{subfigure}\hfill
\begin{subfigure}[b]{0.095\textwidth}
    \centering
    \includegraphics[width=\linewidth]{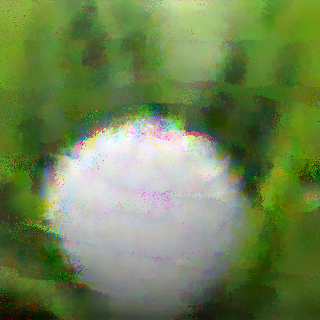}
    \caption*{\tiny $I{=}1$\\$0.081$}
\end{subfigure}\hfill
\begin{subfigure}[b]{0.095\textwidth}
    \centering
    \includegraphics[width=\linewidth]{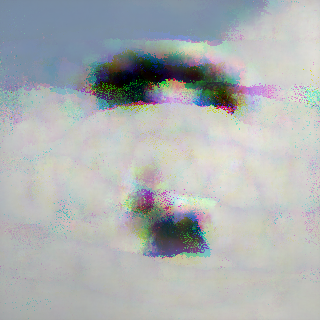}
    \caption*{\tiny $I{=}1$\\$0.086$}
\end{subfigure}

\vspace{0.5ex}
\begin{subfigure}[b]{0.095\textwidth}
    \centering
    \includegraphics[width=\linewidth]{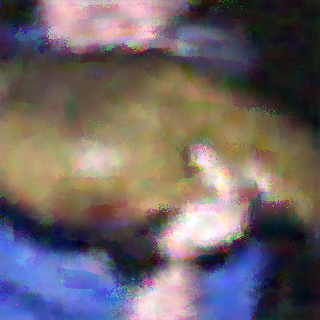}
    \caption*{\tiny $I{=}2$\\$\mathbf{0.106}$}
\end{subfigure}\hfill
\begin{subfigure}[b]{0.095\textwidth}
    \centering
    \includegraphics[width=\linewidth]{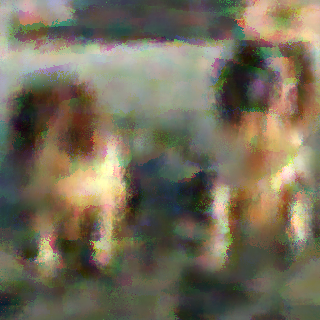}
    \caption*{\tiny $I{=}2$\\$\mathbf{0.132}$}
\end{subfigure}\hfill
\begin{subfigure}[b]{0.095\textwidth}
    \centering
    \includegraphics[width=\linewidth]{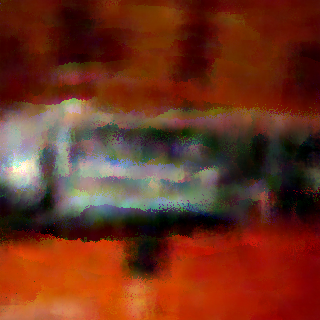}
    \caption*{\tiny $I{=}2$\\$\mathbf{0.115}$}
\end{subfigure}\hfill
\begin{subfigure}[b]{0.095\textwidth}
    \centering
    \includegraphics[width=\linewidth]{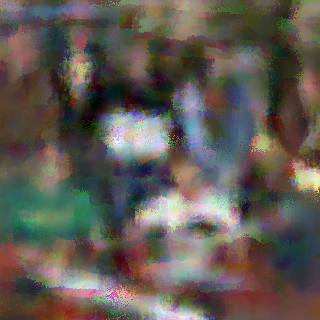}
    \caption*{\tiny $I{=}2$\\$\mathbf{0.104}$}
\end{subfigure}\hfill
\begin{subfigure}[b]{0.095\textwidth}
    \centering
    \includegraphics[width=\linewidth]{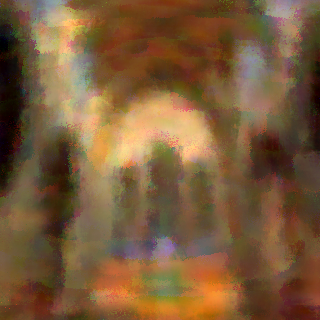}
    \caption*{\tiny $I{=}2$\\$\mathbf{0.098}$}
\end{subfigure}\hfill
\begin{subfigure}[b]{0.095\textwidth}
    \centering
    \includegraphics[width=\linewidth]{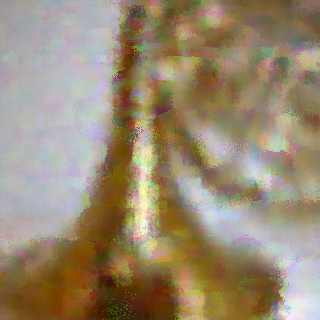}
    \caption*{\tiny $I{=}2$\\$\mathbf{0.129}$}
\end{subfigure}\hfill
\begin{subfigure}[b]{0.095\textwidth}
    \centering
    \includegraphics[width=\linewidth]{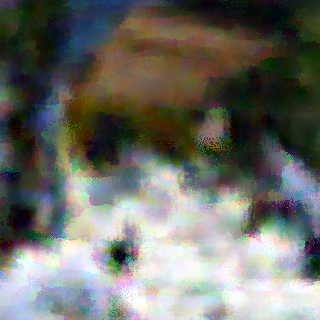}
    \caption*{\tiny $I{=}2$\\$\mathbf{0.125}$}
\end{subfigure}\hfill
\begin{subfigure}[b]{0.095\textwidth}
    \centering
    \includegraphics[width=\linewidth]{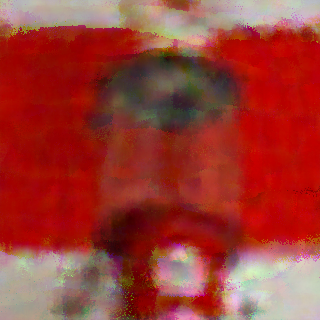}
    \caption*{\tiny $I{=}2$\\$\mathbf{0.102}$}
\end{subfigure}\hfill
\begin{subfigure}[b]{0.095\textwidth}
    \centering
    \includegraphics[width=\linewidth]{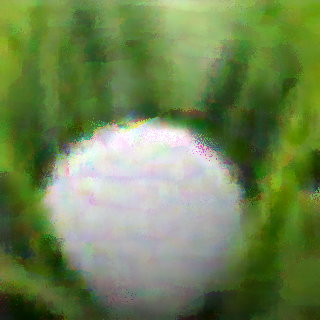}
    \caption*{\tiny $I{=}2$\\$\mathbf{0.067}$}
\end{subfigure}\hfill
\begin{subfigure}[b]{0.095\textwidth}
    \centering
    \includegraphics[width=\linewidth]{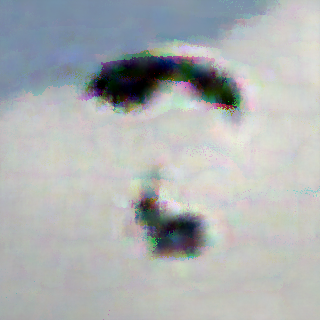}
    \caption*{\tiny $I{=}2$\\$0.069$}
\end{subfigure}

\vspace{0.5ex}
\begin{subfigure}[b]{0.095\textwidth}
    \centering
    \includegraphics[width=\linewidth]{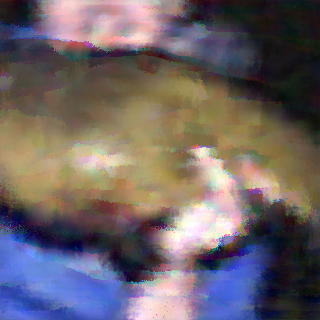}
    \caption*{\tiny $I{=}3$\\$\mathbf{0.104}$}
\end{subfigure}\hfill
\begin{subfigure}[b]{0.095\textwidth}
    \centering
    \includegraphics[width=\linewidth]{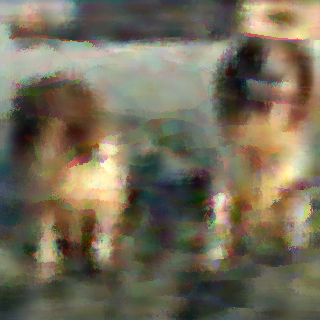}
    \caption*{\tiny $I{=}3$\\$\mathbf{0.128}$}
\end{subfigure}\hfill
\begin{subfigure}[b]{0.095\textwidth}
    \centering
    \includegraphics[width=\linewidth]{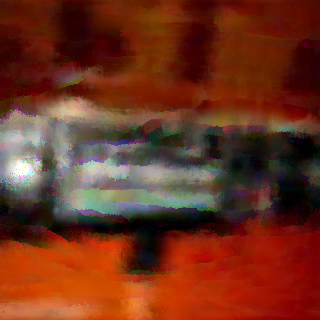}
    \caption*{\tiny $I{=}3$\\$\mathbf{0.111}$}
\end{subfigure}\hfill
\begin{subfigure}[b]{0.095\textwidth}
    \centering
    \includegraphics[width=\linewidth]{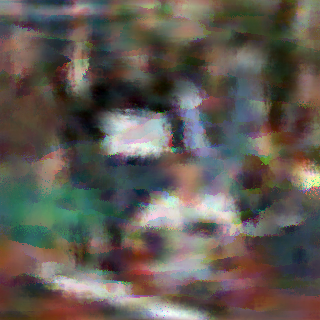}
    \caption*{\tiny $I{=}3$\\$\mathbf{0.099}$}
\end{subfigure}\hfill
\begin{subfigure}[b]{0.095\textwidth}
    \centering
    \includegraphics[width=\linewidth]{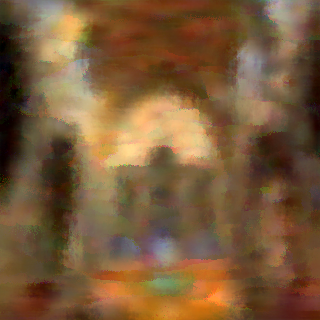}
    \caption*{\tiny $I{=}3$\\$\mathbf{0.097}$}
\end{subfigure}\hfill
\begin{subfigure}[b]{0.095\textwidth}
    \centering
    \includegraphics[width=\linewidth]{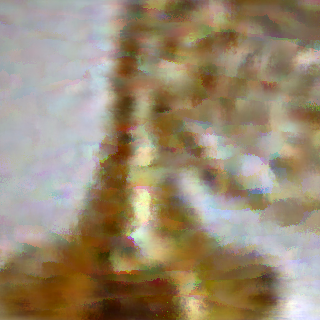}
    \caption*{\tiny $I{=}3$\\$\mathbf{0.129}$}
\end{subfigure}\hfill
\begin{subfigure}[b]{0.095\textwidth}
    \centering
    \includegraphics[width=\linewidth]{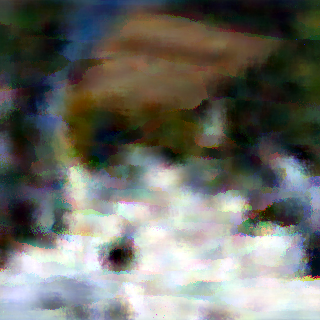}
    \caption*{\tiny $I{=}3$\\$\mathbf{0.121}$}
\end{subfigure}\hfill
\begin{subfigure}[b]{0.095\textwidth}
    \centering
    \includegraphics[width=\linewidth]{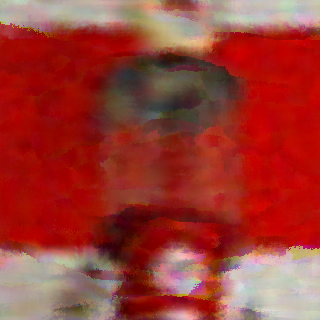}
    \caption*{\tiny $I{=}3$\\$\mathbf{0.098}$}
\end{subfigure}\hfill
\begin{subfigure}[b]{0.095\textwidth}
    \centering
    \includegraphics[width=\linewidth]{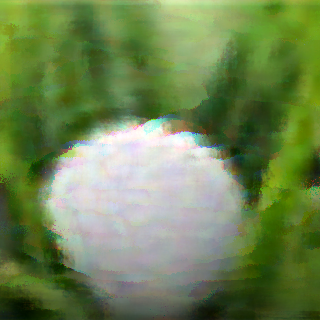}
    \caption*{\tiny $I{=}3$\\$\mathbf{0.067}$}
\end{subfigure}\hfill
\begin{subfigure}[b]{0.095\textwidth}
    \centering
    \includegraphics[width=\linewidth]{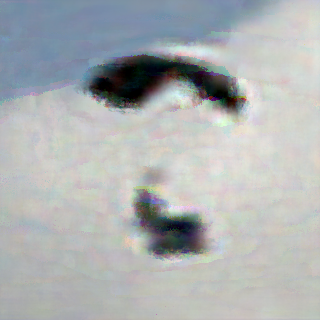}
    \caption*{\tiny $I{=}3$\\$\mathbf{0.065}$}
\end{subfigure}
\caption{\our{} MAP reconstructions on ten held-out ImageNette-$320$
images, one uniformly-random sample from each of the ten classes
(tench, English springer, cassette player, chain saw, church,
French horn, garbage truck, gas pump, golf ball, parachute; seed~$1234$).
\textbf{Row 1}: the original $320{\times}320$ inputs (no preprocessing
beyond resize-and-center-crop, flattened to $307{,}200$-vectors
before hitting the model), labelled with class.
\textbf{Row 2}: \our{} single-component ($I{=}1$, $\sim\!2.2$M
parameters); per-image RMSE shown in each sub-caption, mean
$\approx\!0.119$.
\textbf{Row 3}: \our{} multi-component ($I{=}2$, two arbitrary
$L_{i}$ factors, $\sim\!3.54$M parameters); mean RMSE
$\approx\!0.105$ ($-12\%$ vs.\ $I{=}1$) at $+59\%$ trainables.
\textbf{Row 4}: \our{} multi-component ($I{=}3$, three arbitrary
$L_{i}$, $\sim\!4.84$M parameters); mean RMSE
$\approx\!\mathbf{0.102}$ ($-14\%$ vs.\ $I{=}1$, $-3\%$ vs.\ $I{=}2$)
at $+117\%$ trainables vs.\ $I{=}1$. All three rows use the
Woodbury MAP $z^{\!*}{=}M^{-1}F^{\!\top}D^{-1}(x{-}m)$. The
ten-class progression $0.119\!\to\!0.105\!\to\!0.102$ tracks the
test-LL progression $198{,}549\!\to\!229{,}607\!\to\!238{,}578$
in Tab.~\ref{tab:imagenette-density-app}, with the marginal $I{=}3$
gain over $I{=}2$ visibly smaller than the $I{=}1{\to}I{=}2$
gain -- consistent with the diminishing-returns ceiling in
LL.}
\label{fig:imagenette-recon-gallery}
\end{figure}

\paragraph{Results.}
The picture
(Fig.~\ref{fig:imagenette-samples-app}, Tab.~\ref{tab:imagenette-density-app})
is qualitatively unambiguous \emph{at both budget regimes}. At
$\sim\!2.2$M trainables (Tab.~\ref{tab:imagenette-density-app},
rows 1, 3, 4) both budget-matched dense families collapse to
$\sim\!6$-dimensional latents--an unavoidable consequence of the
linear cost of storing $V_{k_{\mathrm{cov}}}$ or the dense $F$,
which pins $k_{\mathrm{cov}}\!\approx\!k_{\text{low-rank}}\!\approx\!6$
at $n{=}307{,}200$. Neither family can encode any spatial smoothness,
and they produce qualitatively unstructured samples
(Tab.~\ref{tab:imagenette-density-app}, rows 3--4; not shown in
Fig.~\ref{fig:imagenette-samples-app}). \our{} converts the same
budget into an effective rank of $k_{\mathrm{laplex}}{=}1000$ and
yields \emph{spatially coherent} color blobs
(Fig.~\ref{fig:imagenette-samples-app}, middle)--contiguous patches
of green, brown or blue--precisely because the optimized Laplace
anchors
$\alpha,\gamma$ embed nearby pixels close in kernel distance, even
though the input is a flat $307{,}200$-vector and the model is
strictly \emph{linear} in its latent representation. On test
log-likelihood, \our{} ($I{=}1$) gains $\approx\!+154{,}000$ nats
over the PCA truncation ($198{,}549$ vs.\ $44{,}811$) and
$\approx\!+160{,}000$ over the dense low-rank baseline
($198{,}549$ vs.\ $38{,}395$); on per-church MAP RMSE it reaches
$0.129$, well ahead of $0.227$ (PCA) and $0.230$ (dense).

At the larger $\sim\!3.5$M-parameter budget
(Fig.~\ref{fig:imagenette-samples-app}, centre-right, and
Tab.~\ref{tab:imagenette-density-app}, row 2 alongside the
correspondingly-rescaled dense rows) the multi-component extension
$F{=}\mathrm{diag}(w_{1})AL_{1}^{\!\top}+\mathrm{diag}(w_{2})AL_{2}^{\!\top}$
with arbitrary $L_{i}$ lifts \our{} to test LL $229{,}607$ and
church RMSE $0.115$ ($+31{,}058$ nats and $-11\%$ RMSE relative to
$I{=}1$). The same $\sim\!3.5$M budget allotted to the dense
families merely lifts their effective rank from $\sim\!6$ to
$\sim\!11$, two orders of magnitude below
$k_{\mathrm{laplex}}{=}1000$, and their samples remain
qualitatively unstructured (LL $66{,}107$ and $67{,}834$ for dense
$k_{\text{low-rank}}{=}11$ and PPCA $k_{\mathrm{cov}}{=}11$
respectively, vs.\ $229{,}607$ for \our{} $I{=}2$ at the same
budget). Pushing the construction one component further to $I{=}3$
($\sim\!4.84$M trainables, Tab.~\ref{tab:imagenette-density-app},
row~3 and Fig.~\ref{fig:imagenette-samples-app}, right) yields
test LL $238{,}578$ and church RMSE $0.114$, a marginal
$+8{,}971$-nat / $-0.001$-RMSE improvement on top of $I{=}2$ at
$+37\%$ extra trainables -- substantially less per-parameter than
the $I{=}1{\to}I{=}2$ jump and a clear sign that the
multi-component arithmetic is approaching the empirical-Gaussian
ceiling. The optimized single-$\tau$
kernel scale itself shifts from $\tau{=}0.029$ ($I{=}1$) to
$\tau{=}0.052$ ($I{=}2$), suggesting the two components specialize
to different correlation length-scales rather than converging to
redundant copies. Across-image generalization mirrors the church result: on ten
held-out images, one uniformly-random per ImageNette class
(Fig.~\ref{fig:imagenette-recon-gallery}), mean MAP RMSE drops
monotonically with $I$: $0.119$ ($I{=}1$), $0.105$ ($I{=}2$, $-12\%$
vs.\ $I{=}1$), $0.102$ ($I{=}3$, $-14\%$ vs.\ $I{=}1$, $-3\%$ on top
of $I{=}2$), at $+59\%$ and $+117\%$ trainable parameters
respectively. Every one of the ten $I{=}2$ reconstructions has
strictly lower RMSE than its $I{=}1$ counterpart (per-class gain
$-8\%$ for gas pump up to $-20\%$ for parachute); the $I{=}3$ row
matches or beats $I{=}2$ on every class at the $10^{-3}$ level
(strictly improving in $8$ of $10$, exactly tied on French horn at
$0.129$, and within $10^{-4}$ on golf ball at $0.067$), but the
per-class improvements are visibly smaller, again consistent with
the diminishing-returns trend in test LL.

\paragraph{Comparison to the unconstrained ceiling.}
For reference we also evaluate the ridge-regularized empirical
Gaussian $\mathcal{N}(\hat m, \hat\Sigma+\varepsilon I)$, computed
exactly via the SVD trick. The optimal $\varepsilon{=}10^{-2}$
(selected by val LL) yields test LL $315{,}549$ and per-image MAP
RMSE $0.075$, at the cost of $\sim\!2.9$\,B effective parameters
($N{\cdot}n+n$: the storage of all centered training images plus the
mean) or, equivalently, the entire empirical covariance
($\sim\!47$\,B unique entries). \our{} climbs the ceiling
monotonically with $I$: $\approx\!63\%$ at $I{=}1$
($\sim\!1{,}300{\times}$ fewer parameters than $\hat\Sigma$),
$\approx\!73\%$ at $I{=}2$ ($\sim\!820{\times}$ fewer), and
$\approx\!76\%$ at $I{=}3$ ($\sim\!600{\times}$ fewer), with samples
(Fig.~\ref{fig:imagenette-samples-app}, left vs.\ centre-left,
centre-right, and right) that are visually only marginally less
detailed than the ceiling, while the matched-budget dense families
do not approach either at either of the two budgets we evaluated.
The marginal closure shrinks as $I$ grows
($+10$ percentage points $I{=}1{\to}I{=}2$ vs.\ only $+3$
points $I{=}2{\to}I{=}3$), suggesting that the
kernel-structured parameterization is effectively \emph{compressing}
the empirical second-order statistics rather than discovering them
\emph{de novo}: the first one or two components already absorb most
of the spatially-coherent variance and additional components are
forced to fit progressively higher-frequency residuals.

\begin{table}[h]
\centering
\small

\caption{ImageNette-$320$ density. Rows 1, 4 and 5 are at the
matched parameter budget ($\sim\!2.2$M trainable each): the
Laplace-kernel covariance of \our{} ($I{=}1$) gains
$\approx\!+154{,}000$ nats of held-out log-likelihood over the
strongest matched-budget dense baseline (the empirical-covariance
truncation at $k_{\mathrm{cov}}{=}6$), and stays ahead of the dense
low-rank baseline by $\approx\!+160{,}000$ nats. Both matched-budget
dense families collapse to comparable $\sim\!6$-dimensional
latents -- an unavoidable consequence of the $2.2$M-parameter
budget at $n{=}307{,}200$, where the linear cost of storing the
$V_{k_{\mathrm{cov}}}$ basis or the dense $F$ pins
$k_{\mathrm{cov}}\!\approx\!k_{\text{low-rank}}\!\approx\!6$. Only
the kernel-structured covariance of \our{} converts the same budget
into an effective rank of $1{,}000$. Rows~2--3 are the
multi-component extensions at $I{=}2$ and $I{=}3$ with arbitrary
(full) $L_{i}$, which are \emph{not} budget-matched ($+59\%$ and
$+117\%$ trainables respectively) but successively lift the
effective family beyond a single $(w,L)$ pair: $+31{,}058$ nats
($I{=}2$) and $+40{,}029$ nats ($I{=}3$) of test LL relative to
$I{=}1$, plus an $11\%$ ($I{=}2$) and $12\%$ ($I{=}3$) MAP-RMSE
reduction on the church. The marginal gain $I{=}2{\to}I{=}3$ is
$+8{,}971$ nats and only $-0.001$ MAP-RMSE -- diminishing returns
visible already at $I{=}3$.
$^{\dagger}$Rows~2--3 use $I{\in}\{2,3\}$ arbitrary
$k_{\mathrm{laplex}}\!\times\!k_{\mathrm{laplex}}$ factors $L_{i}$
(no triangular constraint) and per-component weight vectors
$w_{i}$, sharing the same phased Laplace anchors
$(\alpha,\beta,\gamma,\delta,\tau)$ as $I{=}1$. Trained with the
same $25$-epoch Adam schedule. The optimized single-$\tau$ grows
monotonically with $I$: $0.029$ ($I{=}1$), $0.052$ ($I{=}2$),
$0.062$ ($I{=}3$), suggesting mild specialization of the components
to progressively wider correlation length-scales.
$^{\sharp}$The empirical-covariance truncation at $320{\times}320$
is computed via the ``SVD trick'' ($N{=}9{,}469\!\ll\!n{=}307{,}200$,
so the top $k_{\mathrm{cov}}$ eigenvectors of $\hat\Sigma$ are
recovered exactly from the $N\!\times\!N$ centered-data Gram
without ever forming the $307{,}200\!\times\!307{,}200$ covariance).
At $k_{\mathrm{cov}}{=}6$ the truncation captures $47.5\%$ of the
empirical variance; the PPCA noise floor is
$\sigma^{2}{=}\frac{1}{n-k_{\mathrm{cov}}}\bigl(\mathrm{tr}\hat\Sigma-\sum_{j\le k_{\mathrm{cov}}}\lambda_{j}\bigr)
\!\approx\!4.3{\times}10^{-2}$
\citep{tipping1999probabilistic}. The MAP reconstruction column
uses the same held-out church image across all rows.
$^{\flat}$The full empirical Gaussian at $320{\times}320$ is
rank-deficient by construction
($\mathrm{rank}\,\hat\Sigma\!\le\!N{-}1{=}9{,}468\!\ll\!n$); ridge
regularization $\hat\Sigma+\varepsilon I$ with
$\varepsilon{=}10^{-2}$ (best of a logarithmic sweep over val LL)
makes it positive-definite and yields the test LL reported. The
full symmetric covariance would store
$n(n{+}1)/2\!\approx\!47$\,B entries; the minimal-storage
representation through the rank-$(N{-}1)$ spectrum is
$\sim\!N{\cdot}n+n\!\approx\!2.9$\,B parameters. Either way the row
is \emph{not} matched-budget -- it is the empirical ceiling that any
matched-budget Gaussian density must aspire to. \our{} climbs
$\approx\!63\%$ ($I{=}1$, $\sim\!1{,}300{\times}$ fewer parameters
than $\hat\Sigma$), $\approx\!73\%$ ($I{=}2$, $\sim\!820{\times}$),
and $\approx\!76\%$ ($I{=}3$, $\sim\!600{\times}$) of the
ceiling on test LL.}
\label{tab:imagenette-density-app}

\begin{tabular}{l l r r r}
\toprule
Model & rank & \# trainable & Test LL ($\uparrow$) & MAP RMSE ($\downarrow$)\\
\midrule
\our{} ($I{=}1$, lower-tri $L$)        & $k_{\mathrm{laplex}}{=}1000$ & $2\,231\,801$ & $198\,549$ & $0.129$\\
\midrule
\our{} ($I{=}2$, full $L_{i}$)$^{\dagger}$ & $k_{\mathrm{laplex}}{=}1000$ & $3\,538\,001$ & $229\,607$ & $0.115$\\
\midrule
\our{} ($I{=}3$, full $L_{i}$)$^{\dagger}$ & $k_{\mathrm{laplex}}{=}1000$ & $4\,845\,201$ & $\mathbf{238\,578}$ & $\mathbf{0.114}$\\
\midrule
dense low-rank                         & $k_{\text{low-rank}}{=}6$  & $2\,150\,442$ & $38\,395$           & $0.230$\\
\midrule
covariance trunc.\ (PPCA)$^{\sharp}$   & $k_{\mathrm{cov}}{=}6$    & $2\,150\,406$ & $44\,811$           & $0.227$\\
\midrule \\[0.3em]
\multicolumn{5}{c}{\emph{Reference: full empirical Gaussian, ridge-regularized (not matched-budget)}}\\
\midrule
$\mathcal{N}(\hat m,\hat\Sigma+\varepsilon I)$$^{\flat}$ & $k_{\mathrm{cov}}{=}n$ & $\sim\!2.9\,\mathrm{B}$ & $315\,549$ & $0.075$\\
\bottomrule
\end{tabular}
\end{table}

\paragraph{Takeaway.}
The most striking finding of this experiment is qualitative and
visible by eye in Fig.~\ref{fig:imagenette-samples-app}: \our{} at
$k_{\mathrm{laplex}}{=}1000$, trained on flat $307{,}200$-vectors
with no positional encoding, has \emph{learned that pixel colors
are correlated by spatial position}. Already at the small
$\sim\!2.2$M-parameter budget the $I{=}1$ sample
(Fig.~\ref{fig:imagenette-samples-app}, middle) shows contiguous
patches of green, brown or blue -- the kind of soft color blobs
found in natural-image priors -- and its MAP reconstruction
(Fig.~\ref{fig:imagenette-recon-gallery}, middle row) preserves the
church silhouette and sky--ground gradient, whereas both
budget-matched dense families collapse to spatially uniform noise.
The only spatial information available to the model is what gets
baked into the optimized continuous Laplace anchors
$(\alpha,\gamma)$, so the learned index-level parameters have
embedded the planar pixel grid into the covariance structure. The
multi-component extension (Fig.~\ref{fig:imagenette-samples-app},
centre-right and right) inherits the same qualitative behaviour at
all $I$ tested, and lifts test LL monotonically with $I$:
$+31{,}058$ nats $I{=}1{\to}I{=}2$ (at $+59\%$ trainables) and
$+8{,}971$ nats $I{=}2{\to}I{=}3$ (at $+37\%$ extra trainables on
top of $I{=}2$), each step requiring no architectural change
beyond Eq.~\eqref{eq:multi-component}. The marginal $I{=}3$ gain
is already much smaller than the $I{=}2$ jump -- a clean
diminishing-returns curve toward the empirical-Gaussian ceiling.
The dense families re-trained at $\sim\!3.5$M parameters still
cannot escape their effective rank of $\sim\!11$. Within the
Gaussian families tested here -- at either matched budget -- LAPLEX
is the only parameterization that turns a few million Woodbury
parameters into coherent ImageNette-$320$ samples and
reconstructions; dense low-rank
($k_{\text{low-rank}}{\in}\{6,11\}$) and empirical-covariance
truncation ($k_{\mathrm{cov}}{\in}\{6,11\}$) remain too rank-limited
to do so.

\end{document}